\documentclass{article}

\usepackage[preprint]{neurips_2026}
\usepackage[utf8]{inputenc} % allow utf-8 input
\usepackage[T1]{fontenc}    % use 8-bit T1 fonts
\usepackage{hyperref}       % hyperlinks
\usepackage{url}            % simple URL typesetting
\usepackage{booktabs}       % professional-quality tables
\usepackage{amsfonts}       % blackboard math symbols
\usepackage{nicefrac}       % compact symbols for 1/2, etc.
\usepackage{microtype}      % microtypography
\usepackage{xcolor}         % colors
\usepackage{mathtools}       % itemize options

\definecolor{NeuripsDarkBlue}{RGB}{0, 51, 153}
\hypersetup{
  colorlinks=true,
  citecolor=NeuripsDarkBlue,
  linkcolor=NeuripsDarkBlue,
  urlcolor=NeuripsDarkBlue,
  hypertexnames=false
}
\setcitestyle{round}

\usepackage{algorithm}

\usepackage{algorithmic}
\usepackage{amsthm}
\usepackage{thmtools, thm-restate}

\usepackage{amsmath}
\DeclareMathOperator*{\argmax}{arg\,max}
\DeclareMathOperator*{\argmin}{arg\,min}

\usepackage{appendix}
\usepackage{titletoc}
\titlecontents{section}
  [0em]
  {\small\bfseries}
  {\contentslabel{2.3em}}
  {}
  {\hfill\contentspage}
\titlecontents{subsection}
  [2.3em]
  {\small}
  {\contentslabel{2.8em}}
  {}
  {\hfill\contentspage}
\makeatletter
\newcommand\DoToC{%
  \begingroup
    \@ifundefined{nolinenumbers}{}{\nolinenumbers}%
    \vskip6pt
    \noindent\textbf{\large Contents}\par
    \vskip2pt\hrule height 0.4pt\vskip4pt
    \startcontents
    \printcontents{}{1}{}%
    \vskip3pt\hrule height 0.4pt\vskip6pt
  \endgroup
  \@ifundefined{linenumbers}{}{\linenumbers}%
}
\makeatother
\usepackage{amssymb}
\usepackage{tikz}
\usetikzlibrary{arrows.meta,positioning,calc,fit,shapes.misc,shapes.geometric,backgrounds,calc,bayesnet}
\usepackage[capitalize,noabbrev]{cleveref}
\usepackage{enumitem}
\makeatletter
\renewcommand{\paragraph}{%
  \@startsection{paragraph}{4}{\z@}%
                {0.0ex \@plus 0.2ex \@minus 0.2ex}%
                {-0.5em}%
                {\normalsize\bf}%
}
\renewcommand{\section}{%
  \@startsection{section}{1}{\z@}%
                {-1.0ex \@plus -0.2ex \@minus -0.1ex}%
                {0.35ex \@plus 0.1ex}%
                {\large\bf}%
}
\renewcommand{\subsection}{%
  \@startsection{subsection}{2}{\z@}%
                {-0.8ex \@plus -0.2ex \@minus -0.1ex}%
                {0.25ex \@plus 0.1ex}%
                {\normalsize\bf}%
}
\renewcommand{\subsubsection}{%
  \@startsection{subsubsection}{3}{\z@}%
                {-0.6ex \@plus -0.15ex \@minus -0.1ex}%
                {0.2ex \@plus 0.1ex}%
                {\normalsize\bf}%
}
\makeatother
\newcommand{\BlackBox}{\rule{1.5ex}{1.5ex}}  % end of proof
\ifdefined\proof
    \renewenvironment{proof}{\par\noindent{\bf Proof\ }}{\hfill\BlackBox\par\medskip}
\else
    \newenvironment{proof}{\par\noindent{\bf Proof\ }}{\hfill\BlackBox\par\medskip}
\fi
\newtheorem{example}{Example}
\newtheorem{theorem}{Theorem}
\newtheorem{lemma}[theorem]{Lemma}
\newtheorem{proposition}[theorem]{Proposition}
\newtheorem{remark}[theorem]{Remark}
\newtheorem{corollary}[theorem]{Corollary}
\newtheorem{definition}[theorem]{Definition}

\definecolor{lightgray}{RGB}{240, 240, 240}
\definecolor{bordergray}{RGB}{180, 180, 180}

\definecolor{routerblue}{RGB}{218, 232, 252}
\definecolor{routerborder}{RGB}{108, 142, 191}

\definecolor{scorergreen}{RGB}{213, 232, 212}
\definecolor{scorerborder}{RGB}{130, 179, 102}

\definecolor{expertorange}{RGB}{255, 235, 204}
\definecolor{expertborder}{RGB}{215, 155, 0}

\definecolor{advicepurple}{RGB}{225, 213, 231}
\definecolor{adviceborder}{RGB}{150, 115, 166}

\definecolor{lossred}{RGB}{248, 206, 204}
\definecolor{lossborder}{RGB}{184, 84, 80}
\title{Consistent Learning-to-Defer \\ with
Expert-Conditional Advice}

\author{
  Yannis Montreuil \\
  School of Computing\\
  National University of Singapore\\
  Singapore, 117418 \\
  \texttt{yannis.montreuil@u.nus.edu} \\
   \And
   Leïna Montreuil \\
   Département de Mathématiques \\
   Sorbonne University \\
   Paris, 75005, France \\
   \AND
    Axel Carlier \\
    Fédération ENAC ISAE-SUPAERO ONERA \\
    Université de Toulouse\\
    Toulouse, 31555, France \\
   \And
    Lai Xing Ng \\
    Agency for Science, Technology and Research \\
    Institute for Infocomm Research \\
    Singapore, 138634 \\
    \And
    Wei Tsang Ooi \\
    School of Computing\\
    National University of Singapore\\
    Singapore, 117418 \\
}

\begin{document}

\maketitle

%  1. Formalize the problem. Define the advice space, the joint policy (router + query), the true protocol loss, the population risk, and the Bayes-optimal joint policy.
%  2. Show the problem is strictly richer than L2D. Best-case: advice can only help (risk ≤ L2D risk). Worst-case: ignoring advice is always an option, so you never do worse. This justifies that the extension
%   is non-trivial and well-posed.
%  3. Surrogate design is necessary. The true loss has indicators → non-differentiable → need a surrogate. This is inherited from L2D but now the action space is larger.
%  4. The natural separated approach fails. Independent routing and query heads seem natural (mirrors the sequential protocol), but Bayes inconsistency kills it — even with 2 experts and binary advice.
%  5. The augmented formulation works. Treat each (expert, advice) pair as a single atomic action → reduces to cost-sensitive classification → comp-sum surrogate is $\mathcal{H}$-consistent.
%
%  This way the narrative has a clean three-act structure:
%  - Act I (steps 1–2): The problem — what it is, why it matters
%  - Act II (step 3–4): The obstacle — why the obvious approach fails
%  - Act III (step 5): The resolution — the augmented surrogate and its guarantees

\begin{abstract}
Learning-to-Defer routes each input to the expert that minimizes expected cost,
but it assumes the information available to every expert is fixed at decision
time. Modern systems violate this assumption: after selecting an expert, one
may also choose what information that expert receives---retrieved documents,
tool outputs, or escalation context---and the value of such advice is itself
expert-dependent. We introduce \emph{Learning-to-Defer with advice}, a
formulation in which the system jointly selects an expert and an
expert-conditional advice action. The natural
sequential surrogate---one router score and one query head per
expert---turns out to be inconsistent: we prove that a broad family of such
separated router/query surrogates fails to recover the Bayes rule even in the
smallest non-trivial case.
We resolve this by an augmented surrogate that operates on the composite
expert--advice action space, for which we establish an $\mathcal{H}$-consistency
bound. Across
language, tabular, multi-modal, and synthetic benchmarks, the resulting method
improves over Learning-to-Defer while adapting its advice-acquisition behavior
to the cost regime, and a theorem-aligned synthetic benchmark instantiates the
predicted failure mode of separated surrogates.
\end{abstract}

\section{Introduction}\label{sec:intro}

Modern decision systems face two intertwined choices: \emph{who} should handle
a given instance, and \emph{what information} that expert should receive.
A retrieval-augmented model may answer only after receiving retrieved
documents \citep{guu2020retrieval, lewis2020retrieval}. A tool-augmented agent
may be granted access to a calculator, a search engine, or a code interpreter
\citep{schick2023toolformer}. These examples concern the second choice;
standard Learning-to-Defer \citep{madras2018predict, mozannar2020consistent, mao2023twostage, montreuil2026why}, by contrast, addresses the first. Our setting combines both: the learner must commit to an executed
expert--advice pair, where the value of advice depends on the expert who
will consume it and the best expert in turn depends on the advice received.

Learning-to-Defer (L2D) treats the information available to every expert as fixed at
decision time. The entire question of whether and what to acquire is outside
the model. This misses an important class of systems in which the value of
extra information depends on the expert who will consume it, and can even
change which expert is optimal. Figure~\ref{fig:standard_l2d_protocol}
summarizes this standard L2D pipeline, while Figure~\ref{fig:protocol} shows
the deferral-advice protocol considered here. While recent work studies a
related binary ask/not-ask problem with a single enriched predictor
\citep{pugnana2025ask}, our focus is a routed expert--advice decision problem in
which advice is expert-conditioned and may come from multiple sources. We call
this setting \emph{Learning-to-Defer with advice}:
the system jointly selects an expert and decides what information that expert
should receive. The central statistical question is then Bayes consistency \citep{Statistical, Steinwart2007HowTC, bartlett2006convexity}:
can one learn a policy that recovers the Bayes-optimal expert-advice
decision, that is, the expert whose best advice yields the smallest
conditional expected cost?

Because routing and advice are coupled --- the best advice depends on the
expert, and the best expert depends on the advice --- one might keep the
sequential structure and learn them with separate heads. We show that this
natural decomposition is, in general, inconsistent: even with two experts and
binary advice, the separated surrogate fails to recover the Bayes rule.
The key insight is that the protocol evaluates expert--advice \emph{pairs},
so the surrogate must operate on the same composite action space.
Building on this observation, we introduce an augmented surrogate over
the joint expert--advice space and prove an $\mathcal{H}$-consistency
guarantee: its minimization is provably aligned with the Bayes-optimal
expert--advice decision.
\paragraph{Contributions.}
\begin{itemize}[itemsep=0.1pt,topsep=2pt,leftmargin=*]
    \item[--]  We introduce \emph{Learning-to-Defer with advice}, a formulation in which
the system jointly selects an expert and an expert-conditional advice action,
and derive the Bayes-optimal policy, which compares experts at their
best-advised cost (Section~\ref{sec:optimal_policy},
Lemma~\ref{lem:bayes}).
    \item[--]  We prove that a broad family of natural separated router-query surrogates
is not Bayes consistent, even in the simplest
binary-advice setting (Section~\ref{sec:separated},
Theorem~\ref{prop:separated_bayes_inconsistency}).
    \item[--]  We introduce an augmented surrogate and prove an
	$\mathcal{H}$-consistency guarantee together with an excess-risk transfer
	bound, establishing that surrogate minimization recovers the Bayes-optimal
	deferral-advice risk in the limit
	(Section~\ref{sec:augmented}, Theorem~\ref{thm:augmented},
	Corollary~\ref{cor:augmented_statistical}).
    \item[--]  We validate the framework on tabular, language, multi-modal, and synthetic
experiments, where it improves over standard Learning-to-Defer, adapts its
advice-acquisition rate to the cost regime, and empirically exhibits the
failure mode predicted for separated surrogates
(Section~\ref{sec:experiments}, Appendix~\ref{app:synthetic_exp}).
\end{itemize}

\section{Related Work}\label{sec:related}

L2D extends selective prediction and learning with abstention
\citep{Chow_1970, Bartlett_Wegkamp_2008, cortes, Geifman_El-Yaniv_2017, cao2022generalizing, Ghosh, Mao_Mohri_Zhong_2023, theoretically, cortes2024cardinalityaware} by allowing a learner to
defer uncertain inputs to external experts
\citep{madras2018predict, mozannar2020consistent, Verma2022LearningTD, Keswani, Kerrigan, Hemmer, Benz, joshi2021learning, raman2021improvinglearningtodeferalgorithmsfinetuning}. A
substantial line of work develops surrogate losses and statistical guarantees
for this problem
\citep{mozannar2020consistent, Verma2022LearningTD, Cao_Mozannar_Feng_Wei_An_2023, Mozannar2023WhoSP, liu2024mitigating, mao2024realizablehconsistentbayesconsistentloss, mao2025mastering, charusaie2022sample, mao2024principledapproacheslearningdefer, wei2024exploiting, montreuil2026why, mao2025thesis, montreuil2026augmentedactionsurrogatesmultiexpertlearningtodefer}. The framework
has also been extended to regression and multi-task settings and deployed in
real systems
\citep{mao2024regressionmultiexpertdeferral, strong2024towards, palomba2025a, montreuil2024twostagelearningtodefermultitasklearning, strong2025trustworthy, montreuil2026optimal},
as well as to robust
\citep{montreuil2025adversarialrobustnesstwostagelearningtodefer, montreuil2026adversarial}, online
\citep{montreuil2026learning, montreuil2026online}, budgeted or missing experts
\citep{desalvo2025budgeted, nguyen2025probabilistic},
and populations of experts unseen at training time
\citep{Tailor, strong2026identityfree} settings, as well as imbalanced-cost
regimes that motivate routing toward specialized experts
\citep{cortes2026optimized}.
Our consistency analysis builds on a broader line of work on
hypothesis-set-specific consistency, initiated by
\citet{pmlr-v28-long13, Zhang} and later made quantitative through the
$\mathcal{H}$-consistency bounds of \citet{Awasthi_Mao_Mohri_Zhong_2022_multi},
with subsequent refinements and extensions to a wide range of losses
\citep{mao2024h, mao2023crossentropylossfunctionstheoretical, mao2024universalgrowth, mao2025enhanced, mao2024hconsistencyregression, mao2024multilabel, cortes2025balancingscales, cortes2025improvedbalanced, mao2025principledbinary, mohri2026beyond, zhong2025thesis, mohri2026linear, mohri2026mind}, spanning adversarial robustness \citep{awasthi2021calibrationconsistencyadversarialsurrogate, Grounded}, ranking \citep{mao2023pairwisemisranking, mao2023rankingabstention}, structured prediction \citep{mao2023structuredprediction}, learning to reject with a fixed predictor \citep{mohri2024learningreject}, and generalized-metric optimization and robust generative modeling \citep{mohri2026generalized, mohri2026principled, cortes2026theoretical}.

Recent work has also considered querying human feedback and feeding it to an
enriched predictor rather than treating the human only as a deferred decision
maker. In the binary ask/not-ask formulation of \citet{pugnana2025ask}, the
learner chooses between one standard predictor and one enriched predictor; there
is no routing over multiple experts and no expert-conditional advice policy.
They prove realizable consistency for a joint LtA surrogate, while Bayes
consistency is left open. We move from this binary ask/not-ask decision to a
richer executed-pair decision space, where the learner selects both an expert and
an expert-conditional advice action from multiple experts and advice sources. In
this setting, we show that natural separated router-query surrogates can fail,
and we establish both $\mathcal{H}$-consistency and Bayes consistency for an
augmented surrogate over the executed expert--advice pairs.

\section{Preliminaries}\label{sec:preliminaries}

Let $(\Omega, \mathcal{F}, \mathbb{P})$ be a probability space supporting a
pair of random variables $(X, Y)$ with joint distribution
$\mathcal{D}$ on $\mathcal{X} \times \mathcal{Y}$, where
$\mathcal{X} \subseteq \mathbb{R}^d$ is the input space and $\mathcal{Y}$ is
an arbitrary measurable output space. We denote by $(x, y)$ a
realization of $(X, Y)$. Throughout, conditional expectations given
$X=x$ denote fixed measurable versions, and all finite argmax and argmin
operations use smallest-index tie-breaking.

\paragraph{Learning-to-Defer.} We consider a fixed collection of $J$ experts,
indexed by $[J] \coloneqq \{1,\dots,J\}$. Each expert $j \in [J]$ is
associated with a measurable cost function
$c_j : \mathcal{X} \times \mathcal{Y} \to [0, C]$, that quantifies the expense of routing $(x, y)$ to expert $j$.
This formulation is agnostic to the nature of the expert: an expert may be a
predictive model, a language model system, a human annotator, or any other
decision mechanism that induces measurable costs. We model expert prediction as a
measurable map $e_j : \mathcal{X} \to \mathcal{Y}$
and decompose the cost as $c_j(x, y)
    \;\coloneqq\;
    \psi\bigl(e_j(x),\, y\bigr) + \beta_j$,
where $\psi : \mathcal{Y} \times \mathcal{Y} \to \mathbb{R}_{\geq 0}$ is a
measurable task-specific loss (e.g., the $0$--$1$ loss for classification) and
$\beta_j \geq 0$ is an expert consultation cost
\citep{madras2018predict, mozannar2020consistent, Verma2022LearningTD}. We
assume the decomposed costs are uniformly bounded by the same finite constant
$C$ above. We
denote the tuple of expert predictors by $\mathbf{e} \coloneqq (e_1, \dots, e_J)$.

Given a collection of fixed, pre-trained experts, L2D seeks a
routing policy that assigns each instance to the most cost-effective expert
\citep{madras2018predict, mozannar2020consistent, Narasimhan, mao2023twostage, mao2024regressionmultiexpertdeferral, montreuil2024twostagelearningtodefermultitasklearning, montreuil2026why}. Let
$\mathcal{H}_r \subseteq \{s_r : \mathcal{X} \times [J] \to \mathbb{R}\}$
denote a hypothesis class of measurable scoring functions for routing. Each
$s_r \in \mathcal{H}_r$ induces a measurable router
$r : \mathcal{X} \to [J]$ via
$r(x) = \argmax_{j \in [J]} s_r(x, j)$; we write
$\mathbf{s}_r(x) = (s_r(x, j))_{j \in [J]}$ for the full score vector. The
deferral loss incurred by routing a realization $(x, y)$ through $r$ is
\begin{equation}\label{eq:l2d}
    \ell_{\mathrm{def}}(r;\, x, y, \mathbf{e})
    \;\coloneqq\;
    \sum_{j\in[J]} c_j(x,y)\,\mathbf{1}\{r(x)=j\},
\end{equation}
and the population risk is its expectation $\mathcal{E}_{\ell_{\mathrm{def}}}(r)
    \;\coloneqq\;
    \mathbb{E}_{(X,Y) \sim \mathcal{D}}
    \bigl[\ell_{\mathrm{def}}(r;\, X, Y, \mathbf{e})\bigr].$

\paragraph{$\mathcal{H}_r$-consistency.}
The indicator functions in $\ell_{\mathrm{def}}$ make it discontinuous and
non-differentiable, precluding direct gradient-based optimization
\citep{Statistical, Steinwart2007HowTC, mozannar2020consistent}.
Standard practice replaces it with a convex surrogate
$\Phi_{\mathrm{def}}$ that operates on the scoring function $s_r$ and is
amenable to gradient-based methods. The surrogate deferral risk is
$\mathcal{E}_{\Phi_{\mathrm{def}}}(s_r) =
\mathbb{E}_{(X,Y) \sim \mathcal{D}}
[\Phi_{\mathrm{def}}(s_r;\, X, Y, \mathbf{e})]$, with best-in-class value
$\mathcal{E}_{\Phi_{\mathrm{def}}}^\ast(\mathcal{H}_r)
\coloneqq \inf_{s_r \in \mathcal{H}_r}
\mathcal{E}_{\Phi_{\mathrm{def}}}(s_r)$, and let
$\mathcal{E}_{\ell_{\mathrm{def}}}^B(\mathcal{H}_r)
\coloneqq \inf_{s_r \in \mathcal{H}_r}
\mathcal{E}_{\ell_{\mathrm{def}}}(r_{s_r})$ denote the best achievable true risk
within $\mathcal{H}_r$, where $r_{s_r}$ is the router induced by $s_r$. To obtain population-level excess-risk control,
\citet{Awasthi_Mao_Mohri_Zhong_2022_multi} introduced
$\mathcal{H}$-consistency bounds, subsequently refined and extended in
\citep{mao2024h, mao2024universalgrowth, mao2025enhanced, mohri2026beyond}.
\begin{definition}[$\mathcal{H}_r$-consistency bound]\label{def:hr_consistency}
The surrogate $\Phi_{\mathrm{def}}$ satisfies an $\mathcal{H}_r$-consistency
bound with respect to $\ell_{\mathrm{def}}$ if there exists a non-decreasing
function $\Gamma : \mathbb{R}_+ \to \mathbb{R}_+$ with $\Gamma(0) = 0$ such
that for all $s_r \in \mathcal{H}_r$,
\begin{equation}\label{eq:hr_consistency}
    \mathcal{E}_{\ell_{\mathrm{def}}}(r)
        - \mathcal{E}_{\ell_{\mathrm{def}}}^B(\mathcal{H}_r)
        + \mathcal{U}_{\ell_{\mathrm{def}}}(\mathcal{H}_r)
    \;\leq\;
    \Gamma\!\Bigl(
        \mathcal{E}_{\Phi_{\mathrm{def}}}(s_r)
            - \mathcal{E}_{\Phi_{\mathrm{def}}}^\ast(\mathcal{H}_r)
            + \mathcal{U}_{\Phi_{\mathrm{def}}}(\mathcal{H}_r)
    \Bigr),
\end{equation}
where $r$ is the router induced by $s_r$. For a surrogate loss $L$ depending on
$s_r$, or a true loss evaluated at the router induced by $s_r$, the associated
minimizability gap is
$\mathcal{U}_{L}(\mathcal{H}_r) \coloneqq
\inf_{s_r \in \mathcal{H}_r}\mathcal{E}_{L}(s_r)
- \mathbb{E}_{X}\bigl[\inf_{s_r \in \mathcal{H}_r}
\mathbb{E}_{Y \mid X}[L(s_r;\, X, Y, \mathbf{e})]\bigr]$,
applied here to $L\in\{\ell_{\mathrm{def}},\Phi_{\mathrm{def}}\}$.
\end{definition}
When $\mathcal{H}_r$ is rich enough for these
gaps to vanish, the bound recovers Bayes consistency
\citep{Steinwart2007HowTC, Awasthi_Mao_Mohri_Zhong_2022_multi}. In the
standard L2D setting, such bounds have been established for the comp-sum family
of cross-entropy surrogates
\citep{mao2024principledapproacheslearningdefer, mao2024regressionmultiexpertdeferral, mao2023twostage, montreuil2024twostagelearningtodefermultitasklearning, montreuil2026why}, guaranteeing that surrogate
minimization over a rich enough class recovers the Bayes-optimal router.

\section{Problem Formulation and Surrogate Design}\label{sec:approach}

The deferral framework of Section~\ref{sec:preliminaries} assumes that every
expert acts on the input $x$ alone. We now extend this setting by allowing the
system to acquire additional information---advice---and provide it to the
chosen expert before that expert acts. This introduces a genuine statistical
complication: the policy must choose both the expert and the advice source from
$x$ alone, while the advice is revealed only after that commitment. We
therefore first formalize the protocol and its Bayes rule, and then study which
surrogate constructions remain valid in this extended setting.

\subsection{Advice Sources}\label{sec:advice}

Advice acquisition introduces two coupled difficulties. First, the value of a
source is expert-dependent: the same retrieved passage or tool call may help
one expert and be irrelevant, or even harmful, for another
\citep{guu2020retrieval, lewis2020retrieval, schick2023toolformer}. The choice
of \emph{what} to acquire therefore cannot be assessed independently of the
choice of \emph{who} receives it. Second, both decisions must be made from $x$
alone. The realized advice value is revealed only after the expert and advice
source have already been selected. The difficulty is therefore not merely that
the protocol is sequential, but that learning must target a hidden executed
expert-advice pair rather than two independently observable decisions.

\paragraph{Formal setup.}
We formalize this uncertainty by augmenting the data-generating process with
$K$ advice variables. The triple $(X, A, Y)$ is jointly distributed, where
$A = (A^1, \dots, A^K)$ and each $A^k$ takes values in a measurable space
$\mathcal{A}_k$. Let
$\mathcal{A} \coloneqq \mathcal{A}_1 \times \cdots \times \mathcal{A}_K$, and
write $(x, a, y)$ for a realization. This model accommodates both
deterministic advice, such as a database lookup, and stochastic advice, such
as a language model output
\citep{touvron2023llamaopenefficientfoundation, openai2024gpt4technicalreport}.
The action $k=0$ denotes the decision to acquire no advice, and
$[K]_0 \coloneqq \{0,1,\dots,K\}$ is the resulting advice-action set.

\paragraph{Masking.}
The protocol enforces that at most one source is revealed per instance. Fix a
missing-advice symbol $\bot \notin \bigcup_{k=1}^K \mathcal{A}_k$ and define
$\widetilde{\mathcal{A}} \coloneqq (\mathcal{A}_1 \cup \{\bot\}) \times \cdots
\times (\mathcal{A}_K \cup \{\bot\})$. For each $k \in [K]_0$, the masking
operator $m_k : \mathcal{A} \to \widetilde{\mathcal{A}}$ retains the $k$-th
coordinate and replaces all others with $\bot$; when $k=0$ every coordinate is
masked. We write
$\widetilde{a}^{(k)} \coloneqq m_k(a)$ and
$\widetilde{A}^{(k)} \coloneqq m_k(A)$ for the masked advice.

\subsection{The Deferral-Advice Loss}\label{sec:true_loss}

With advice sources and masking in place, we now define how the system acts and
what it pays. The goal is to formalize the true loss of the protocol and
identify the Bayes-optimal policy.

\paragraph{Costs.}
In the standard setting, routing to expert $j$ incurs a cost $c_j(x, y)$ that
captures both prediction error and consultation fee. With advice, the cost
acquires a new dimension: the system may spend an acquisition fee $\gamma_k$ to
provide advice to the expert, hoping that the additional information reduces the
expert's prediction error by more than it costs to obtain. For each pair
$(j, k) \in [J] \times [K]_0$, let
$c_{j,k} : \mathcal{X} \times \mathcal{A} \times \mathcal{Y} \to [0, C]$ be a
measurable cost function, uniformly bounded by $C < \infty$.
\begin{equation}\label{eq:advice_cost}
    c_{j,k}(x, a, y)
    \;\coloneqq\;
    \underbrace{\psi\bigl(e_j(x, \widetilde{a}^{(k)}),\, y\bigr)}_{\text{task loss}}
    + \underbrace{\beta_j}_{\text{expert fee}}
    + \underbrace{\gamma_k}_{\text{advice fee}},
\end{equation}
where $e_j : \mathcal{X} \times \widetilde{\mathcal{A}} \to \mathcal{Y}$ is
measurable and extends
the expert prediction of Section~\ref{sec:preliminaries} to accept masked advice
$\widetilde{a}^{(k)}$---reducing to $e_j(x)$ when $k = 0$---and
$\beta_j \geq 0$ is the expert consultation cost, while
$\gamma_k \geq 0$ is the advice acquisition cost with $\gamma_0 = 0$. The
no-advice compatibility condition $c_{j,0}(x,a,y)=c_j(x,y)$ holds by
construction.
The
trade-off is explicit: acquiring advice $k$ incurs the fee $\gamma_k$ but may
reduce the task loss $\psi$ if the advice is informative for expert $j$. Whether
the trade-off is favorable depends jointly on the expert, the advice source,
and the input.

\paragraph{Policy and true loss.}
A deferral-advice policy is a pair of measurable maps
$r : \mathcal{X} \to [J]$ and $q : \mathcal{X} \times [J] \to [K]_0$.
Let $\mathcal{H}_q \subseteq \{s_q : \mathcal{X} \times [J] \times [K]_0
\to \mathbb{R}\}$ be a hypothesis class of measurable query scoring
functions; each $s_q \in \mathcal{H}_q$ induces a query function via
$q(x, j) = \argmax_{k \in [K]_0} s_q(x, j, k)$.
We write $\mathbf{S}_q(x) \in \mathbb{R}^{J\times (K+1)}$ for the matrix
with entries $s_q(x,j,k)$, so that each row corresponds to one expert and
the row-wise argmax gives the advice index for that expert.
Upon observing $x$, the system routes to
expert $j = r(x)$ and selects advice index $k = q(x, j)$; the masked advice
$\widetilde{A}^{(k)}$ is then revealed to expert $j$. Although $q(x, \cdot)$
is defined for every expert, only the executed value $q(x, r(x))$ enters the
realized loss (see Algorithm~\ref{alg:separated_inference} and
Figure~\ref{fig:protocol}; Figure~\ref{fig:standard_l2d_protocol} contrasts this
with the usual L2D pipeline).
\begin{definition}[True deferral-advice loss]\label{def:true_loss}
Given a policy $(r, q)$, the true deferral-advice loss on a realization
$(x, a, y)$ is
\begin{equation*}
    \ell_{\mathrm{def\text{-}adv}}(r, q;\, x, a, y, \mathbf{e})
    \;\coloneqq\;
    \sum_{j \in [J]} \sum_{k \in [K]_0}
    c_{j,k}(x, a, y)\,
    \mathbf{1}\{r(x) = j\}\,
    \mathbf{1}\{q(x, j) = k\}.
\end{equation*}
\end{definition}
\noindent The double indicator selects exactly one expert-advice pair per
instance. The loss generalizes~\eqref{eq:l2d} from $J$ to $J(K+1)$ actions
and depends on the policy only through the executed pair
$(r(x),\, q(x, r(x)))$. Since $c_{j,0}(x, a, y) = c_j(x, y)$ by
construction, setting $K = 0$ recovers the standard deferral loss
$\ell_{\mathrm{def}}$
\citep{mozannar2020consistent, mao2023twostage, montreuil2026why}.
A concrete numerical example illustrating the loss is given in Example~\ref{ex:true_loss_worked}.

\subsection{Deferral with Advice Dominates Standard Deferral}%
\label{sec:optimal_policy}

We now derive the Bayes-optimal policy. Its structure reveals two properties: the
choice of advice is inherently coupled to the choice of expert, and deferral
with advice is more powerful than standard Learning-to-Defer.

\paragraph{The Bayes-optimal policy.}
In Section~\ref{sec:advice} we argued that the value of advice is
expert-dependent, and the policy of Definition~\ref{def:true_loss} reflects this
by conditioning the query $q(x, j)$ on the expert. We now show that minimizing the expected loss of
Definition~\ref{def:true_loss} recovers the desired policy. To determine which expert is best, the system must first evaluate
each expert at its best-advised cost---this requires solving the advice
selection problem for every expert independently. The optimal query
\emph{necessarily} depends on the expert, since the best advice for expert $j$
is generally different from the best advice for expert~$j'$. The router then
compares experts on their optimally-advised costs rather than their unaided
ones.
\begin{restatable}[Bayes-optimal deferral-advice policy]{lemma}{bayeslemma}%
\label{lem:bayes}
There exists a measurable Bayes-optimal policy $(r^\star, q^\star)$ that
minimizes $\mathbb{E}[\ell_{\mathrm{def\text{-}adv}}(r, q;\, X, A, Y,
\mathbf{e})]$ over all measurable policies and, for $\mathbb{P}_X$-a.e.\ $x$,
can be chosen by smallest-index tie-breaking as follows:
\begin{enumerate}[leftmargin=2em, topsep=4pt, itemsep=3pt]
    \item \emph{The Bayes query selects the best advice for each expert:}
    \begin{equation}\label{eq:bayes_query}
        q^\star(x, j)
        \;=\;
        \min\!\argmin_{k \in [K]_0}\;
        \mathbb{E}\bigl[c_{j,k}(X, A, Y) \mid X = x\bigr].
    \end{equation}
    \item \emph{The Bayes router selects the best optimally-advised expert:}
    \begin{equation}\label{eq:bayes_router}
        r^\star(x)
        \;=\;
        \min\!\argmin_{j \in [J]}\;
        \mathbb{E}\bigl[
            c_{j,q^\star(x,j)}(X, A, Y) \mid X = x
        \bigr].
    \end{equation}
\end{enumerate}
\end{restatable}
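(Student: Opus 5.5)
We reduce the population problem to a pointwise one by conditioning on $X$, and then solve the pointwise problem over the finite action set $[J]\times[K]_0$. Write $\bar c_{j,k}(x) \coloneqq \mathbb{E}[c_{j,k}(X,A,Y)\mid X=x]$ for fixed measurable versions of the conditional expectations. Since each $c_{j,k}$ is measurable with values in $[0,C]$, these are measurable maps $\mathcal{X}\to[0,C]$.

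\textbf{Step 1: reduction to the executed pair.} For any measurable policy $(r,q)$, the indicator $\mathbf{1}\{r(x)=j\}\mathbf{1}\{q(x,j)=k\}$ depends only on $x$. By the tower property and linearity over the finite double sum,
\[
\mathbb{E}[\ell_{\mathrm{def\text{-}adv}}(r,q;X,A,Y,\mathbf{e})]
=\mathbb{E}_X\Bigl[\sum_{j,k}\bar c_{j,k}(X)\,\mathbf{1}\{r(X)=j\}\mathbf{1}\{q(X,j)=k\}\Bigr]
=\mathbb{E}_X\bigl[\bar c_{r(X),\,q(X,r(X))}(X)\bigr].
\]
Boundedness ensures every term is integrable.

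\textbf{Step 2: pointwise lower bound.} For every $x$ and every policy,
\[
\bar c_{r(x),q(x,r(x))}(x)\;\ge\;\min_{j}\min_{k}\bar c_{j,k}(x)\;=\;\min_j \bar c_{j,q^\star(x,j)}(x)\;=\;\bar c_{r^\star(x),q^\star(r^\star(x))}(x).
\]
The first equality holds because $q^\star(x,j)$ attains the inner minimum by definition. The second holds because $r^\star(x)$ attains the outer minimum of the optimally-advised costs. Integrating this inequality shows that $(r^\star,q^\star)$ attains the infimum of the risk over all measurable policies, provided it is itself a measurable policy.

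\textbf{Step 3: measurability.} This is the only step needing real care. We write
\[
\{x: q^\star(x,j)=k\}=\bigcap_{k'<k}\{\bar c_{j,k}(x)<\bar c_{j,k'}(x)\}\cap\bigcap_{k'>k}\{\bar c_{j,k}(x)\le \bar c_{j,k'}(x)\}.
\]
This is a finite intersection of measurable sets, so $q^\star(\cdot,j)$ is measurable for each fixed $j$. It follows that $x\mapsto \bar c_{j,q^\star(x,j)}(x)=\sum_k \bar c_{j,k}(x)\mathbf{1}\{q^\star(x,j)=k\}$ is measurable. The same smallest-index argmin decomposition, applied over $j$, then shows that $r^\star$ is measurable.

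The ``$\mathbb{P}_X$-a.e.'' qualifier comes from the choice of versions of the conditional expectations. Two versions agree $\mathbb{P}_X$-a.e., so any other version yields rules that coincide with these off a null set and have the same risk. Finally, any optimal policy must attain the pointwise minimum $\mathbb{P}_X$-a.e. Otherwise the integrated gap would be strictly positive, since the integrand is nonnegative and positive on a set of positive measure. This confirms that the displayed rules characterize the optimum up to ties.
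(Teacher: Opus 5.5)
Your proposal is correct and follows essentially the same route as the paper: the tower property reduces the risk to the conditional cost of the executed pair, the problem is minimized pointwise over the finite set $[J]\times[K]_0$ by first taking the smallest-index row minimizer $q^\star(x,j)$ and then the smallest-index minimizer over rows $r^\star(x)$, and measurability comes from the smallest-index selector. Your explicit set decomposition for $\{q^\star(\cdot,j)=k\}$ spells out the selector step that the paper only asserts. In Step 2, $q^\star(r^\star(x))$ should read $q^\star(x,r^\star(x))$.
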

The proof is given in Appendix~\ref{app:proof_bayes}.
The result reveals a fundamental coupling: the router does not compare experts
on their unaided costs $\mathbb{E}[c_j(X, Y) \mid X = x]$ as in the standard
setting, but on their optimally-advised ones. An expert that would never be
selected under standard deferral may become optimal once paired with the right
advice.

\paragraph{When is advice worth acquiring?}
In any practical system, acquiring information has a cost, and a sensible policy
should query only when the expected benefit justifies the expense. The
smallest-index Bayes query of Lemma~\ref{lem:bayes} satisfies exactly this
principle. Under the cost decomposition~\eqref{eq:advice_cost}, the system
acquires advice $k \geq 1$ for expert $j$ if and only if some queried source
strictly beats the no-advice option in expected total cost.
\begin{restatable}[Advice acquisition condition]{lemma}{advicecondlemma}%
\label{lem:advice_cond}
Under the cost decomposition~\eqref{eq:advice_cost} and the smallest-index
tie-breaking of Lemma~\ref{lem:bayes}, the Bayes-optimal query satisfies
$q^\star(x, j) \neq 0$ if and only if there exists $k \in [K]$ such that
\begin{equation}\label{eq:advice_cond}
    \mathbb{E}\bigl[\psi(e_j(X, \widetilde{A}^{(0)}), Y)
    \mid X = x\bigr]
    - \mathbb{E}\bigl[\psi(e_j(X, \widetilde{A}^{(k)}), Y)
    \mid X = x\bigr]
    \;>\; \gamma_k.
\end{equation}
When no source satisfies this condition, $q^\star(x, j) = 0$ and the expert
acts on $x$ alone.
\end{restatable}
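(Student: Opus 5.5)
The plan is to reduce the statement to a comparison of conditional expected costs, using the explicit form of $q^\star$ from Lemma~\ref{lem:bayes}. Fix $x$ in the full-measure set on which Lemma~\ref{lem:bayes} holds, and fix an expert $j$. For each $k \in [K]_0$ define the conditional risk
\[
R_{j,k}(x) \coloneqq \mathbb{E}\bigl[c_{j,k}(X,A,Y)\mid X=x\bigr].
\]
Using the decomposition~\eqref{eq:advice_cost} and linearity of conditional expectation, I will write
\[
R_{j,k}(x) = \mathbb{E}\bigl[\psi(e_j(X,\widetilde{A}^{(k)}),Y)\mid X=x\bigr] + \beta_j + \gamma_k .
\]
All terms are finite because the costs are bounded by $C$ and $\psi \ge 0$. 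Since $\gamma_0 = 0$ and the constant $\beta_j$ is common to every $k$, the difference $R_{j,0}(x) - R_{j,k}(x)$ equals exactly the left-hand side of~\eqref{eq:advice_cond} minus $\gamma_k$. Hence condition~\eqref{eq:advice_cond} for a given $k$ is equivalent to $R_{j,k}(x) < R_{j,0}(x)$.

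The key step is then a purely combinatorial fact about smallest-index tie-breaking. By~\eqref{eq:bayes_query}, $q^\star(x,j)$ is the smallest minimizer of $k \mapsto R_{j,k}(x)$ over $[K]_0$. I will show that $q^\star(x,j) \neq 0$ if and only if $0$ is not a minimizer, since index $0$ is the smallest index and wins any tie it participates in. Moreover, $0$ fails to be a minimizer exactly when some $k \in [K]$ has $R_{j,k}(x) < R_{j,0}(x)$.

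For the forward direction: if $q^\star(x,j)=k\ne 0$, then $R_{j,k}(x)\le R_{j,0}(x)$. Equality is impossible, because then $0$ would also be a minimizer and would be selected first. So the inequality is strict and~\eqref{eq:advice_cond} holds for this $k$.

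For the converse: if some $k\ge1$ satisfies the strict inequality, then $\min_{k'} R_{j,k'}(x) < R_{j,0}(x)$. Thus $0$ is not in the argmin and $q^\star(x,j)\neq 0$. The final sentence of the statement is the contrapositive: if no source satisfies~\eqref{eq:advice_cond}, then $q^\star(x,j)=0$, and by masking with $k=0$ the expert receives $\widetilde{a}^{(0)}$ (all $\bot$), i.e.\ it acts on $x$ alone.

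The only point needing care is the a.e.\ qualifier and the use of fixed measurable versions of the conditional expectations. Linearity holds a.e., so I will intersect the full-measure set of Lemma~\ref{lem:bayes} with the set where the decomposition identity holds for all finitely many pairs $(j,k)$. The equivalence then holds for $\mathbb{P}_X$-a.e.\ $x$. Apart from this bookkeeping, the tie-breaking argument is the heart of the proof, and I expect no genuine obstacle.
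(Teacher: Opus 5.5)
Your proposal is correct and follows essentially the same route as the paper: expand the conditional cost via the decomposition, note that $\beta_j$ cancels and $\gamma_0=0$ so the condition is equivalent to strictly beating $k=0$, and then use that index $0$ wins every tie under smallest-index tie-breaking. The only differences are cosmetic: you prove the forward direction directly (a selected $k\neq 0$ must be strictly cheaper than $0$), whereas the paper argues its contrapositive, and you make the a.e.\ bookkeeping for the conditional-expectation versions explicit.
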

The proof is given in Appendix~\ref{app:proof_advice_cond}. The left-hand side of
\eqref{eq:advice_cond} measures how much advice $k$ improves expert $j$'s
prediction; the right-hand side is the price of obtaining it. The system
acquires advice only when it is expected to pay for itself.

\paragraph{Deferral with advice dominates L2D.}
Since the no-advice action $k{=}0$ is always available, the learner can recover
the standard L2D decision whenever advice is not worth its cost. The point is
stronger: for each expert the best advised cost is at most the unaided cost;
propagating this through the comparison over experts yields a strict
population-level improvement whenever advice lowers the best achievable
conditional cost on a set of positive measure.
\begin{restatable}[Deferral with advice dominates standard
deferral]{lemma}{richerlemma}%
\label{lem:richer}
If $\mathcal{E}^\star_{\ell_{\mathrm{def}}}$ denotes the Bayes risk of the
standard deferral setting of Section~\ref{sec:preliminaries}, then
\begin{equation}\label{eq:richer}
    \mathcal{E}^\star_{\ell_{\mathrm{def\text{-}adv}}}
    \;\leq\;
    \mathcal{E}^\star_{\ell_{\mathrm{def}}}.
\end{equation}
\end{restatable}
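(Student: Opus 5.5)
The plan is an embedding argument: every standard deferral router induces a deferral-advice policy with exactly the same loss, so the infimum over the larger policy class can only be smaller. First I will fix notation. $\mathcal{E}^\star_{\ell_{\mathrm{def}}} = \inf_r \mathbb{E}[\ell_{\mathrm{def}}(r; X, Y, \mathbf{e})]$ is the infimum over measurable $r : \mathcal{X} \to [J]$. Likewise, $\mathcal{E}^\star_{\ell_{\mathrm{def\text{-}adv}}}$ is the infimum of $\mathbb{E}[\ell_{\mathrm{def\text{-}adv}}(r,q; X, A, Y, \mathbf{e})]$ over measurable pairs $(r,q)$. Both expectations are taken under the joint law of $(X,A,Y)$, whose $(X,Y)$-marginal is $\mathcal{D}$.

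Given any measurable router $r$, I will take the constant query $q_0(x,j) \equiv 0$, which is trivially measurable. By Definition~\ref{def:true_loss}, the indicator $\mathbf{1}\{q_0(x,j)=k\}$ vanishes unless $k=0$. Hence
\[
\ell_{\mathrm{def\text{-}adv}}(r,q_0; x,a,y,\mathbf{e}) = \sum_{j\in[J]} c_{j,0}(x,a,y)\,\mathbf{1}\{r(x)=j\} = \sum_{j\in[J]} c_j(x,y)\,\mathbf{1}\{r(x)=j\} = \ell_{\mathrm{def}}(r; x,y,\mathbf{e}),
\]
where the middle equality is the no-advice compatibility condition $c_{j,0}=c_j$. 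This identity holds because $\widetilde{a}^{(0)}$ masks all advice, so $e_j(x,\widetilde a^{(0)}) = e_j(x)$ and $\gamma_0=0$.

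Taking expectations, the integrand does not depend on $a$, so the expectation over $(X,A,Y)$ reduces to the expectation over $(X,Y)\sim\mathcal{D}$. Therefore $\mathcal{E}^\star_{\ell_{\mathrm{def\text{-}adv}}} \le \mathbb{E}[\ell_{\mathrm{def\text{-}adv}}(r,q_0)] = \mathcal{E}_{\ell_{\mathrm{def}}}(r)$ for every $r$. Taking the infimum over $r$ then gives~\eqref{eq:richer}.

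As an alternative consistent with Lemma~\ref{lem:bayes}, one can argue pointwise. For each $x$ and $j$, $\min_k \mathbb{E}[c_{j,k}\mid X=x] \le \mathbb{E}[c_{j,0}\mid X=x] = \mathbb{E}[c_j\mid X=x]$. Minimizing over $j$ and integrating, using that the Bayes risks equal the expected pointwise minima, yields the same inequality.

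I expect no real obstacle. The only point needing care is the measure-theoretic bookkeeping: $\ell_{\mathrm{def}}$ is defined on $(X,Y)$ while $\ell_{\mathrm{def\text{-}adv}}$ lives on $(X,A,Y)$, so I must use that the marginal of the joint law is $\mathcal{D}$ and that the costs are bounded, which makes all expectations finite.
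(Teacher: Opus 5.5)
Your proof is correct, but your main argument differs from the paper's. The paper argues pointwise. For each $x$ and $j$ it bounds the best-advised conditional cost $\min_{k}\mathbb{E}[c_{j,k}\mid X=x]$ by the unaided one, $\mathbb{E}[c_{j,0}\mid X=x]=\mathbb{E}[c_j\mid X=x]$. It then minimizes over $j$ and integrates over $X$. Both sides are identified with the Bayes risks through the row-minimum formula of Lemma~\ref{lem:bayes} and, implicitly, its $K=0$ analogue for standard deferral. That is exactly your ``alternative'' paragraph.

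Your primary route is instead an embedding of policy classes. Each router $r$, paired with the constant query $q_0\equiv 0$, reproduces $\ell_{\mathrm{def}}$ realization by realization, so the infimum over the larger class can be no larger. This is more elementary. It uses only the infimum definitions of the two Bayes risks, the compatibility $c_{j,0}=c_j$, and the fact that the $(X,Y)$-marginal is $\mathcal{D}$. It needs neither the pointwise characterization of either Bayes risk nor a measurable selection. It also carries over verbatim to restricted hypothesis classes, provided the advice class contains the embedded policies $(r,q_0)$.

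What the paper's pointwise route buys in exchange is the explicit nonnegative gap $h(x)=g_{\mathrm{def}}(x)-g_{\mathrm{adv}}(x)$. This gap immediately yields the necessary and sufficient condition for strict inequality recorded in Remark~\ref{rem:richer_equality}. The embedding argument alone does not reveal when the inequality is strict.
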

The proof, including a necessary and sufficient condition for strict inequality,
is given in Appendix~\ref{app:proof_richer}.

\subsection{The Separated Surrogate Fails}\label{sec:separated}

Because the protocol is sequential, the natural first attempt is to learn it
sequentially as well: one router score, one query head per expert, and smooth
logistic replacements for the hard indicators \citep{bartlett2006convexity}. This is precisely
the construction a learning-theory practitioner would derive by applying the
standard recipe---replacing each indicator in the true loss by a smooth convex
surrogate term---to the product of routing and query decisions. But does mirroring the control
flow suffice for consistency? We show
that a broad family of separated surrogates already fails in the smallest
non-trivial case: two experts $(J=2)$ and one binary advice $(K=1)$.

Let $s_r^b : \mathcal{X} \to \mathbb{R}$ be a binary router score and, for each
expert $j \in \{1,2\}$, let $s_{q_j^b} : \mathcal{X} \to \mathbb{R}$ be a
binary query score. Writing
$\mathbf{s}_{q^b} = (s_{q_1^b}, s_{q_2^b})$, the induced decisions are $r^b(x) = 1 + \mathbf{1}\{s_r^b(x) \geq 0\}$
and  $q_j^b(x) = \mathbf{1}\{s_{q_j^b}(x) \geq 0\}$.

\paragraph{The natural separated construction.} In standard learning theory,
one often replaces hard indicators by smooth convex surrogates
\citep{Statistical, bartlett2006convexity, Steinwart2007HowTC}. Applying this recipe to the
binary executed-pair loss yields logistic potentials for the decoded binary
outcomes,
$\Phi_0(u)=\log(1+e^{-u})$ and $\Phi_1(u)=\log(1+e^{u})$. Rather than
analyzing only the basic surrogate obtained by directly substituting these
terms, we study a strictly larger family with monotone cost transforms,
reparameterized query scores, and flexible positive router weights. Let
$\nu : [0, C] \to [0, \infty)$ be a monotone cost transform,
$G : \mathbb{R} \to (0, \infty)$ a query amplitude, and
$U : \mathbb{R} \to \mathbb{R}$ a strictly increasing query
reparameterization. The resulting \emph{router-query separated surrogate} family is
\begin{equation}\label{eq:separated}
    \begin{aligned}
        \Phi_{\mathrm{def\text{-}adv}}^{r/q}
        (s_r^b, \mathbf{s}_{q^b})
        \;=\;
        \sum_{j=1}^{2}\sum_{k=0}^{1}
        \Psi_j(s_r^b(x))\nu(c_{j,k}(x,a,y))\, G(s_{q_j^b}(x))\,
            \Phi_k(U(s_{q_j^b}(x))).
    \end{aligned}
\end{equation}
Setting $\nu = \mathrm{id}$, $G \equiv 1$, $U = \mathrm{id}$, and
$\Psi_j = \Phi_{j-1}$ recovers the basic construction obtained by
assigning these logistic potentials to the corresponding decoded binary
outcomes---the
standard recipe in cost-sensitive surrogate
design~\citep{cortes, cortes2024theory}; we give the full derivation in
Appendix~\ref{app:proof_natural_separated}.

\paragraph{Where the mismatch enters.}
This separation has one decisive consequence. After profiling over the query
heads, each expert is summarized by a scalar quantity
\begin{equation}\label{eq:profiled}
    F(u, v)
    \;\coloneqq\;
    \inf_{t \in \mathbb{R}}\;
    \bigl[
        \nu(u)\, G(t)\, \Phi_0(U(t))
        + \nu(v)\, G(t)\, \Phi_1(U(t))
    \bigr]
\end{equation}
that depends on the \emph{entire} row, not only on its best entry. The
Bayes rule, by contrast, compares row minima:
$\min\{c_{1,0}, c_{1,1}\}$ versus
$\min\{c_{2,0}, c_{2,1}\}$~(cf.~\eqref{eq:bayes_router}). The surrogate
and the Bayes rule compress each row in fundamentally different ways. That
difference is the entire obstruction.

We formalize this failure at the level of Bayes consistency: every population
surrogate minimizer should decode to a Bayes-optimal
action~\citep{Statistical, bartlett2006convexity, Steinwart2007HowTC, tewari07a}.

\begin{restatable}[Broad Bayes inconsistency of separated router/query
surrogates]{theorem}{separatedbayesinconsistency}%
\label{prop:separated_bayes_inconsistency}
Under mild regularity conditions on $\nu$, $G$, $U$, $\Psi_1$, $\Psi_2$
(Appendix~\ref{app:proof_separated_bayes_inconsistency}), for every $b \in (0, C)$ and
$\varepsilon \in (0, C - b)$, there exists $\delta \in (0, b)$ such that
the pointwise conditional cost table
\begin{equation}\label{eq:bad_table}
    \bar{\mathbf C}(x)
    \;=\;
    \begin{pmatrix}
        b - \delta & C \\
        b & b + \varepsilon
    \end{pmatrix}
\end{equation}
has Bayes-optimal expert $j^\star = 1$, while the unique minimizer of
the pointwise surrogate~\eqref{eq:separated}
decodes to expert~$2$. In particular, the entire family~\eqref{eq:separated}
is not Bayes consistent for the true loss.
\end{restatable}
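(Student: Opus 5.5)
Bayes optimality is immediate from the table. Row minima are $\min\{b-\delta,C\}=b-\delta$ for expert~1 and $\min\{b,b+\varepsilon\}=b$ for expert~2. By Lemma~\ref{lem:bayes}, $r^\star(x)=1$ for every $\delta\in(0,b)$.

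The real work is the surrogate side. Since the conditional surrogate is linear in the costs, I take the pointwise conditional surrogate with $\nu(\bar c_{j,k})$ replaced by its conditional expectation. I will state the regularity conditions so that $\nu$ is applied to conditional costs, or assume deterministic costs at $x$, which suffices for a counterexample. The query head $s_{q_j^b}$ enters only the $j$-th summand. Profiling it out therefore gives the reduced objective
\[
\Psi_1(s)\,F(b-\delta,C)+\Psi_2(s)\,F(b,b+\varepsilon),
\]
with $F$ as in \eqref{eq:profiled}.

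The plan has three steps.

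First, I show that $F$ is positive, continuous in $u$, and nondecreasing in each argument, and that the profiled infimum is attained. These follow from the regularity assumptions: $G>0$, $U$ strictly increasing and onto or coercive, $\nu$ monotone and continuous with $\nu>0$ on $(0,C]$, and a growth condition on $G\cdot\Phi_k\circ U$ making the infimum attained. Monotonicity holds because $F$ is an infimum of functions nondecreasing in $(u,v)$.

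Second, I impose the router regularity: the map $s\mapsto \Psi_1(s)\alpha+\Psi_2(s)\beta$ has a unique minimizer $s^\star(\alpha,\beta)$, with $s^\star<0$ (decoding to expert~1) iff $\alpha<\beta$ and $s^\star\ge 0$ iff $\alpha>\beta$. The logistic choice $\Psi_j=\Phi_{j-1}$ satisfies this, since the minimizer is $\log(\alpha/\beta)$ when $\beta$ weights $\Phi_1$, and I will record exactly this sign convention as the assumption. The surrogate then decodes to expert~2 iff $F(b-\delta,C)>F(b,b+\varepsilon)$.

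Third, and this is the main obstacle, I must prove the strict inequality $F(b,C)>F(b,b+\varepsilon)$ at $\delta=0$. Continuity of $F$ in $u$ then yields a small $\delta\in(0,b)$ preserving it, which is where the existential $\delta$ comes from. Weak inequality follows from monotonicity in $v$. For strictness I take the attained minimizer $t^\ast$ for $(b,C)$ and write
\[
F(b,C)=F_{t^\ast}(b,b+\varepsilon)+\bigl(\nu(C)-\nu(b+\varepsilon)\bigr)G(t^\ast)\Phi_1(U(t^\ast)).
\]
The correction term is strictly positive because $\nu$ is strictly increasing on the relevant range, $G>0$, and the logistic $\Phi_1>0$ everywhere. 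Since $F_{t^\ast}(b,b+\varepsilon)\ge F(b,b+\varepsilon)$, strictness follows. So I would state the regularity conditions as: $\nu$ continuous and strictly increasing, $G$ continuous and positive, $U$ continuous and strictly increasing, attainment of the profiled infimum, and the router sign and uniqueness property. Uniqueness of the full minimizer then comes from uniqueness of $s^\star$ plus uniqueness of each profiled $t$, which I may need to add as an assumption or derive from strict convexity of the logistic case. Decoding to expert~2 gives Bayes inconsistency.
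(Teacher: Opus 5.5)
Your argument is correct, and it shares the paper's skeleton. Row minima give $j^\star=1$. Profiling the query heads, which is legitimate because $\Psi_j>0$, reduces the objective to $\Psi_1(s)F(b-\delta,C)+\Psi_2(s)F(b,b+\varepsilon)$. A router sign property then turns $F(b-\delta,C)>F(b,b+\varepsilon)$ into decoding to expert~$2$, and $\delta$ comes from perturbing the strict inequality at $\delta=0$.

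The genuine difference is how you obtain $F(b,C)>F(b,b+\varepsilon)$. The paper computes $\partial F/\partial v=\nu'(v)\,G(t^\star)\,\Phi_1(U(t^\star))>0$ via Danskin's theorem. That route uses differentiability of $\nu$, uniqueness of $t^\star(u,v)$, and the continuity of $(u,v)\mapsto t^\star(u,v)$ built into (Q2); the same continuity is also what gives the paper continuity of $F$. Your evaluate-at-the-minimizer comparison needs only attainment of the infimum at $(b,C)$, together with $\nu(C)>\nu(b+\varepsilon)$, $G>0$ and $\Phi_1>0$. It is therefore more elementary, avoids invoking Danskin over the non-compact set $\mathbb{R}$, and works under weaker hypotheses, all of which are implied by (N1)--(Q2).

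What your weaker hypotheses no longer hand you is continuity of $F$ in $u$, which you assert without proof. It is easy to supply. Write $f(t;u,v)$ for the objective inside the infimum defining $F$. For $u<b$ you have $0\le\nu(u)\le\nu(b)$ with $\nu(b)>0$, so $f(t;u,C)\ge\tfrac{\nu(u)}{\nu(b)}f(t;b,C)$ for every $t$. Hence $F(u,C)\ge\tfrac{\nu(u)}{\nu(b)}F(b,C)\to F(b,C)$ as $u\uparrow b$, by continuity of $\nu$. This one-sided bound is exactly what the choice of $\delta$ requires. Equivalently, $(p,q)\mapsto\inf_t[p\,G(t)\Phi_0(U(t))+q\,G(t)\Phi_1(U(t))]$ is finite and concave on $(0,\infty)^2$, hence continuous there.

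To cover the whole family rather than only the logistic router, derive your postulated sign property from (P1)--(P2), as the paper does. Strict convexity follows from $\Psi_j''>0$, and the first-order condition $\rho(s)=B/A$, with $\rho$ a strictly decreasing bijection satisfying $\rho(0)=1$, gives the sign.

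Your caveat that a nonlinear $\nu$ does not commute with conditional expectation is well taken. The paper implicitly works with deterministic costs at the realization, which is exactly the resolution you propose.
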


Expert~$1$ has one excellent action and one catastrophic one, whereas
expert~$2$ has two mediocre actions. Bayes compares row minima, ignores the
catastrophic entry, and therefore prefers expert~$1$. The separated surrogate
behaves differently. After profiling out the query head, row~$1$ is summarized
by $F(b-\delta, C)$, which still depends on the catastrophic entry.
Appendix~\ref{app:proof_expert_summary_monotone} shows that $F(u,v)$ is strictly increasing in $v$
for fixed $u \in (0,C)$, so this distortion is unavoidable. The router
therefore compares profiled row summaries rather than row minima and selects
expert~$2$.

The contradiction already appears in the smallest non-trivial setting
($J{=}2$, $K{=}1$) and persists across the entire parameterization
family~\eqref{eq:separated}. Because this family encompasses every
modification the standard surrogate-design recipe suggests, the result rules out
the entire class of constructions a practitioner would naturally derive from
the product-of-indicators loss.
Appendix~\ref{app:synthetic_exp} complements the theorem
with a synthetic experiment where the separated surrogate's excess risk stays
bounded away from zero.

\subsection{The Augmented Surrogate}\label{sec:augmented}

  The separated surrogate of Section~\ref{sec:separated} fails because profiling
  the query heads row by row distorts the router's comparison. This failure makes
  surrogate consistency nontrivial: even in the simpler binary ask/not-ask
  decision structure of \citet{pugnana2025ask}, realizable consistency is known
  but Bayes consistency is left open. We show that such guarantees can
  nevertheless be obtained in our setting by abandoning the separated
  construction and learning over the executed expert--advice pairs directly.

\paragraph{Composite actions.}
Recall that the loss depends only on the executed pair
$(r(x),\, q(x, r(x)))$. Define the composite action set
$\Pi \coloneqq [J] \times [K]_0$, of cardinality $|\Pi| = J(K+1)$. Each
action $(j, k) \in \Pi$ represents the joint decision to route to
expert~$j$ and acquire advice~$k$. A composite predictor is a measurable
map $\pi : \mathcal{X} \to \Pi$. Let
$\mathcal{H}_\pi \subseteq
\{s_\pi : \mathcal{X} \times \Pi \to \mathbb{R}\}$ be a hypothesis class of
measurable scoring functions; each $s_\pi \in \mathcal{H}_\pi$ induces a
composite predictor via
$\pi(x) = \argmax_{(j,k) \in \Pi} s_\pi(x, (j,k))$. We write
$\mathbf{s}_\pi(x) = (s_\pi(x, (j,k)))_{(j,k) \in \Pi}$ for the full
score vector.

A  sequential policy $(r,q)$ induces the composite action
$\pi(x) = (r(x),\, q(x,r(x)))$, and every composite action $(j,k)$ can in turn
be implemented by a sequential policy with the same realized loss. The
composite view simply makes explicit what the loss has depended on all
along: the executed expert-advice pair. Proposition~\ref{prop:seq_composite},
proved in Appendix~\ref{app:proof_seq_composite}, shows that the sequential and
composite formulations therefore induce the same population risks and share the
same Bayes-optimal decisions as those identified in Lemma~\ref{lem:bayes}.

\paragraph{The augmented surrogate.}
The composite view aligns the learning problem with the Bayes comparison
itself, but the true loss remains non-differentiable. The deferral-advice loss
is not a uniform $0$--$1$ loss over $\Pi$: it is cost-sensitive, with
instance- and action-dependent weights. The key step
(Appendix~\ref{app:proof_mismatch}) decomposes the realized loss into an
action-independent offset and a weighted mismatch term:
\begin{equation}\label{eq:true_loss_reformulated}
    \ell_{\mathrm{def\text{-}adv}}(\pi;\, x, a, y, \mathbf{e})
    \;=\;
    D(x,a,y)
    +
    \sum_{i \in \Pi} w_i(x,a,y)\,\mathbf{1}\{\pi(x)\neq i\},
\end{equation}
where $D(x,a,y)$ is independent of $\pi$ and $w_i(x,a,y)
    \coloneqq
    \max_{i' \in \Pi} c_{i'}(x,a,y) - c_i(x,a,y).$
Let $\Phi^\tau_{01}$ denote the comp-sum multiclass
surrogate of \citet{mao2023crossentropylossfunctionstheoretical},
parameterized by $\tau \geq 0$; at $\tau = 1$ it recovers the standard
logistic (cross-entropy) loss. Applying $\Phi^\tau_{01}$ to the weighted
mismatch term yields the augmented deferral-advice surrogate, which we derive
in Appendix~\ref{app:proof_augmented_surrogate}:
\begin{lemma}[Augmented surrogate]\label{lem:augmented_surrogate}
\begin{equation}\label{eq:augmented}
    \Phi_{\mathrm{def\text{-}adv}}^{\mathrm{aug},\tau}
    (s_\pi;\, x, a, y, \mathbf{e})
    \;\coloneqq\;
    \sum_{i\in \Pi}
    w_{i}(x, a, y)\;
    \Phi^\tau_{01}(\mathbf{s}_\pi(x),\, i).
\end{equation}
\end{lemma}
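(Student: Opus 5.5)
The lemma is essentially a derivation, so the plan is to make the two steps behind \eqref{eq:augmented} precise: first establish the offset--mismatch decomposition \eqref{eq:true_loss_reformulated}, then substitute the comp-sum surrogate for each mismatch indicator.

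\textbf{Step 1: the decomposition.} Write $c_i = c_i(x,a,y)$ for $i \in \Pi$ and set $M \coloneqq \max_{i'\in\Pi} c_{i'}$. Since the decoded action $\pi(x)$ is a single element of $\Pi$, we have $\mathbf{1}\{\pi(x)\neq i\} = 1-\mathbf{1}\{\pi(x)=i\}$ and $\sum_{i}\mathbf{1}\{\pi(x)=i\}=1$. Hence
\[
\sum_{i\in\Pi} w_i\,\mathbf{1}\{\pi(x)\neq i\}
= \sum_{i\in\Pi}(M-c_i) - \bigl(M - c_{\pi(x)}\bigr).
\]
The true loss of Definition~\ref{def:true_loss}, evaluated on the composite predictor (Proposition~\ref{prop:seq_composite}), is exactly $c_{\pi(x)}$. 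Therefore \eqref{eq:true_loss_reformulated} holds with
\[
D(x,a,y) \coloneqq M - \sum_{i\in\Pi}(M-c_i) = \sum_{i\in\Pi} c_i - (J(K+1)-1)M ,
\]
which does not depend on $\pi$. Because every cost lies in $[0,C]$, each weight satisfies $w_i \in [0,C]$. In particular the weights are nonnegative, which is what makes the next substitution a legitimate upper-bounding surrogate.

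\textbf{Step 2: the substitution.} Drop the action-independent offset $D$, since it does not affect minimization over $s_\pi$. Then replace each $\mathbf{1}\{\pi(x)\neq i\}$, where $\pi$ is the argmax decoding of $\mathbf{s}_\pi$, by the comp-sum loss $\Phi^\tau_{01}(\mathbf{s}_\pi(x), i)$ of \citet{mao2023crossentropylossfunctionstheoretical}. This loss upper-bounds the $0$--$1$ mismatch indicator (up to the standard normalization for each $\tau$) and is differentiable in $\mathbf{s}_\pi$. The result is exactly \eqref{eq:augmented}. Measurability of the surrogate follows because $w_i$ are finite max/differences of measurable costs and $\Phi^\tau_{01}$ is continuous in the scores.

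\textbf{Main point needing care.} The only nontrivial step is the algebraic identity in Step 1. It uses the fact that exactly one indicator $\mathbf{1}\{\pi(x)=i\}$ is active, which is guaranteed by the smallest-index tie-breaking in the argmax decoding. Once \eqref{eq:true_loss_reformulated} is established, the lemma is a definitional substitution. Nonnegativity of $w_i$ ensures that the surrogate risk upper-bounds the mismatch part of the true risk, which is the relation exploited later by the consistency analysis.
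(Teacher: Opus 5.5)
Your proposal is correct and follows the paper's route: the paper first establishes the offset--mismatch decomposition \eqref{eq:true_loss_reformulated} (Proposition~\ref{prop:mismatch}, with the same offset $D=\sum_{i}c_i-(|\Pi|-1)M$, obtained by fixing the executed action and summing the weights of the unchosen ones, which is your identity written slightly differently), and then replaces each mismatch indicator, label by label, with $\Phi^\tau_{01}$ to arrive at \eqref{eq:augmented}. One small correction to your closing remark: the later consistency proof does not rely on an upper-bounding relation; it uses the nonnegativity of the $w_i$ to build the reweighted distribution $\widetilde{\mathcal{D}}$ of Lemma~\ref{lem:weighted_multiclass}.
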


\paragraph{Consistency: the main positive result.}
Lemma~\ref{lem:augmented_surrogate} gives the right starting point, but
smoothly replacing mismatch indicators is not enough: the separated surrogate of
Section~\ref{sec:separated} can still decode to the wrong executed decision
(Theorem~\ref{prop:separated_bayes_inconsistency}). The surrogate must preserve
the comparison made by the protocol.

The augmented surrogate succeeds where the separated one fails because it
never profiles out a subset of scores. The separated construction optimizes
query scores row by row and hands the router a scalar summary per expert;
that summary depends on the non-minimal entry and distorts the comparison
(Theorem~\ref{prop:separated_bayes_inconsistency}). The augmented surrogate, by contrast, assigns one
score to each composite action $(j,k)$ and compares them all at once. No
row-by-row reduction ever occurs, so the surrogate comparison operates on the
same objects as the Bayes comparison.

%We now establish this intuition formally: the augmented surrogate admits a
%quantitative excess-risk transfer bound for the entire comp-sum family and
%hence for every $\tau \geq 0$.

\begin{restatable}[$\mathcal{H}_\pi$-consistency of the augmented
surrogate]{theorem}{augmentedconsistency}%
\label{thm:augmented}
Assume that $\mathcal H_\pi$ satisfies the standard symmetry and completeness
conditions required by the comp-sum multiclass
$\mathcal H_\pi$-consistency theorem for $\Phi^\tau_{01}$ on the label set
$\Pi$. For every $\tau \geq 0$, the augmented surrogate
$\Phi_{\mathrm{def\text{-}adv}}^{\mathrm{aug},\tau}$ satisfies an
$\mathcal{H}_\pi$-consistency bound with respect to
$\ell_{\mathrm{def\text{-}adv}}$. Specifically, for every
$s_\pi \in \mathcal{H}_\pi$ and its induced composite predictor $\pi$,
\begin{equation}\label{eq:augmented_bound}
    \mathcal{E}_{\ell_{\mathrm{def\text{-}adv}}}(\pi)
    -
    \mathcal{E}_{\ell_{\mathrm{def\text{-}adv}}}^B(\mathcal{H}_\pi)
    +
    \mathcal{U}_{\ell_{\mathrm{def\text{-}adv}}}(\mathcal{H}_\pi)
    \leq
    \widetilde{\Gamma}_\tau\!\left(
        \mathcal{E}_{\Phi_{\mathrm{def\text{-}adv}}^{\mathrm{aug},\tau}}(s_\pi)
        -
        \mathcal{E}_{\Phi_{\mathrm{def\text{-}adv}}^{\mathrm{aug},\tau}}^\ast(\mathcal{H}_\pi)
        +
        \mathcal{U}_{\Phi_{\mathrm{def\text{-}adv}}^{\mathrm{aug},\tau}}(\mathcal{H}_\pi)
    \right).
\end{equation}
With
$\widetilde{\Gamma}_\tau(u)
    \;\coloneqq\;
    \mathbb{E}\bigl[\|\mathbf{w}(X)\|_1\bigr]\,
    \Gamma_\tau\!\left(
        \dfrac{u}{\mathbb{E}[\|\mathbf{w}(X)\|_1]}
    \right),$ where $\Gamma_\tau$ is the non-negative, non-decreasing, concave transfer
function for
$\Phi^\tau_{01}$ on $\Pi$, and
$\mathbf{w}(x) \coloneqq
(\mathbb{E}[w_i(X, A, Y) \mid X = x])_{i \in \Pi}$.
\end{restatable}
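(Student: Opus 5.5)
The proof reduces the statement to the comp-sum $\mathcal H$-consistency theorem of \citet{mao2023crossentropylossfunctionstheoretical}, applied pointwise with cost-dependent weights, and then integrates over $X$ using concavity of $\Gamma_\tau$.

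\textbf{Step 1: conditional risks.} Fix $x$ and condition on $X=x$. By the decomposition~\eqref{eq:true_loss_reformulated}, the conditional true risk of a score vector $\mathbf{s}=\mathbf{s}_\pi(x)$ is
\[
\bar D(x) + \sum_{i\in\Pi} \bar w_i(x)\,\mathbf 1\{\pi(x)\neq i\}, \qquad \bar D(x)=\mathbb E[D\mid X=x],\quad \bar w_i(x)=\mathbb E[w_i\mid X=x].
\]
The term $\bar D(x)$ does not depend on $\pi$. It therefore cancels in every difference of conditional risks, and consequently in the minimizability-gap combinations as well. The conditional surrogate risk is $\sum_i \bar w_i(x)\,\Phi^\tau_{01}(\mathbf s,i)$. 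The weights satisfy $\bar w_i\ge 0$ because $\max_{i'} c_{i'}\ge c_i$ pointwise. Write $W(x)=\|\mathbf w(x)\|_1$. If $W(x)=0$, both conditional excess risks vanish and the point contributes nothing. Otherwise, set $p_i(x)=\bar w_i(x)/W(x)$. This $p(x)$ is a probability distribution on $\Pi$, and both conditional risks become $W(x)$ times the standard multiclass conditional risks of the $0$--$1$ loss and of $\Phi^\tau_{01}$ under the label distribution $p(x)$. In particular, the conditional excess true risk is
\[
W(x)\Big(\max_i p_i(x) - p_{\pi(x)}(x)\Big),
\]
where the maximum is taken over the actions realizable by $\mathcal H_\pi$, which under the stated symmetry and completeness conditions is all of $\Pi$.

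\textbf{Step 2: pointwise comp-sum bound.} The comp-sum $\mathcal H$-consistency theorem is proved through a pointwise inequality. For any distribution $p$ on $\Pi$ and any $\mathbf s$ from a symmetric and complete class,
\[
\Delta\mathcal C_{01}(\mathbf s;p) \le \Gamma_\tau\big(\Delta\mathcal C_{\Phi^\tau_{01}}(\mathbf s;p)\big),
\]
where each $\Delta\mathcal C$ is the conditional risk minus its infimum over $\mathcal H_\pi$. Applying this with $p=p(x)$ and multiplying by $W(x)$ gives the weighted pointwise bound
\[
\Delta\mathcal C_{\ell}(s_\pi,x) \le W(x)\,\Gamma_\tau\!\big(\Delta\mathcal C_{\Phi}(s_\pi,x)/W(x)\big).
\]

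\textbf{Step 3: integration.} Taking expectations over $X$, the left-hand side equals the left-hand side of~\eqref{eq:augmented_bound}; this is exactly how minimizability gaps are defined, since $\mathbb E_X[\Delta\mathcal C]$ equals the excess risk plus the gap. For the right-hand side, the map $(u,W)\mapsto W\,\Gamma_\tau(u/W)$ is the perspective of a concave function, so it is jointly concave and positively homogeneous. Jensen's inequality, equivalently Jensen under the reweighted measure $W(x)\,d\mathbb P_X/\mathbb E[W]$, then yields
\[
\mathbb E\big[W\,\Gamma_\tau(\Delta\mathcal C_\Phi/W)\big] \le \mathbb E[W]\,\Gamma_\tau\big(\mathbb E[\Delta\mathcal C_\Phi]/\mathbb E[W]\big).
\]
Here $\mathbb E[\Delta\mathcal C_\Phi]$ is the surrogate excess risk plus its minimizability gap. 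Monotonicity of $\Gamma_\tau$ then finishes the argument.

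\textbf{Main obstacle.} The delicate step is Step 2. The published comp-sum bounds are usually stated at the level of expected risks, whereas this argument needs the pointwise version for an arbitrary distribution $p$, including the normalization by $W(x)$. The weighted loss may fail to be scale invariant, because $\Phi^\tau_{01}$ is applied without normalizing the weights, so homogeneity has to be tracked carefully. I would first verify that the conditional surrogate risk is exactly linear in the weights, so that the $W(x)$ factor pulls out cleanly, and that the infimum over $\mathcal H_\pi$ commutes with this scaling. A secondary issue is measurability of $x\mapsto p(x)$ and handling the null set where $W(x)=0$. Measurability follows because the conditional expectations are fixed measurable versions; on $\{W=0\}$ one uses the convention $0\cdot\Gamma_\tau(\cdot/0)=0$.
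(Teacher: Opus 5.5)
Your argument is correct, but it is organized differently from the paper's proof.

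\textbf{The paper's route.} The paper never uses a pointwise comp-sum inequality. In Lemma~\ref{lem:weighted_multiclass} it builds an auxiliary distribution $\widetilde{\mathcal D}$ on $\mathcal X\times\Pi$ with $\widetilde{\mathbb P}(X\in B,I=i)\propto\mathbb E[\bar w_i(X)\mathbf 1\{X\in B\}]$. It then checks that every term of the bound is an affine image of its $\widetilde{\mathcal D}$-counterpart, with offset $\mathbb E[D(X,A,Y)]$ and scale $\mathbb E[\|\mathbf w(X)\|_1]$. This covers the risks, the best-in-class risks, and both minimizability gaps. Finally it invokes the population-level comp-sum theorem on $\widetilde{\mathcal D}$ as a black box and rescales.

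\textbf{Your route.} You normalize at each $x$, apply the conditional inequality under $p(x)$, and integrate by hand using concavity of the perspective $W\,\Gamma_\tau(u/W)$.

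\textbf{Why they agree.} The mechanism is the same. The $X$-marginal of $\widetilde{\mathcal D}$ is exactly your reweighted measure $W(x)\,d\mathbb P_X/\mathbb E[W]$, and its conditional label law is your $p(x)$. Your Jensen step is the one performed inside the comp-sum theorem once that theorem is applied to $\widetilde{\mathcal D}$.

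\textbf{What each buys.} The paper's route needs only the published population statement, at the price of verifying how the gap terms scale. Yours makes explicit where concavity enters and needs no auxiliary distribution.

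\textbf{Your flagged obstacle.} It is harmless. Apply the population theorem to the distribution $\delta_x\otimes p$ on $\mathcal X\times\Pi$. Both minimizability gaps vanish there, and the bound reduces to exactly the pointwise inequality you need.

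\textbf{One omission.} Treat $\mathbb E[\|\mathbf w(X)\|_1]=0$ separately, as the paper does. In that case $\widetilde\Gamma_\tau$ is undefined, and the claim is trivial because the left-hand side vanishes.
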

We prove this theorem in Appendix~\ref{app:proof_augmented}. For the standard
log-softmax (cross-entropy) surrogate ($\tau = 1$), $\Gamma_1(u) = \sqrt{2u}$,
so the true excess risk decays as
$O\!\bigl(\!\sqrt{\mathbb{E}[\|\mathbf{w}(X)\|_1]\,\varepsilon}\,\bigr)$
in the surrogate excess risk $\varepsilon$, at the standard calibration rate \citep{mao2023crossentropylossfunctionstheoretical, mao2024universalgrowth}.
The weights
$w_i(x,a,y)$ are both instance- and action-dependent, so the surrogate defines a
\emph{weighted} cost-sensitive problem rather than a uniform one. The technical
core is a reweighting argument that absorbs these weights into an auxiliary
distribution $\widetilde{\mathcal{D}}$ on $\mathcal{X} \times \Pi$
(Lemma~\ref{lem:weighted_multiclass} in the appendix). The
transfer function $\widetilde{\Gamma}_\tau$ that emerges is
problem-adaptive: it depends on the expected weight norm
$\mathbb{E}[\|\mathbf{w}(X)\|_1]$, which encodes the cost structure of the
particular expert-advice environment. When no advice is available ($K = 0$, so $\Pi = [J]$), the bound~\eqref{eq:augmented_bound} recovers the
$\mathcal{H}_r$-consistency bounds established for standard Learning-to-Defer
by~\citet{mao2023twostage, mao2024regressionmultiexpertdeferral, montreuil2024twostagelearningtodefermultitasklearning, montreuil2026why}.

\begin{corollary}[Bayes consistency]
\label{cor:augmented_statistical}
Assume the hypotheses of Theorem~\ref{thm:augmented}. If the minimizability
gaps in Theorem~\ref{thm:augmented} vanish, the class is Bayes-rich in the
sense that
$\mathcal{E}_{\ell_{\mathrm{def\text{-}adv}}}^B(\mathcal H_\pi)
=\inf_{\pi}\mathcal{E}_{\ell_{\mathrm{def\text{-}adv}}}(\pi)$, and $\mathcal{E}_{\Phi_{\mathrm{def\text{-}adv}}^{\mathrm{aug},\tau}}(\widehat{s}_{\pi,n})
    -
    \mathcal{E}_{\Phi_{\mathrm{def\text{-}adv}}^{\mathrm{aug},\tau}}^\ast(\mathcal{H}_\pi)
    \;\to\; 0$
in probability, then the induced predictors $\widehat{\pi}_n$ satisfy $\mathcal{E}_{\ell_{\mathrm{def\text{-}adv}}}(\widehat{\pi}_n)
    -
    \inf_{\pi}\mathcal{E}_{\ell_{\mathrm{def\text{-}adv}}}(\pi)
    \;\to\; 0$
in probability.
\end{corollary}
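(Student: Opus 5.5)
The plan is to combine Theorem~\ref{thm:augmented} with the vanishing-gap and Bayes-richness assumptions, and then pass from convergence of surrogate excess risk to convergence of true excess risk. I use continuity of $\widetilde{\Gamma}_\tau$ at $0$ together with the continuous mapping theorem.

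\emph{Step 1: reduce the bound.} Fix $n$ and apply \eqref{eq:augmented_bound} to $s_\pi=\widehat{s}_{\pi,n}$ with induced predictor $\widehat{\pi}_n$. By hypothesis $\mathcal{U}_{\ell_{\mathrm{def\text{-}adv}}}(\mathcal{H}_\pi)=0$, $\mathcal{U}_{\Phi^{\mathrm{aug},\tau}_{\mathrm{def\text{-}adv}}}(\mathcal{H}_\pi)=0$, and $\mathcal{E}^B_{\ell_{\mathrm{def\text{-}adv}}}(\mathcal{H}_\pi)=\inf_\pi\mathcal{E}_{\ell_{\mathrm{def\text{-}adv}}}(\pi)$. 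This gives, deterministically for every realization of the sample,
\[
0\le \mathcal{E}_{\ell_{\mathrm{def\text{-}adv}}}(\widehat{\pi}_n)-\inf_{\pi}\mathcal{E}_{\ell_{\mathrm{def\text{-}adv}}}(\pi)\le \widetilde{\Gamma}_\tau(\varepsilon_n),
\qquad
\varepsilon_n\coloneqq\mathcal{E}_{\Phi^{\mathrm{aug},\tau}_{\mathrm{def\text{-}adv}}}(\widehat{s}_{\pi,n})-\mathcal{E}^\ast_{\Phi^{\mathrm{aug},\tau}_{\mathrm{def\text{-}adv}}}(\mathcal{H}_\pi)\ge 0 .
\]
The lower bound of $0$ holds because $\widehat{\pi}_n$ is one particular measurable policy. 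I should also note that $\inf_\pi\mathcal{E}_{\ell_{\mathrm{def\text{-}adv}}}(\pi)$ is the same whether the infimum runs over composite or sequential policies (Proposition~\ref{prop:seq_composite}), and by Lemma~\ref{lem:bayes} it is attained.

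\emph{Step 2: continuity of $\widetilde{\Gamma}_\tau$ at $0$.} Let $W\coloneqq\mathbb{E}[\|\mathbf{w}(X)\|_1]$. Since $0\le w_i\le C$, we have $W\le |\Pi|C<\infty$. If $W=0$, all costs coincide a.s., the true excess risk is identically $0$, and there is nothing to prove. Otherwise $\widetilde{\Gamma}_\tau(u)=W\Gamma_\tau(u/W)$. Here $\Gamma_\tau$ is non-negative, non-decreasing and concave on $\mathbb{R}_+$ with $\Gamma_\tau(0)=0$. I expect this to be the main subtlety: concavity alone gives continuity only on the open interval $(0,\infty)$, not at the endpoint. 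So I will invoke the explicit comp-sum transfer functions of \citet{mao2023crossentropylossfunctionstheoretical}, for example $\Gamma_1(u)=\sqrt{2u}$ and analogous power or root forms for other $\tau$. Each of these satisfies $\Gamma_\tau(u)\to0$ as $u\downarrow0$. Hence $\widetilde{\Gamma}_\tau(u)\to 0$ as $u\downarrow 0$.

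\emph{Step 3: convergence in probability.} Fix $\eta>0$. By Step 2 there is $\rho>0$ with $\widetilde{\Gamma}_\tau(u)<\eta$ for all $u\in[0,\rho)$; monotonicity of $\widetilde{\Gamma}_\tau$ ensures this holds on the whole interval. By Step 1,
\[
\mathbb{P}\bigl(\mathcal{E}_{\ell_{\mathrm{def\text{-}adv}}}(\widehat{\pi}_n)-\inf_\pi\mathcal{E}_{\ell_{\mathrm{def\text{-}adv}}}(\pi)\ge\eta\bigr)\le\mathbb{P}(\varepsilon_n\ge\rho).
\]
The right-hand side tends to $0$ by the assumption that $\varepsilon_n\to 0$ in probability. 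This proves the corollary. Measurability of the events involved is implicitly assumed, as is already the case in the hypothesis on $\varepsilon_n$.
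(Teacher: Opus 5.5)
Your proof is correct and follows the same route as the paper: you specialize the bound of Theorem~\ref{thm:augmented} using the vanishing gaps and Bayes-richness, then transfer convergence in probability through continuity of $\widetilde{\Gamma}_\tau$ at $0$. Your Step~2 is more careful than the paper's, which infers continuity at $0$ from concavity, monotonicity and $\Gamma_\tau(0)=0$ alone, even though a concave function on $[0,\infty)$ can jump at the endpoint; you close that gap by appealing to the explicit comp-sum forms of $\Gamma_\tau$, each of which tends to $0$ as $u\downarrow 0$.
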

Thus, whenever the hypothesis class is rich enough and the surrogate is
well-optimized, the induced policy converges in risk to the Bayes-optimal
deferral-advice risk of Lemma~\ref{lem:bayes}. When $\mathcal{H}_\pi$
contains all measurable functions, both minimizability gaps vanish
\citep{Zhang, bartlett2006convexity, pmlr-v28-long13} and the bound reduces to classical
Bayes-consistency. The separated surrogate offers no such guarantee.

\section{Experiments}\label{sec:experiments}

We evaluate the augmented surrogate on four benchmarks spanning language,
tabular, multi-modal, and synthetic domains.
\textbf{(i)~FEVER}~\citep{thorne2018fever}: fact verification with LLM and
NLI experts receiving retrieved evidence (detailed below).
\textbf{(ii)~Sensitive escalation}~\citep{ieee-fraud-detection}: fraud
detection where advice reveals progressively more sensitive feature groups
(Appendix~\ref{app:sensitive_exp_details}).
\textbf{(iii)~CLIP prompt escalation}~\citep{radford2021learning}: zero-shot
image classification where advice enlarges the prompt family
(Appendix~\ref{app:clip_exp_details}).
\textbf{(iv)~Synthetic}: a theorem-aligned benchmark that isolates the failure
mode of separated surrogates predicted by Theorem~\ref{prop:separated_bayes_inconsistency}
(Appendix~\ref{app:synthetic_exp}).

\subsection{FEVER setup}

\paragraph{Task, experts, and advice.}
We use the FEVER fact-verification benchmark~\citep{thorne2018fever}. The
expert pool contains Qwen3-4B-Instruct, Qwen3-8B~\citep{qwen3},
and DeBERTa-v3-large~\citep{he2021debertav3}, with consultation costs
$\beta=(0.03,0.05,0.04)$. Advice consists of retrieved evidence. The five
actions are no retrieval, BM25 top-$1$, top-$3$, top-$5$, and a
query-reformulation pipeline produced by Qwen2.5-1.5B. The base advice costs
are $\gamma^{\mathrm{base}}=(0,0.015,0.02,0.03,0.01)$, and the deployed cost is
$\gamma_k=\lambda\gamma_k^{\mathrm{base}}$ for
$\lambda \in \{0,0.5,1,2,4,8,10\}$.

\paragraph{Training protocol and baselines.}
We evaluate the empirical validation average of
$\mathcal{E}_{\ell_{\mathrm{def\text{-}adv}}}(\pi)$. We compare against four
baselines, each isolating a different aspect of the decision:
(i)~standard L2D \citep{mao2023twostage}, restricted to $k{=}0$, which can
route but never acquire advice;
(ii)~the best fixed expert--advice pair, an oracle over non-adaptive policies;
(iii)~two partial-randomization ablations --- learned expert with random advice,
and learned advice with random expert --- that separate the contribution of each
decision component; and
(iv)~a fully random $(j,k)$ baseline as an uninformed floor.

\subsection{Results}

The value of advice is strongly expert-dependent: on this three-class task,
without retrieval all three experts hover around $33\%$ accuracy, but their
preferred retrieval actions diverge---DeBERTa is best with top-$1$, whereas both LLMs prefer
top-$5$ (full per-expert table in Appendix~\ref{app:exp_details}).

\begin{table}[ht]
\centering
\caption{Validation true deferral-advice loss across advice-cost multipliers
$\lambda$. Entries are mean $\pm$ std over $4$ seeds; lower is better.
Bold marks the best mean per row.}
\label{tab:fever_main}
\scriptsize
\setlength{\tabcolsep}{2.4pt}
\renewcommand{\arraystretch}{0.82}
\begin{tabular}{ccccccc}
\toprule
$\lambda$ & Ours & L2D & Best fixed & Learned $j$, rand.\ $k$ & Rand.\ $j$, learned $k$ & Rand.\ $(j,k)$ \\
\midrule
$0$    & \textbf{0.555\,$\pm$\,0.002} & 0.619\,$\pm$\,0.001 & 0.611 & 0.633\,$\pm$\,0.005 & 0.640\,$\pm$\,0.007 & 0.675\,$\pm$\,0.006 \\
$0.5$  & \textbf{0.558\,$\pm$\,0.002} & 0.619\,$\pm$\,0.001 & 0.619 & 0.640\,$\pm$\,0.005 & 0.657\,$\pm$\,0.007 & 0.682\,$\pm$\,0.006 \\
$1$    & \textbf{0.566\,$\pm$\,0.001} & 0.619\,$\pm$\,0.001 & 0.626 & 0.647\,$\pm$\,0.007 & 0.664\,$\pm$\,0.007 & 0.690\,$\pm$\,0.006 \\
$2$    & \textbf{0.580\,$\pm$\,0.001} & 0.619\,$\pm$\,0.001 & 0.641 & 0.662\,$\pm$\,0.006 & 0.674\,$\pm$\,0.009 & 0.705\,$\pm$\,0.006 \\
$4$    & \textbf{0.596\,$\pm$\,0.003} & 0.619\,$\pm$\,0.001 & 0.671 & 0.695\,$\pm$\,0.006 & 0.695\,$\pm$\,0.007 & 0.735\,$\pm$\,0.007 \\
$8$    & \textbf{0.613\,$\pm$\,0.001} & 0.619\,$\pm$\,0.001 & 0.695 & 0.760\,$\pm$\,0.006 & 0.711\,$\pm$\,0.006 & 0.795\,$\pm$\,0.007 \\
$10$   & 0.619\,$\pm$\,0.001 & 0.619\,$\pm$\,0.001 & 0.695 & 0.788\,$\pm$\,0.006 & 0.715\,$\pm$\,0.008 & 0.825\,$\pm$\,0.007 \\
\bottomrule
\end{tabular}
\end{table}

Table~\ref{tab:fever_main} gives the main result. Our method achieves the
lowest mean true loss in seven of the eight cost regimes, with the largest
gain at $\lambda=0$ ($0.064$ over L2D); as advice cost grows the policy shifts
toward no-query decisions, and at $\lambda=10$ it recovers the L2D
baseline. It also stays below the best fixed expert--advice pair throughout,
confirming that the gain stems from input-dependent selection rather than from
mere access to a larger action set. The partial-randomization ablations further
show that both decision halves matter, and that \emph{Random $j$, learned $k$}
beats fully random --- evidence that the surrogate learns a genuinely
expert-conditional advice strategy rather than a global retrieval preference.

\begin{figure}[ht]
\centering
\vspace{-0.4em}
\includegraphics[width=0.8\linewidth]{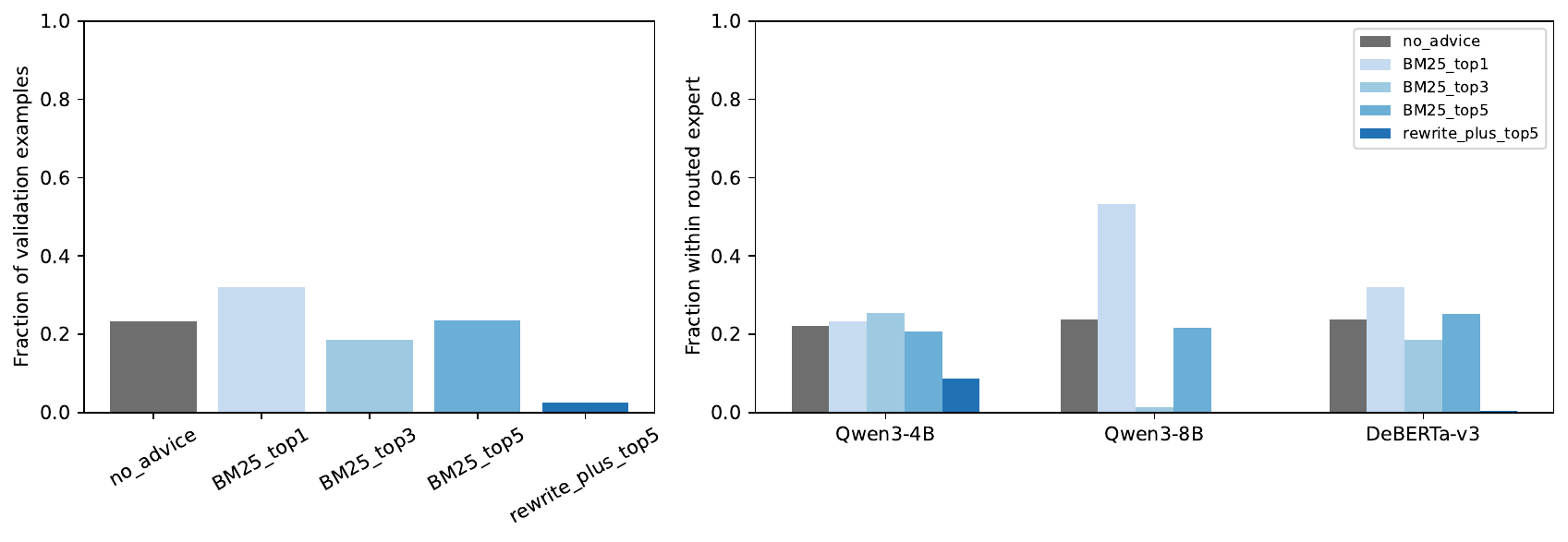}
\vspace{-0.4em}
\caption{Deployed advice distribution of our method at $\lambda=0$ in
FEVER. The left panel shows the overall fraction of validation examples
assigned to each advice action. The right panel conditions on the routed
expert.}
\vspace{-0.6em}
\label{fig:fever_advice_distribution_lambda0}
\end{figure}

\subsection{Additional benchmarks}

\paragraph{Sensitive escalation (tabular).}
On the fraud-detection benchmark~\citep{ieee-fraud-detection}, advice reveals
progressively more sensitive feature groups to one of three experts
(logistic regression, GBDT, MLP). Our method achieves the lowest true loss at
every cost multiplier
(Table~\ref{tab:sensitive_main_appendix} in Appendix~\ref{app:sensitive_exp_details}),
improving over L2D by $0.022$ at $\lambda{=}0$ and converging to it at
$\lambda{=}5$. The best fixed pair itself changes with the cost regime,
confirming that the optimal policy is inherently cost-dependent.

\paragraph{CLIP prompt escalation (multi-modal).}
On ImageNet-1k~\citep{russakovsky2015imagenet}, advice enlarges the prompt
family used by frozen CLIP experts (B/32, B/16, L/14) from a single template
to ensembles of up to $32$. Our method attains the lowest true loss across the
entire low- and moderate-cost regime
(Table~\ref{tab:clip_main_appendix} in Appendix~\ref{app:clip_exp_details}),
reducing L2D's $0.352$ to $0.337$ at $\lambda{=}0$ and matching L2D at
$\lambda{=}10$. The three experts prefer different prompt families at different
cost levels, precisely the regime where a joint expert-advice policy is most
natural.

\paragraph{Synthetic (theorem-aligned).}
A binary-setting benchmark ($J{=}2$, $K{=}1$) instantiates the cost tables
from Theorem~\ref{prop:separated_bayes_inconsistency}. At $n{=}5000$, our method reaches true risk
$0.281$, essentially matching the Bayes optimum $0.280$, while the separated
surrogate stalls at $0.334$ and L2D at $0.342$
(Table~\ref{tab:synthetic_main} in Appendix~\ref{app:synthetic_exp}).
Regionwise, the separated surrogate misroutes on the theorem region ($18\%$
Bayes-action match), L2D cannot realize the queried Bayes action on the
advice-helpful region ($0\%$), and our method achieves $99.3\%$ on both.

\section{Conclusion}

We introduced \emph{Learning-to-Defer with advice}, a setting in which the
system must decide not only which expert should act, but also which additional
information that expert should receive. A broad family of natural separated
router-query surrogates is not Bayes consistent even in the smallest non-trivial
case; we gave an augmented surrogate that learns directly over the composite
expert-advice action space and proved an $\mathcal{H}$-consistency guarantee.
Empirically, the resulting method improves over Learning-to-Defer across
language, tabular, and multi-modal benchmarks while adapting its advice
acquisition to the cost regime. Once information acquisition is expert
dependent, Learning-to-Defer must reason not only about \emph{who} acts, but
also about \emph{what they get to see}.

\section{Limitations}\label{sec:limitations}

Training requires precomputed costs $c_{j,k}(x,a,y)$ for every expert--advice
pair. This full-information assumption is standard in Learning-to-Defer
\citep{mozannar2020consistent, mao2023twostage, Verma2022LearningTD, montreuil2026why}, where it takes the form of precomputed expert predictions on
the training set; at inference time, only the single selected expert--advice
pair is executed. As the first formulation of Learning-to-Defer with advice
and the first consistency guarantee for its composite action space, the
present work deliberately adopts this full-information regime to isolate the
surrogate-design question from the orthogonal question of partial cost
observation. Extending the augmented surrogate to the \emph{limited-information}
setting --- where only the executed pair's cost is observed at training time,
so that counterfactual expert--advice costs must be estimated (e.g., via
off-policy evaluation, importance weighting, or bandit feedback) --- is a
natural next step and the primary direction of our future work.

\section{Impact Statement}\label{sec:impact}

Learning-to-Defer with advice formalizes how a learned system can decide both
which expert to consult and what additional information to provide. Its
potential benefits include reduced reliance on the most expensive experts,
cost-aware acquisition of sensitive or costly information, and more
transparent allocation of decisions between automated and human reviewers.
As with any deferral system, these benefits come with risks: the learned
policy inherits the biases of its experts and of the cost structure chosen by
the operator, and the acquisition of advice may involve sensitive information
(e.g., escalating identity or payment data) whose use must respect applicable
privacy, fairness, and compliance constraints. We view the present
contribution as foundational — a calibrated surrogate with provable
consistency guarantees — and encourage application-specific audits of experts,
costs, and advice sources before deployment.

\bibliographystyle{plainnat}
\bibliography{biblio}

\appendix

\section*{Appendix}
\DoToC

\section{Algorithms, Visualization, Example}\label{app:algorithms}

\subsection{Notation Summary}

Table~\ref{tab:notation} collects the main symbols used throughout the paper.
We keep only the objects that are central to the formulation and theory.

\begin{table*}[ht]
\centering
\caption{Notation summary.}
\label{tab:notation}
\small
\setlength{\tabcolsep}{6pt}
\begin{tabular}{p{0.24\textwidth}p{0.68\textwidth}}
\toprule
\textbf{Symbol} & \textbf{Meaning} \\
\midrule
\multicolumn{2}{l}{\textit{Data, experts, and advice}} \\
\midrule
$(X,Y)\sim\mathcal D$ & Input-output random pair. \\
$(x,y)$ & One realization of $(X,Y)$. \\
$A=(A^1,\dots,A^K)$ & Advice random vector. \\
$(x,a,y)$ & One realization of the full deferral-advice instance. \\
$[J]=\{1,\dots,J\}$ & Expert index set. \\
$[K]_0=\{0,1,\dots,K\}$ & Advice-action set, where $k=0$ denotes no advice. \\
$m_k$ & Masking operator that reveals at most advice source $k$. \\
$\widetilde a^{(k)}$ & Masked realized advice under action $k$. \\
$\widetilde A^{(k)}$ & Masked advice random variable under action $k$. \\
$e_j$ & Expert predictor for expert $j$. \\
$\mathbf e=(e_1,\dots,e_J)$ & Tuple of expert predictors. \\
$\psi$ & Task loss. \\
$\beta_j$ & Expert consultation cost for expert $j$. \\
$\gamma_k$ & Advice acquisition cost for advice action $k$. \\
$c_j(x,y)$ & Standard L2D cost of routing $(x,y)$ to expert $j$. \\
$c_{j,k}(x,a,y)$ & Deferral-advice cost of executing expert-advice pair $(j,k)$. \\
\midrule
\addlinespace[2pt]
\multicolumn{2}{l}{\textit{Policies, actions, and risks}} \\
\midrule
$r:\mathcal X\to[J]$ & Router in standard Learning-to-Defer. \\
$q:\mathcal X\times[J]\to[K]_0$ & Expert-conditional query function. \\
$\ell_{\mathrm{def}}$ & Standard deferral loss. \\
$\ell_{\mathrm{def\text{-}adv}}$ & True deferral-advice loss. \\
$\mathcal E_{\ell_{\mathrm{def}}}$ & Population risk associated with $\ell_{\mathrm{def}}$. \\
$\mathcal E_{\ell_{\mathrm{def\text{-}adv}}}$ & Population risk associated with $\ell_{\mathrm{def\text{-}adv}}$. \\
$r^\star$ & Bayes-optimal router. \\
$q^\star$ & Bayes-optimal query function. \\
$\Pi=[J]\times[K]_0$ & Composite action space of executed expert-advice pairs. \\
$i\in\Pi$ & Composite action index, typically identified with a pair $(j,k)$. \\
$\pi:\mathcal X\to\Pi$ & Composite predictor. \\
\midrule
\addlinespace[2pt]
\multicolumn{2}{l}{\textit{Surrogates and consistency quantities}} \\
\midrule
$\mathcal H_r$ & Hypothesis class for router scores. \\
$\mathcal H_q$ & Hypothesis class for query scores. \\
$\mathcal H_\pi$ & Hypothesis class for composite scores. \\
$s_r$ & Router score function. \\
$\mathbf s_r(x)$ & Router score vector over experts. \\
$s_q$ & Query score function over expert-advice pairs $(j,k)$. \\
$\mathbf S_q(x)$ & Query-score matrix with entries $s_q(x,j,k)$. \\
$s_\pi$ & Composite score function. \\
$\mathbf s_\pi(x)$ & Composite score vector over $\Pi$. \\
$\Phi_{\mathrm{def\text{-}adv}}^{r/q}$ & Separated router/query surrogate family. \\
$F(u,v)$ & Profiled row summary in the separated-surrogate analysis. \\
$D(x,a,y)$ & Action-independent offset in the composite-loss decomposition. \\
$w_i(x,a,y)$ & Mismatch weight associated with composite action $i$. \\
$\Phi^\tau_{01}$ & Comp-sum multiclass loss indexed by $\tau \ge 0$. \\
$\Phi_{\mathrm{def\text{-}adv}}^{\mathrm{aug},\tau}$ & Augmented deferral-advice surrogate. \\
$\Gamma_\tau$ & Transfer function for the base comp-sum loss on $\Pi$. \\
$\widetilde\Gamma_\tau$ & Transfer function in the augmented-surrogate consistency bound. \\
\bottomrule
\end{tabular}
\end{table*}

\subsection{Visualization}

\begin{figure}[ht]
\centering
\resizebox{0.98\linewidth}{!}{%
\begin{tikzpicture}[
    >=Stealth,
    font=\small,
    node distance=1.0cm and 1.4cm,
    nd/.style={rectangle, rounded corners=3pt, thick, align=center,
        inner sep=5pt, minimum height=0.85cm, text width=4.2cm},
    arr/.style={->, thick},
]

\node[nd, draw=yellow!60!black, fill=yellow!15, text width=1.3cm] (x) at (0,0)
    {{\bfseries Input}\\ $x$};

\node[nd, draw=routerborder, fill=routerblue, below left=1.0cm and 1.6cm of x]
    (router)
    {{\bfseries Router scores}\\
     $\mathbf{s}_r(x)\in\mathbb{R}^J$};

\node[nd, draw=lossborder, fill=lossred, below right=1.0cm and 1.6cm of x]
    (qmat)
    {{\bfseries Query-score matrix}\\
     $\mathbf{S}_q(x)=\bigl(s_q(x,j,k)\bigr)_{j,k}$};

\draw[arr] (x) -- (router);
\draw[arr] (x) -- (qmat);

\node[nd, draw=routerborder, fill=routerblue, below=of router]
    (rsel)
    {{\bfseries Routed expert}\\
     $r(x)=\argmax_{j\in[J]} s_r(x,j)$};
\draw[arr] (router) -- (rsel);

\node[nd, draw=lossborder, fill=lossred, below=of qmat]
    (qrow)
    {{\bfseries Advice index per expert}\\
     rowwise argmax of $\mathbf{S}_q(x)$\\
     $\bigl(q(x,1),\ldots,q(x,J)\bigr)$};
\draw[arr] (qmat) -- (qrow);

\node[nd, draw=lossborder, fill=lossred, below=1.1cm of $(rsel)!0.5!(qrow)$, text width=4.9cm]
    (execpair)
    {{\bfseries Executed pair}
    \[
        \bigl(r(x),\,q(x,r(x))\bigr)
    \]
    \footnotesize only this pair enters the loss};
\draw[arr] (rsel) -- (execpair);
\draw[arr] (qrow) -- (execpair);

\node[nd, draw=adviceborder, fill=advicepurple, below=0.95cm of execpair]
    (masked)
    {{\bfseries Revealed advice}\\
     $\widetilde{a}^{(q(x,r(x)))}$};
\draw[arr] (execpair) -- (masked);

\node[nd, draw=scorerborder, fill=scorergreen, below=0.95cm of masked]
    (execute)
    {{\bfseries Executed decision}\\
     expert $r(x)$ uses
     $\widetilde{a}^{(q(x,r(x)))}$};
\draw[arr] (masked) -- (execute);

\node[nd, draw=scorerborder, fill=scorergreen, below=0.95cm of execute, text width=2.8cm]
    (pred)
    {{\bfseries Output}\\ Prediction / decision};
\draw[arr] (execute) -- (pred);

\end{tikzpicture}%
}
\caption{The deferral-advice protocol. From the input $x$, the router produces
expert scores and the query model produces a $J\times(K+1)$ query-score matrix.
Taking the argmax along each row yields one advice index per expert, but only
the routed row is ever executed: the protocol uses the pair
$\bigl(r(x),\,q(x,r(x))\bigr)$, reveals the corresponding masked advice, and
executes expert $r(x)$ under that advice. When $q(x,r(x))=0$, the revealed
advice is the null masked advice $\widetilde a^{(0)}$.}
\label{fig:protocol}
\end{figure}

\begin{figure}[ht]
\centering
\resizebox{0.35\linewidth}{!}{%
\begin{tikzpicture}[
    >=Stealth,
    font=\small,
    node distance=0.9cm,
    nd/.style={rectangle, rounded corners=3pt, thick, align=center,
        inner sep=5pt, minimum height=0.85cm, text width=4.0cm},
    arr/.style={->, thick},
]

\node[nd, draw=yellow!60!black, fill=yellow!15, text width=1.4cm] (x)
    {{\bfseries Input}\\ $x$};

\node[nd, draw=routerborder, fill=routerblue, below=of x] (router)
    {{\bfseries Router scores}\\
     $\mathbf{s}_r(x)=(s_r(x,j))_{j\in[J]}$};
\draw[arr] (x) -- (router);

\node[nd, draw=routerborder, fill=routerblue, below=of router] (routed)
    {{\bfseries Routed expert}\\
     $r(x)=\argmax_{j\in[J]} s_r(x,j)$};
\draw[arr] (router) -- (routed);

\node[nd, draw=scorerborder, fill=scorergreen, below=of routed, text width=2.9cm]
    (output)
    {{\bfseries Output}\\ $e_{r(x)}(x)$};
\draw[arr] (routed) -- (output);

\end{tikzpicture}%
}
\caption{The standard Learning-to-Defer protocol. From the input $x$, the
router produces one score per expert, routes to the highest-scoring expert
$r(x)$, and outputs that expert's prediction. Unlike the deferral-advice
protocol in Figure~\ref{fig:protocol}, there is no query head, no advice action,
and no executed expert--advice pair.}
\label{fig:standard_l2d_protocol}
\end{figure}

\subsection{Worked Example of the True Loss}\label{app:worked_example}

The following example traces the full evaluation of the deferral-advice loss
on a single realization, starting from the scoring functions and ending with
the scalar loss value.

\begin{example}[Evaluating the true loss on one instance]
\label{ex:true_loss_worked}
Consider $J=3$ experts and $K=2$ advice sources, so the advice index set is
$[K]_0 = \{0, 1, 2\}$.

\paragraph{Scores.}
Suppose the router and query scoring functions produce, for a given input $x$,
\[
    \mathbf{s}_r(x) = (0.8,\; 1.2,\; 0.5),
    \qquad
    \mathbf{S}_q(x) =
    \begin{pmatrix}
        0.6 & 0.3 & 0.1 \\
        0.2 & 0.9 & 0.4 \\
        0.5 & 0.1 & 0.7
    \end{pmatrix}
    \in \mathbb{R}^{3 \times 3}.
\]
Each row of $\mathbf{S}_q(x)$ corresponds to one expert; each column to one
advice action ($k=0,1,2$).

\paragraph{Policy decisions.}
The router selects the expert with the highest routing score:
$r(x) = \argmax_{j \in [3]} s_r(x, j) = 2$.
The query function computes one advice index per expert by taking the row-wise
argmax of $\mathbf{S}_q(x)$:
\[
    q(x, 1) = 0, \qquad q(x, 2) = 1, \qquad q(x, 3) = 2.
\]
Only the executed query matters: $k^\star = q(x, r(x)) = q(x, 2) = 1$.

\paragraph{Realized costs.}
For this realization $(x, a, y)$, suppose the full cost table is
\[
    \bigl(c_{j,k}(x, a, y)\bigr)_{j \in [3],\, k \in \{0,1,2\}}
    =
    \begin{pmatrix}
        0.35 & 0.42 & 0.38 \\
        0.40 & 0.20 & 0.45 \\
        0.50 & 0.48 & 0.25
    \end{pmatrix}.
\]
Each entry includes task loss, expert fee, and advice fee.

\paragraph{Loss evaluation.}
The double indicator in Definition~\ref{def:true_loss} selects the executed
pair $(j, k) = (2, 1)$:
\[
    \ell_{\mathrm{def\text{-}adv}}(r, q;\, x, a, y, \mathbf{e})
    \;=\;
    c_{2,1}(x, a, y)
    \;=\;
    0.20.
\]
The counterfactual query decisions $q(x, 1) = 0$ and $q(x, 3) = 2$ play no
role: the protocol only pays for the executed pair. The chosen pair is also
the realized minimum of the cost table: comparing the row-wise minima
$(0.35, 0.20, 0.25)$, expert $2$ with advice $k=1$ achieves the smallest
cost. This is exactly the comparison performed by the Bayes policy in
Lemma~\ref{lem:bayes}, applied at the level of one realization.
\end{example}

\subsection{Algorithms}

Algorithms~\ref{alg:augmented_training}, \ref{alg:augmented_inference},
and~\ref{alg:separated_inference} summarize the two decision rules discussed in
the paper. The first two are the proposed augmented approach: training from
fully observed executed-pair costs, and deployment-time execution of the
learned composite policy. Algorithm~\ref{alg:separated_inference} then records
the inference rule of the separated router/query parameterization, which is the
natural sequential baseline analyzed in Section~\ref{sec:separated}.

\begin{algorithm}[ht]
\caption{Training with the augmented surrogate}
\label{alg:augmented_training}
\begin{algorithmic}[1]
\REQUIRE Training set $\{(x_n,a_n,y_n)\}_{n=1}^N$, composite action set
$\Pi = [J] \times [K]_0$, score model $s_\pi(\cdot,\cdot;\theta)$,
comp-sum parameter $\tau \geq 0$, learning rate $\eta$
\STATE Initialize parameters $\theta$
\REPEAT
    \STATE Sample a minibatch $B \subseteq \{1,\dots,N\}$
    \FORALL{$n \in B$}
        \FORALL{$i=(j,k) \in \Pi$}
            \STATE Compute the realized cost
            $c_i(x_n,a_n,y_n) \equiv c_{j,k}(x_n,a_n,y_n)$
        \ENDFOR
        \STATE Set
        $w_i(x_n,a_n,y_n)
        \leftarrow
        \max_{i' \in \Pi} c_{i'}(x_n,a_n,y_n) - c_i(x_n,a_n,y_n)$
        for all $i \in \Pi$
        \STATE Form the sample loss
        \[
            L_n(\theta)
            \leftarrow
            \sum_{i \in \Pi}
            w_i(x_n,a_n,y_n)\,
            \Phi^\tau_{01}(\mathbf{s}_\pi(x_n;\theta), i)
        \]
    \ENDFOR
    \STATE Update
    $\theta \leftarrow \theta - \eta \nabla_\theta
    \left( |B|^{-1}\sum_{n \in B} L_n(\theta) \right)$
\UNTIL{convergence}
\STATE \textbf{return} learned parameters $\widehat{\theta}$
\end{algorithmic}
\end{algorithm}

\begin{algorithm}[ht]
\caption{Inference with a separated router/query policy}
\label{alg:separated_inference}
\begin{algorithmic}[1]
\REQUIRE Input $x$, router score model $s_r(\cdot,\cdot)$, query score model
$s_q(\cdot,\cdot,\cdot)$
\STATE Form the query-score matrix
\[
    \mathbf{S}_q(x)
    \leftarrow
    \bigl(s_q(x,j,k)\bigr)_{j\in[J],\,k\in[K]_0}
    \in \mathbb{R}^{J\times (K+1)}
\]
\STATE For each expert $j$, take the argmax along row $j$ of
$\mathbf{S}_q(x)$:
\[
    \widehat{k}_j(x)
    \in
    \argmax_{k\in[K]_0} s_q(x,j,k)
\]
\STATE Compute the routed expert
\[
    \widehat{j}(x)
    \in
    \argmax_{j\in[J]} s_r(x,j)
\]
\STATE Select the advice actually executed by the routed expert
\[
    \widehat{k}(x)
    \leftarrow
    \widehat{k}_{\widehat{j}(x)}(x)
\]
\IF{$\widehat{k}(x)=0$}
    \STATE Set $\widetilde{a}^{(\widehat{k}(x))}\leftarrow \widetilde{a}^{(0)}$
\ELSE
    \STATE Acquire advice source $\widehat{k}(x)$ and reveal
    $\widetilde{a}^{(\widehat{k}(x))}$
\ENDIF
\STATE Route $x$ to expert $\widehat{j}(x)$ and output
$e_{\widehat{j}(x)}(x,\widetilde{a}^{(\widehat{k}(x))})$
\STATE \textbf{return} routed expert $\widehat{j}(x)$, executed advice
$\widehat{k}(x)$, and prediction
$e_{\widehat{j}(x)}(x,\widetilde{a}^{(\widehat{k}(x))})$
\end{algorithmic}
\end{algorithm}

\begin{algorithm}[ht]
\caption{Inference with the learned composite policy}
\label{alg:augmented_inference}
\begin{algorithmic}[1]
\REQUIRE Input $x$, learned score model $s_\pi(\cdot,\cdot;\widehat{\theta})$
\STATE Select the composite action
\[
    (\widehat{j},\widehat{k})
    \leftarrow
    \argmax_{(j,k)\in\Pi} s_\pi(x,(j,k);\widehat{\theta})
\]
\IF{$\widehat{k}=0$}
    \STATE Set $\widetilde{a}^{(\widehat{k})} \leftarrow \widetilde{a}^{(0)}$
\ELSE
    \STATE Acquire advice source $\widehat{k}$ and reveal the masked advice
    $\widetilde{a}^{(\widehat{k})}$
\ENDIF
\STATE Route $x$ to expert $\widehat{j}$ and output
$e_{\widehat{j}}(x,\widetilde{a}^{(\widehat{k})})$
\STATE \textbf{return} executed pair
$\widehat{\pi}(x)=(\widehat{j},\widehat{k})$ and prediction
$e_{\widehat{j}}(x,\widetilde{a}^{(\widehat{k})})$
\end{algorithmic}
\end{algorithm}

\clearpage
\section{Proofs}\label{app:proofs}
\begin{example}[Running conditional-cost table]
\label{ex:running_proofs}
Before turning to the formal derivations, it is useful to keep one concrete
conditional cost table in mind. Fix an input $x$ and consider
\[
    \bar{\mathbf C}(x)
    \;\coloneqq\;
    \bigl(
        \mathbb{E}[c_{j,k}(X,A,Y)\mid X=x]
    \bigr)_{j\in[3],\,k\in\{0,1,2\}}
    =
    \begin{pmatrix}
        0.32 & 0.41 & 0.36 \\
        0.35 & 0.27 & 0.40 \\
        0.38 & 0.33 & 0.24
    \end{pmatrix}.
\]
Each row corresponds to an expert, each column to an advice action, and
$k=0$ denotes the no-advice option. The entries already include the entire
executed cost: prediction error, expert fee, and advice fee. The query
decision therefore acts row by row, because once the expert is fixed the
learner still has to decide which advice source, if any, should be revealed to
that expert. In this table, expert~$1$ is best with no advice, expert~$2$ is
best with advice~$1$, and expert~$3$ is best with advice~$2$. The expert-wise
Bayes query therefore keeps only the row minima
\[
    (0.32,\;0.27,\;0.24),
\]
and the Bayes router then selects expert~$3$, because $0.24$ is the smallest
of these three values.

Standard Learning-to-Defer would reason differently. It would only compare the
unaided column
\[
    (0.32,\;0.35,\;0.38),
\]
and would therefore select expert~$1$. The richer protocol changes the decision
because it allows each expert to be evaluated under its own best advice.

This example will be useful in the next three proofs. It illustrates the three
main ideas: the best advice is expert-dependent, the router should
compare experts only after this expert-wise query choice has been made, and the
additional advice menu can strictly improve the achievable Bayes risk.
\end{example}

\subsection{Proof of Lemma~\ref{lem:bayes}}\label{app:proof_bayes}

\bayeslemma*
\begin{proof}
The key point is that the protocol pays only for the executed pair
$(r(x),q(x,r(x)))$. Once this is made explicit, the Bayes problem reduces to a
pointwise comparison of the conditional expected costs of the finitely many
expert-advice pairs.

Indeed, by Definition~\ref{def:true_loss}, the population risk of any policy
$(r,q)$ can be written as
\[
    \mathbb{E}\bigl[\ell_{\mathrm{def\text{-}adv}}(r, q;\, X, A, Y,
    \mathbf{e})\bigr]
    = \mathbb{E}\Biggl[
        \sum_{j \in [J]} \sum_{k \in [K]_0}
        c_{j,k}(X, A, Y)\,
        \mathbf{1}\{r(X) = j\}\,
        \mathbf{1}\{q(X, j) = k\}
    \Biggr].
\]
Since exactly one pair $(j, k)$ is selected for each $X$, this simplifies to
\[
    \mathbb{E}\bigl[c_{r(X),\, q(X, r(X))}(X, A, Y)\bigr].
\]
Thus the loss depends on the policy only through the executed pair. Applying
the tower property of conditional expectation gives
\[
    \mathbb{E}\bigl[c_{r(X),\, q(X, r(X))}(X, A, Y)\bigr]
    = \mathbb{E}\Bigl[
        \mathbb{E}\bigl[c_{r(X),\, q(X, r(X))}(X, A, Y) \mid X\bigr]
    \Bigr].
\]
Since $r(X)$ and $q(X, r(X))$ are deterministic functions of $X$, conditioning
on $X$ fixes the pair $(j, k) = (r(X),\, q(X, r(X)))$, and the inner
expectation becomes
$\mathbb{E}[c_{j,k}(X, A, Y) \mid X]$. It is therefore enough to minimize this
conditional expectation pointwise in $x$.

For $\mathbb{P}_X$-a.e.\ $x$, the optimization problem is
\[
    \min_{(j, k) \in [J] \times [K]_0}
    \mathbb{E}\bigl[c_{j,k}(X, A, Y) \mid X = x\bigr].
\]
Since $[J] \times [K]_0$ is a finite set and each $c_{j,k}$ is bounded (hence
integrable), the conditional expectations
$\mathbb{E}[c_{j,k}(X, A, Y) \mid X = x]$ are well-defined measurable
functions of $x$. The minimum of finitely many measurable functions is
measurable, and choosing the smallest index among the finitely many minimizers
produces a measurable selector. This minimum can be decomposed as
\[
    \min_{(j, k) \in [J] \times [K]_0}
    \mathbb{E}\bigl[c_{j,k}(X, A, Y) \mid X = x\bigr]
    = \min_{j \in [J]}\;
    \min_{k \in [K]_0}\;
    \mathbb{E}\bigl[c_{j,k}(X, A, Y) \mid X = x\bigr],
\]
so the optimal decision decomposes in the same way. For each expert $j$, the
smallest-index Bayes query is
\[
    q^\star(x,j)
    =
    \min\!\argmin_{k \in [K]_0}
    \mathbb{E}\bigl[c_{j,k}(X,A,Y)\mid X=x\bigr].
\]
Once these expert-wise best query values are fixed, the router compares the
resulting optimally-advised costs and therefore satisfies
\[
    r^\star(x)
    =
    \min\!\argmin_{j \in [J]}
    \mathbb{E}\bigl[c_{j,q^\star(x,j)}(X,A,Y)\mid X=x\bigr]
    =
    \min\!\argmin_{j \in [J]}
    \min_{k \in [K]_0}
    \mathbb{E}\bigl[c_{j,k}(X,A,Y)\mid X=x\bigr].
\]
This is exactly the pair of Bayes decisions stated in
\eqref{eq:bayes_query} and~\eqref{eq:bayes_router}. Example~\ref{ex:bayes_running}
shows this decomposition concretely: the query first selects one minimizing
column per row, and the router then compares only these row minima.
Consequently, the Bayes risk is
\[
    \mathcal{E}^\star_{\ell_{\mathrm{def\text{-}adv}}}
    = \mathbb{E}\Bigl[
        \min_{j \in [J]}\;
        \min_{k \in [K]_0}\;
        \mathbb{E}\bigl[c_{j,k}(X, A, Y) \mid X\bigr]
    \Bigr]. \qedhere
\]
\end{proof}
\begin{example}[Bayes query and Bayes router on a conditional cost table]
\label{ex:bayes_running}
Fix an input $x$ and suppose the conditional executed-pair costs are
\[
    \bar{\mathbf C}(x)
    \;\coloneqq\;
    \bigl(
        \mathbb{E}[c_{j,k}(X,A,Y)\mid X=x]
    \bigr)_{j\in[3],\,k\in\{0,1,2\}}
    =
    \begin{pmatrix}
        0.32 & 0.41 & 0.36 \\
        0.35 & 0.27 & 0.40 \\
        0.38 & 0.33 & 0.24
    \end{pmatrix}.
\]
Each row corresponds to one expert, and each column to one advice action, with
$k=0$ denoting no advice. The Bayes query acts row by row: for expert~$1$, the
smallest entry is $0.32$, so $q^\star(x,1)=0$; for expert~$2$, the smallest
entry is $0.27$, so $q^\star(x,2)=1$; and for expert~$3$, the smallest entry is
$0.24$, so $q^\star(x,3)=2$. Once these query decisions are fixed, the router
does not compare the full rows anymore. It compares the best achievable cost of
each expert,
\[
    \bigl(
        \mathbb{E}[c_{1,q^\star(x,1)}\mid X=x],\,
        \mathbb{E}[c_{2,q^\star(x,2)}\mid X=x],\,
        \mathbb{E}[c_{3,q^\star(x,3)}\mid X=x]
    \bigr)
    =
    (0.32,\;0.27,\;0.24),
\]
and therefore selects
\[
    r^\star(x)=3.
\]
First determine, for each expert,
which advice makes that expert cheapest; then route to the expert whose
best-advised conditional cost is smallest.
\end{example}

\subsection{Proof of Lemma~\ref{lem:advice_cond}}\label{app:proof_advice_cond}

\advicecondlemma*
\begin{proof}
Once the expert is fixed, the only remaining decision is whether the gain from
revealing advice justifies its acquisition cost. The smallest-index tie-breaking
in Lemma~\ref{lem:bayes} is what converts the resulting strict comparison with
the no-advice option into an ``if and only if'' statement.

Fix an expert $j \in [J]$ and an input $x$. For every $k \in [K]_0$,
\[
    \mathbb{E}\bigl[c_{j,k}(X, A, Y) \mid X = x\bigr]
    = \mathbb{E}\bigl[\psi(e_j(X, \widetilde{A}^{(k)}), Y) \mid X = x\bigr]
    + \beta_j + \gamma_k.
\]
The expert fee $\beta_j$ does not depend on the advice index and therefore
plays no role in the comparison across $k$. For a fixed expert, the
smallest-index Bayes query of Lemma~\ref{lem:bayes} thus solves
\[
    q^\star(x, j)
    \;\in\;
    \argmin_{k \in [K]_0}\;
    \Bigl\{
        \mathbb{E}\bigl[\psi(e_j(X, \widetilde{A}^{(k)}), Y) \mid X = x\bigr]
        + \gamma_k
    \Bigr\}.
\]
The no-advice action corresponds to $k=0$ and has cost
\[
    \mathbb{E}\bigl[\psi(e_j(X, \widetilde{A}^{(0)}), Y) \mid X = x\bigr],
\]
because $\gamma_0=0$. The policy therefore strictly prefers a queried advice source
$k \geq 1$ to no advice exactly when
\[
    \mathbb{E}\bigl[\psi(e_j(X, \widetilde{A}^{(k)}), Y) \mid X = x\bigr]
    + \gamma_k
    \;<\;
    \mathbb{E}\bigl[\psi(e_j(X, \widetilde{A}^{(0)}), Y) \mid X = x\bigr],
\]
which rearranges to
\[
    \mathbb{E}\bigl[\psi(e_j(X, \widetilde{A}^{(0)}), Y) \mid X = x\bigr]
    - \mathbb{E}\bigl[\psi(e_j(X, \widetilde{A}^{(k)}), Y) \mid X = x\bigr]
    \;>\; \gamma_k.
\]
This is exactly condition~\eqref{eq:advice_cond}. Hence, if some $k \in [K]$
satisfies~\eqref{eq:advice_cond}, then that queried action has strictly smaller
conditional total cost than $k=0$, so the Bayes-optimal query cannot be $0$.
Conversely, if no queried action satisfies~\eqref{eq:advice_cond}, then every
$k \in [K]$ has conditional total cost at least as large as that of $k=0$.
Because ties are broken in favor of the smallest advice index and $0$ is the
smallest index in $[K]_0$, the selected Bayes query is then $q^\star(x,j)=0$.
This proves the lemma.
\end{proof}

\subsection{Proof of Lemma~\ref{lem:richer}}\label{app:proof_richer}

\richerlemma*
\begin{proof}
The intuition is simple. Deferral with advice enlarges the menu of admissible
actions, but it never removes the standard L2D action. For any fixed expert,
the best cost achievable with advice can therefore only improve upon, or match,
the unaided one. The content of the lemma is that this row-wise comparison is
exactly the one the Bayes router uses, so it lifts directly to the population
Bayes risk.

Fix an expert $j \in [J]$ and an input $x$ in a full-measure set on which the
relevant conditional expectations are defined. Since the null action $k=0$
belongs to $[K]_0$, minimizing over the larger advice set immediately gives
\begin{equation}\label{eq:advice_ub}
    \min_{k \in [K]_0}\;
    \mathbb{E}\bigl[c_{j,k}(X, A, Y) \mid X = x\bigr]
    \;\le\;
    \mathbb{E}\bigl[c_{j,0}(X, A, Y) \mid X = x\bigr].
\end{equation}
Expert
$j$ can always recover its unaided behavior by choosing $k=0$.

Because \eqref{eq:advice_ub} holds for every expert, it remains true after
taking the outer minimum over $j \in [J]$:
\[
    \min_{j \in [J]}\min_{k \in [K]_0}
    \mathbb{E}\bigl[c_{j,k}(X, A, Y) \mid X = x\bigr]
    \;\le\;
    \min_{j \in [J]}
    \mathbb{E}\bigl[c_{j,0}(X, A, Y) \mid X = x\bigr].
\]
The left-hand side is precisely the conditional quantity minimized by the Bayes
router of Lemma~\ref{lem:bayes}. On the right-hand side, the action $k=0$
coincides with standard Learning-to-Defer, so
$c_{j,0}(x,a,y)=c_j(x,y)$ and therefore
\[
    \mathbb{E}\bigl[c_{j,0}(X,A,Y)\mid X=x\bigr]
    =
    \mathbb{E}\bigl[c_j(X,Y)\mid X=x\bigr].
\]
Integrating over $X$ therefore yields
\[
    \mathcal{E}^\star_{\ell_{\mathrm{def\text{-}adv}}}
    =
    \mathbb{E}\Bigl[
        \min_{j \in [J]}\min_{k \in [K]_0}
        \mathbb{E}\bigl[c_{j,k}(X,A,Y)\mid X\bigr]
    \Bigr]
    \;\le\;
    \mathbb{E}\Bigl[
        \min_{j \in [J]}
        \mathbb{E}\bigl[c_j(X,Y)\mid X\bigr]
    \Bigr]
    =
    \mathcal{E}^\star_{\ell_{\mathrm{def}}},
\]
which proves~\eqref{eq:richer}.
\end{proof}

\begin{remark}[Equality condition in Lemma~\ref{lem:richer}]
\label{rem:richer_equality}
The equality case must be stated at the level of the \emph{best available}
expert, not expert by expert. Define
\[
    g_{\mathrm{adv}}(x)
    \coloneqq
    \min_{j \in [J]}\min_{k \in [K]_0}
    \mathbb{E}\bigl[c_{j,k}(X,A,Y)\mid X=x\bigr],
    \qquad
    g_{\mathrm{def}}(x)
    \coloneqq
    \min_{j \in [J]}
    \mathbb{E}\bigl[c_{j,0}(X,A,Y)\mid X=x\bigr].
\]
The first part of the proof shows that
$g_{\mathrm{adv}}(x)\le g_{\mathrm{def}}(x)$ for $\mathbb{P}_X$-a.e.\ $x$.
Equivalently, the gap
\[
    h(x)\coloneqq g_{\mathrm{def}}(x)-g_{\mathrm{adv}}(x)
\]
is nonnegative for $\mathbb{P}_X$-a.e.\ $x$. Equality in \eqref{eq:richer},
\[
    \mathbb{E}[g_{\mathrm{adv}}(X)]
    =
    \mathbb{E}[g_{\mathrm{def}}(X)],
\]
is therefore equivalent to $\mathbb{E}[h(X)]=0$. Since $h(X)\ge 0$ almost
surely, this happens if and only if
\[
    g_{\mathrm{adv}}(x)=g_{\mathrm{def}}(x)
    \qquad
    \text{for }\mathbb{P}_X\text{-a.e.\ }x.
\]
Advice may improve some non-selected experts without changing the Bayes risk.
What matters is whether it lowers the \emph{smallest} conditional cost
available at $x$ on a set of positive $\mathbb{P}_X$-measure.
\end{remark}

\begin{example}[Lemma~\ref{lem:richer} on the running table]
\label{ex:richer_running}
Return to Example~\ref{ex:running_proofs}. The conditional cost table is
\[
    \bar{\mathbf C}(x)
    =
    \begin{pmatrix}
        0.32 & 0.41 & 0.36 \\
        0.35 & 0.27 & 0.40 \\
        0.38 & 0.33 & 0.24
    \end{pmatrix}.
\]
The Bayes query function is obtained row by row:
\[
    q^\star(x,1)=0,
    \qquad
    q^\star(x,2)=1,
    \qquad
    q^\star(x,3)=2.
\]
This means that expert~$1$ should act without advice, expert~$2$ should receive
advice source~$1$, and expert~$3$ should receive advice source~$2$. The query
decision therefore acts \emph{within} each row: once the expert is fixed, it
keeps only the advice action that makes that expert cheapest. After this
expert-wise selection, the three rows reduce to the best-achievable cost vector
\[
    \bigl(
        \mathbb{E}[c_{1,q^\star(x,1)}\mid X=x],\,
        \mathbb{E}[c_{2,q^\star(x,2)}\mid X=x],\,
        \mathbb{E}[c_{3,q^\star(x,3)}\mid X=x]
    \bigr)
    =
    (0.32,\;0.27,\;0.24).
\]
The Bayes router therefore chooses
\[
    r^\star(x)=3,
\]
because expert~$3$ now has the smallest achievable conditional cost.

For comparison, standard Learning-to-Defer only sees the no-advice column,
\[
    \bigl(
        \mathbb{E}[c_{1,0}\mid X=x],\,
        \mathbb{E}[c_{2,0}\mid X=x],\,
        \mathbb{E}[c_{3,0}\mid X=x]
    \bigr)
    =
    (0.32,\;0.35,\;0.38),
\]
so its Bayes router would select expert~$1$. Denoting this standard router by
$r^\star_{\mathrm{def}}$, we therefore have
\[
    r^\star_{\mathrm{def}}(x)=1
    \qquad\text{whereas}\qquad
    r^\star(x)=3.
\]
The reason is exactly the one formalized in Lemma~\ref{lem:richer}: once the
advice menu is available, each row can only stay the same or improve, because
$k=0$ remains feasible. In this example, the no-advice comparison vector
\[
    (0.32,\;0.35,\;0.38)
\]
is replaced by the best-advised vector
\[
    (0.32,\;0.27,\;0.24).
\]
This new vector is coordinatewise no larger, and it is strictly smaller in the
second and third coordinates. What matters for the lemma is the smallest entry
of these two vectors, because that entry determines the Bayes decision at $x$.
Here the best achievable conditional cost drops from
\[
    g_{\mathrm{def}}(x)=0.32
    \qquad\text{to}\qquad
    g_{\mathrm{adv}}(x)=0.24.
\]
This is why the richer protocol changes the selected expert and strictly lowers
the Bayes-risk contribution at this input. Had advice improved only a
non-selected row while leaving the minimum at $0.32$, the overall Bayes risk
would have remained unchanged. This is precisely the distinction captured by
the equality condition in Lemma~\ref{lem:richer}.
\end{example}

\subsection{Proof of Theorem~\ref{prop:separated_bayes_inconsistency}}%
\label{app:proof_separated_bayes_inconsistency}

We begin from the exact binary
deferral-advice loss and show how the usual logistic replacement recipe leads
naturally to a separated router/query surrogate. We then identify the quantity
that this surrogate actually compares after the query heads are optimized out:
each expert row is compressed into a profiled summary
$F(c_{j,0},c_{j,1})$, not into its row minimum. The final step is to construct
a bounded cost table on which these two comparisons disagree. This shows that
the failure is not tied to one peculiar parameterization, but to the separated
architecture itself.

We work in the binary setting $J = 2$, $K = 1$ introduced in
Section~\ref{sec:separated}. The surrogate~\eqref{eq:separated} is
parameterized by a cost transform~$\nu$, query functions~$G$ and~$U$,
the logistic pair $\Phi_0$, $\Phi_1$, and router weights
$\Psi_1$, $\Psi_2$.

Let $s_r^b : \mathcal{X} \to \mathbb{R}$ be a binary router score and, for
each expert $j \in \{1,2\}$, let
$s_{q_j^b} : \mathcal{X} \to \mathbb{R}$ be a binary query score. Write
$\mathbf{s}_{q^b} = (s_{q_1^b}, s_{q_2^b})$. As in the main text, the decoded
decisions are
\[
    r^b(x)=1+\mathbf{1}\{s_r^b(x)\ge 0\},
    \qquad
    q_j^b(x)=\mathbf{1}\{s_{q_j^b}(x)\ge 0\},
\]
with $q^b(x,j)=q_j^b(x)$.

\subsubsection{Proof of Proposition~\ref{prop:natural_separated_appendix}}
\label{app:proof_natural_separated}

\begin{proposition}[Natural separated construction]
\label{prop:natural_separated_appendix}
For a realization $(x,a,y)$, the binary executed-pair loss admits the exact
expansion
\begin{equation}\label{eq:binary_true_appendix}
    \ell_{\mathrm{def\text{-}adv}}^b(r^b, q^b;\, x, a, y, \mathbf{e})
    :=
    \sum_{j=1}^{2}\sum_{k=0}^{1}
    c_{j,k}(x,a,y)\,
    \mathbf{1}\{r^b(x)=j\}\,
    \mathbf{1}\{q_j^b(x)=k\}.
\end{equation}
Assigning logistic surrogate potentials to the decoded binary outcomes in
the four terms of~\eqref{eq:binary_true_appendix} yields the basic separated surrogate
$\Phi_{\mathrm{def\text{-}adv}}^{b,r/q}(s_r^b,\mathbf{s}_{q^b})$.
\end{proposition}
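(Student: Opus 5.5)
The plan has two parts. First, verify the exact expansion~\eqref{eq:binary_true_appendix}. Second, perform the indicator-by-potential substitution term by term and check that it lands on the member of~\eqref{eq:separated} with $\nu=\mathrm{id}$, $G\equiv 1$, $U=\mathrm{id}$, $\Psi_j=\Phi_{j-1}$.

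\textbf{Exact expansion.} I specialize Definition~\ref{def:true_loss} to $J=2$, $K=1$. The general query map $q(x,j)$ is replaced by the binary heads $q^b(x,j)=q_j^b(x)$, and the router by $r^b(x)=1+\mathbf{1}\{s_r^b(x)\ge 0\}\in\{1,2\}$. Fix $(x,a,y)$. Exactly one $j$ has $\mathbf{1}\{r^b(x)=j\}=1$, and for that $j$ exactly one $k\in\{0,1\}$ has $\mathbf{1}\{q_j^b(x)=k\}=1$. Hence the double sum has a single nonzero term, namely $c_{r^b(x),\,q^b_{r^b(x)}(x)}(x,a,y)$. This is the executed-pair cost that the loss of Definition~\ref{def:true_loss} charges, so~\eqref{eq:binary_true_appendix} is an identity and not a bound. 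The counterfactual head $q^b_{j'}$ with $j'\neq r^b(x)$ is multiplied by a zero router indicator, so it does not contribute.

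\textbf{Substitution.} I fix the sign convention once, using the binary decoding:
\[
\mathbf{1}\{r^b=2\}=\mathbf{1}\{s_r^b\ge 0\},\qquad \mathbf{1}\{r^b=1\}=\mathbf{1}\{s_r^b<0\},
\]
and likewise $\mathbf{1}\{q_j^b=1\}=\mathbf{1}\{s_{q_j^b}\ge 0\}$. Since $\Phi_1(u)=\log(1+e^{u})\ge\log 2$ for $u\ge 0$ and $\Phi_0(u)=\log(1+e^{-u})\ge \log 2$ for $u\le 0$, the convex potential $\Phi_m$ dominates $\log 2$ times the indicator of decoded outcome $m$. This gives
\[
\mathbf{1}\{r^b=j\}\le \Phi_{j-1}(s_r^b)/\log 2,\qquad \mathbf{1}\{q_j^b=k\}\le \Phi_k(s_{q_j^b})/\log 2 .
\]
This is the standard cost-sensitive recipe of \citet{cortes, cortes2024theory}: keep the nonnegative cost $c_{j,k}$ as a weight and replace each decoded-outcome indicator by its logistic potential.

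\textbf{Resulting surrogate.} Replacing both indicators in each of the four terms of~\eqref{eq:binary_true_appendix} defines
\[
\Phi^{b,r/q}_{\mathrm{def\text{-}adv}}(s_r^b,\mathbf{s}_{q^b})
=\sum_{j=1}^2\sum_{k=0}^1 \Phi_{j-1}(s_r^b(x))\,c_{j,k}(x,a,y)\,\Phi_k(s_{q_j^b}(x)).
\]
Because the costs and the potentials are nonnegative, multiplying the two indicator bounds termwise gives the upper bound
\[
\ell^b_{\mathrm{def\text{-}adv}}\le(\log 2)^{-2}\,\Phi^{b,r/q}_{\mathrm{def\text{-}adv}}.
\]
Matching this expression term by term with~\eqref{eq:separated} identifies the parameters: $\Psi_j=\Phi_{j-1}$, $\nu(c)=c$ (monotone on $[0,C]$), $G\equiv 1$ (positive), and $U=\mathrm{id}$ (strictly increasing). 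So the basic construction is indeed a member of the family.

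The only delicate point I expect is bookkeeping of the sign convention. The pairing $\Psi_j=\Phi_{j-1}$ must be matched to $r^b=1+\mathbf{1}\{s_r^b\ge0\}$, and the pairing of $\Phi_k$ with $q_j^b=\mathbf{1}\{s_{q_j^b}\ge 0\}$ must be checked the same way. If either pairing is reversed, the surrogate would reward the wrong decoded outcome. I will settle this with the $\log 2$ domination check above before writing the final formula.
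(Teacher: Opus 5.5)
Your proposal is correct and follows essentially the same route as the paper: write the four executed-pair terms using the sign decodings of $s_r^b$ and $s_{q_j^b}$, then replace each decoded-outcome indicator by its logistic potential, which gives exactly $\sum_{j}\Phi_{j-1}(s_r^b(x))\bigl[c_{j,0}\Phi_0(s_{q_j^b}(x))+c_{j,1}\Phi_1(s_{q_j^b}(x))\bigr]$. Your $\log 2$ domination check, which yields $\ell^b_{\mathrm{def\text{-}adv}}\le(\log 2)^{-2}\Phi^{b,r/q}_{\mathrm{def\text{-}adv}}$, is a correct extra confirmation of the sign pairing that the paper does not spell out; it is not needed, since the statement is essentially a construction.
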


\begin{proof}
This proposition records the most direct surrogate construction suggested by
the protocol itself. In the binary setting there are only four executed
outcomes: route to expert~1 or expert~2, and for the selected expert either do
not query or query the single advice source. Writing these four cases out
explicitly makes it clear how the usual logistic replacement enters.

Fix $(x,a,y)$. Since exactly one expert and one binary advice decision are
executed, expanding the two sums in~\eqref{eq:binary_true_appendix} gives
\begin{align}
    \ell_{\mathrm{def\text{-}adv}}^b(r^b, q^b;\, x, a, y, \mathbf{e})
    &=
    c_{1,0}(x,a,y)\,
    \mathbf{1}\{s_r^b(x)<0\}\,
    \mathbf{1}\{s_{q_1^b}(x)<0\}
    \notag\\
    &\quad+
    c_{1,1}(x,a,y)\,
    \mathbf{1}\{s_r^b(x)<0\}\,
    \mathbf{1}\{s_{q_1^b}(x)\ge 0\}
    \notag\\
    &\quad+
    c_{2,0}(x,a,y)\,
    \mathbf{1}\{s_r^b(x)\ge 0\}\,
    \mathbf{1}\{s_{q_2^b}(x)<0\}
    \notag\\
    &\quad+
    c_{2,1}(x,a,y)\,
    \mathbf{1}\{s_r^b(x)\ge 0\}\,
    \mathbf{1}\{s_{q_2^b}(x)\ge 0\}.
    \label{eq:binary_expansion_appendix}
\end{align}
This is the exact executed-pair loss written in terms of the decoded binary
scores.

Assigning logistic potentials to the decoded binary outcomes yields
\[
    \Phi_{\mathrm{def\text{-}adv}}^{b,r/q}(s_r^b,\mathbf{s}_{q^b})
    :=
    \sum_{j=1}^{2}
    \Phi_{j-1}(s_r^b(x))
    \Bigl[
        c_{j,0}(x,a,y)\,\Phi_0(s_{q_j^b}(x))
        +
        c_{j,1}(x,a,y)\,\Phi_1(s_{q_j^b}(x))
    \Bigr].
\]
This is exactly the basic logistic construction described in
Section~\ref{sec:separated}. This proves the natural separated construction.
\end{proof}

\paragraph{Regularity conditions for the larger separated family.} Equation
\eqref{eq:separated} extends the basic construction by allowing monotone cost
transforms, query reparameterizations, and flexible router weights. We impose
the following conditions. They are not designed to make the counterexample
artificially strong. They simply capture the regularity one would naturally ask
of a well-behaved separated surrogate.

\begin{enumerate}[label=\textup{(N\arabic*)}, leftmargin=2.5em, nosep]
    \item\label{cond:N1} \textit{Cost transform:}
    $\nu : [0, C] \to [0, \infty)$ is continuous, differentiable on $(0, C)$,
    and strictly increasing there, so $\nu'(c) > 0$ for $c \in (0, C)$.
\end{enumerate}
\begin{enumerate}[label=\textup{(Q\arabic*)}, leftmargin=2.5em, nosep]
    \item\label{cond:Q1} \textit{Amplitude and reparameterization:}
    $G : \mathbb{R} \to (0, \infty)$ is continuous and strictly positive;
    $U : \mathbb{R} \to \mathbb{R}$ is continuous, strictly increasing,
    and surjective.
    \item\label{cond:Q2} \textit{Well-posedness:} for every
    $(u, v) \in (0, C]^2$,
    $t \mapsto \nu(u)\, G(t)\, \Phi_0(U(t)) + \nu(v)\, G(t)\, \Phi_1(U(t))$
    is coercive and strictly convex with a unique minimizer
    $t^\star(u, v)$, and the map $(u,v)\mapsto t^\star(u,v)$ is continuous on
    $(0,C]^2$.
\end{enumerate}
\begin{enumerate}[label=\textup{(P\arabic*)}, leftmargin=2.5em, nosep]
    \item\label{cond:P1} \textit{Router weights:}
    $\Psi_1, \Psi_2 \in C^2(\mathbb{R})$ with $\Psi_j > 0$,
    $\Psi_1' < 0$, $\Psi_2' > 0$, $\Psi_j'' > 0$.
    \item\label{cond:P2} \textit{Routing ratio:}
    $\rho(r) \coloneqq -\Psi_1'(r) / \Psi_2'(r)$ is a strictly
    decreasing bijection $\mathbb{R} \to (0, +\infty)$ with $\rho(0) = 1$.
\end{enumerate}
These conditions are satisfied, for instance, by the standard logistic
(cross-entropy) surrogate with softmax router weights on the strictly positive
cost regime used in the theorem below.

\paragraph{Containment of the traditional surrogate.}
The familiar construction $\Phi_{\mathrm{def\text{-}adv}}^{b,r/q}$ is recovered by taking
$\nu(c)=c$, $G(q)\equiv 1$, and $U(q)=q$, together with the usual logistic
router weights. The theorem proved below therefore applies to a broad family of
positive-cost separated models
that still preserves the same separated router/query architecture. The next
lemmas isolate the precise object this architecture optimizes.

\paragraph{The expert summary.}
The next three lemmas formalize the quantity introduced in the main text. They
show that after profiling over the query score of expert $j$, the entire row
$(c_{j,0}(x,a,y),c_{j,1}(x,a,y))$ is compressed into the scalar summary
$F(c_{j,0}(x,a,y),c_{j,1}(x,a,y))$. The key point is that this summary remains
strictly sensitive to the non-minimal entry of the row. This is precisely where
the surrogate departs from the Bayes comparison.

\begin{definition}[Expert summary]
\label{def:expert_summary_appendix}
For $(u, v) \in [0, C]^2$, define
\begin{equation}\label{eq:row_summary}
    F(u, v)
    \;\coloneqq\;
    \inf_{t \in \mathbb{R}}\;
    \bigl[\nu(u)\, G(t)\, \Phi_0(U(t))
    + \nu(v)\, G(t)\, \Phi_1(U(t))\bigr].
\end{equation}
\end{definition}

\subsubsection{Proof of Lemma~\ref{lem:expert_summary_exists}}
\label{app:proof_expert_summary_exists}

\begin{lemma}[Well-defined expert summary]
\label{lem:expert_summary_exists}
Under~\ref{cond:N1}--\ref{cond:Q2}, for every $(u,v)\in(0,C]^2$,
the infimum in~\eqref{eq:row_summary} is attained at a unique point
$t^\star(u,v)\in\mathbb R$.
\end{lemma}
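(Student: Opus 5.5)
The lemma follows almost directly from condition~\ref{cond:Q2}, so the plan is mainly bookkeeping. Fix $(u,v)\in(0,C]^2$ and set
\[
    h_{u,v}(t)\coloneqq \nu(u)\,G(t)\,\Phi_0(U(t))+\nu(v)\,G(t)\,\Phi_1(U(t)),
\]
so that $F(u,v)=\inf_{t\in\mathbb R}h_{u,v}(t)$. I will show that the infimum is finite, that it is attained, and that the attaining point is unique.

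\emph{Finiteness and continuity.} First I check that $h_{u,v}$ is a well-defined, finite, continuous function on $\mathbb R$.
\begin{itemize}
  \item By~\ref{cond:N1}, $\nu$ is strictly increasing on $(0,C)$ and nonnegative, and $u,v>0$, so $\nu(u),\nu(v)\in[0,\infty)$. (They are in fact strictly positive when $\nu(0)\ge 0$, though only finiteness is needed here.)
  \item By~\ref{cond:Q1}, $G$ and $U$ are continuous and $G>0$.
  \item $\Phi_0(s)=\log(1+e^{-s})$ and $\Phi_1(s)=\log(1+e^{s})$ are continuous and strictly positive.
\end{itemize}
Hence $h_{u,v}$ is continuous, takes values in $[0,\infty)$, and $F(u,v)\in[0,\infty)$ is finite.

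\emph{Existence.} Coercivity from~\ref{cond:Q2} gives $h_{u,v}(t)\to\infty$ as $|t|\to\infty$. Take $M\coloneqq h_{u,v}(0)$. The sublevel set $\{t: h_{u,v}(t)\le M\}$ is closed by continuity, bounded by coercivity, and nonempty since it contains $0$. So it is compact. By Weierstrass, $h_{u,v}$ attains its minimum on this set. That minimum is the global infimum, because every point outside the set has value greater than $M\ge \min$.

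\emph{Uniqueness.} Suppose $t_1\neq t_2$ both attain the minimum. Strict convexity from~\ref{cond:Q2} gives
\[
    h_{u,v}\bigl(\tfrac{t_1+t_2}{2}\bigr)<\tfrac12\bigl(h_{u,v}(t_1)+h_{u,v}(t_2)\bigr)=F(u,v),
\]
which is impossible. So the minimizer is unique; call it $t^\star(u,v)$.

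Condition~\ref{cond:Q2} already asserts the existence of a unique minimizer, so one could simply cite it. The argument above only makes explicit that the assumptions hang together. The one delicate point is that coercivity must be understood as $h_{u,v}(t)\to\infty$ as $|t|\to\infty$. If the condition were read more weakly, I would instead rely on the stated existence of $t^\star(u,v)$ in~\ref{cond:Q2}.
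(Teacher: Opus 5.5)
Your proof is correct and follows the same route as the paper: coercivity from~\ref{cond:Q2} gives a compact sublevel set and hence a global minimizer, and strict convexity gives its uniqueness. Your explicit continuity check (via~\ref{cond:N1}, \ref{cond:Q1} and the logistic potentials) supplies the closedness of the sublevel set, which the paper's one-line ``coercivity implies every sublevel set is compact'' uses without stating.
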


\begin{proof}
Fix $(u,v)\in(0,C]^2$ and define
\[
    f(t;u,v)
    :=
    \nu(u)\, G(t)\, \Phi_0(U(t))
    +
    \nu(v)\, G(t)\, \Phi_1(U(t)).
\]
By~\ref{cond:Q2}, $f(\cdot;u,v)$ is coercive and strictly convex on
$\mathbb R$. Coercivity implies that every sublevel set is compact, hence a
global minimizer exists. Strict convexity then implies that this minimizer is
unique.
\end{proof}

\subsubsection{Proof of Lemma~\ref{lem:expert_summary_profile}}
\label{app:proof_expert_summary_profile}

\begin{lemma}[Expert summary after profiling]
\label{lem:expert_summary_profile}
Under~\ref{cond:P1} and~\ref{cond:Q2}, for every fixed
$s_r^b : \mathcal{X} \to \mathbb{R}$ and every realization $(x,a,y)$ whose
four binary costs lie in $(0,C]$,
\begin{equation}
\label{eq:profiled_objective_appendix}
    \begin{aligned}
        \inf_{s_{q_1^b},\, s_{q_2^b}}
    \Phi_{\mathrm{def\text{-}adv}}^{\nu,r/q}
    (s_r^b, \mathbf{s}_{q^b})
    & =
    \Psi_1(s_r^b(x))\, F(c_{1,0}(x,a,y), c_{1,1}(x,a,y)) \\
    & +
    \Psi_2(s_r^b(x))\, F(c_{2,0}(x,a,y), c_{2,1}(x,a,y)).
        \end{aligned}
\end{equation}
The profiled query minimizers are
$t_j^\star=t^\star(c_{j,0}(x,a,y),c_{j,1}(x,a,y))$, hence they are
independent of the router score $s_r^b$.
\end{lemma}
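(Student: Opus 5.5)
The plan is to exploit the additive separation of the surrogate across the two query heads. For a fixed router score $r = s_r^b(x)$, the surrogate \eqref{eq:separated} evaluated at $(x,a,y)$ can be written as
\[
    \Psi_1(r)\, f\bigl(s_{q_1^b}(x);\, c_{1,0}, c_{1,1}\bigr)
    +
    \Psi_2(r)\, f\bigl(s_{q_2^b}(x);\, c_{2,0}, c_{2,1}\bigr),
\]
where $f(t;u,v) = \nu(u)G(t)\Phi_0(U(t)) + \nu(v)G(t)\Phi_1(U(t))$ is the function used in Lemma~\ref{lem:expert_summary_exists}, and the costs are evaluated at $(x,a,y)$. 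This rearrangement is just the regrouping of the double sum by expert $j$: the factor $\Psi_j(s_r^b(x))$ does not depend on $k$, and the query factor for expert $j$ involves only $s_{q_j^b}$.

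Key step: the two heads enter disjoint summands, and these summands are multiplied by the router weights, which are strictly positive by \ref{cond:P1}. Hence the joint infimum over $(s_{q_1^b}(x), s_{q_2^b}(x)) \in \mathbb{R}^2$ splits into two separate infima. Pulling out the positive constant gives, for each $j$,
\[
    \inf_t \Psi_j(r)\, f(t;\cdot) = \Psi_j(r)\inf_t f(t;\cdot) = \Psi_j(r)\, F(c_{j,0}, c_{j,1}).
\]
Summing the two terms yields \eqref{eq:profiled_objective_appendix}.

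For the attainment claim, I will invoke \ref{cond:Q2}, directly or through Lemma~\ref{lem:expert_summary_exists}: since the costs lie in $(0,C]$, each $f(\cdot;c_{j,0},c_{j,1})$ has a unique minimizer $t^\star(c_{j,0},c_{j,1})$. Multiplying by $\Psi_j(r) > 0$ preserves the set of minimizers, so the profiled minimizers are $t_j^\star = t^\star(c_{j,0},c_{j,1})$. These depend only on the row costs and not on $r$. Since the problem is pointwise, the heads are chosen as values at the fixed $x$; the infimum over functions equals the infimum over values because any value is realizable, for instance by a constant function.

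The only point needing care, which is mild, is justifying the interchange of the infimum with the positive scaling and the sum. This holds because the variables decouple and $\Psi_j > 0$; without positivity the infimum could flip to a supremum or degenerate.
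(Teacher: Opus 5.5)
Your proposal is correct and follows essentially the same route as the paper: regroup the surrogate by expert, observe that each query score enters only its own summand, use $\Psi_j>0$ from~\ref{cond:P1} to pull the router weights through the two separate infima, and read off the unique, router-independent profiled minimizers from~\ref{cond:Q2}. Your remark that the infimum over functions reduces to an infimum over values at the fixed $x$ (each value being realizable, e.g., by a constant function) is a small point the paper leaves implicit.
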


\begin{proof}
Fix $(x,a,y)$ and write $t_r=s_r^b(x)$,
$t_1=s_{q_1^b}(x)$, and $t_2=s_{q_2^b}(x)$. By~\eqref{eq:separated},
\begin{align*}
    \Phi_{\mathrm{def\text{-}adv}}^{\nu,r/q}
    (s_r^b, \mathbf{s}_{q^b})
    &=
    \Psi_1(t_r)
    \Bigl[
        \nu(c_{1,0}(x,a,y))\, G(t_1)\,\Phi_0(U(t_1))
        \\
        &\hspace{4.6em}
        +
        \nu(c_{1,1}(x,a,y))\, G(t_1)\,\Phi_1(U(t_1))
    \Bigr]
    \\
    &\quad+
    \Psi_2(t_r)
    \Bigl[
        \nu(c_{2,0}(x,a,y))\, G(t_2)\,\Phi_0(U(t_2))
        \\
        &\hspace{4.6em}
        +
        \nu(c_{2,1}(x,a,y))\, G(t_2)\,\Phi_1(U(t_2))
    \Bigr].
\end{align*}
The score $t_1$ appears only in the first bracket and $t_2$ appears only in the
second. Because $\Psi_1(t_r)$ and $\Psi_2(t_r)$ are strictly positive by
\ref{cond:P1}, they factor through the infima over $t_1$ and $t_2$. Therefore,
\begin{align*}
    &\inf_{s_{q_1^b},\, s_{q_2^b}}
    \Phi_{\mathrm{def\text{-}adv}}^{\nu,r/q}
    (s_r^b, \mathbf{s}_{q^b})
    \\
    &=
    \Psi_1(t_r)\,
    \inf_{t_1\in\mathbb R}
    \Bigl[
        \nu(c_{1,0}(x,a,y))\, G(t_1)\,\Phi_0(U(t_1))
        +
        \nu(c_{1,1}(x,a,y))\, G(t_1)\,\Phi_1(U(t_1))
    \Bigr]
    \\
    &\quad+
    \Psi_2(t_r)\,
    \inf_{t_2\in\mathbb R}
    \Bigl[
        \nu(c_{2,0}(x,a,y))\, G(t_2)\,\Phi_0(U(t_2))
        +
        \nu(c_{2,1}(x,a,y))\, G(t_2)\,\Phi_1(U(t_2))
    \Bigr].
\end{align*}
Each infimum is exactly an instance of the expert summary
\eqref{eq:row_summary}. This gives~\eqref{eq:profiled_objective_appendix}. The
characterization of the minimizers follows directly from
Lemma~\ref{lem:expert_summary_exists}.
\end{proof}

\subsubsection{Proof of Lemma~\ref{lem:expert_summary_monotone}}
\label{app:proof_expert_summary_monotone}

\begin{lemma}[Continuity and strict monotonicity of the expert summary]
\label{lem:expert_summary_monotone}
Under~\ref{cond:N1}--\ref{cond:Q2}, the map $F:(0,C]^2\to(0,\infty)$ is
continuous. Moreover, for every fixed $u\in(0,C)$, the map
$v\mapsto F(u,v)$ is differentiable on $(0,C)$ and satisfies
\begin{equation}
\label{eq:danskin_appendix}
    \frac{\partial F}{\partial v}(u,v)
    =
    \nu'(v)\, G(t^\star(u,v))\, \Phi_1(U(t^\star(u,v)))
    > 0.
\end{equation}
In particular, for every fixed $u\in(0,C)$, the function
$v\mapsto F(u,v)$ is strictly increasing on $(0,C)$.
\end{lemma}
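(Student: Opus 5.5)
The plan is to treat $F$ as a parametric minimum, $F(u,v)=\min_t f(t;u,v)$ with
\[
f(t;u,v)=\nu(u)\,G(t)\,\Phi_0(U(t))+\nu(v)\,G(t)\,\Phi_1(U(t)),
\]
and to use the unique, continuous minimizer $t^\star(u,v)$ from Lemma~\ref{lem:expert_summary_exists} and \ref{cond:Q2}. Attainment gives $F(u,v)=f(t^\star(u,v);u,v)$. The map $(t,u,v)\mapsto f(t;u,v)$ is jointly continuous, since $\nu$, $G$, $U$ are continuous and the logistic potentials $\Phi_0,\Phi_1$ are continuous. Composing with the continuous $t^\star$ then yields continuity of $F$ on $(0,C]^2$. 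Positivity follows because $G>0$, $\Phi_0,\Phi_1>0$, and $\nu(u),\nu(v)\ge 0$ are not both zero on $(0,C]$: $\nu$ is strictly increasing on $(0,C)$ and continuous, so $\nu>\nu(0)\ge 0$ there and at $C$.

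For the derivative, fix $u\in(0,C)$ and $v\in(0,C)$, and write $t_v=t^\star(u,v)$. I will use a Danskin-type sandwich rather than an implicit-function argument, since no differentiability of $G$, $U$ or $t^\star$ is assumed. For $h$ small, optimality of $t_{v+h}$ and $t_v$ gives
\[
f(t_{v+h};u,v+h)-f(t_{v+h};u,v)\;\le\;F(u,v+h)-F(u,v)\;\le\;f(t_v;u,v+h)-f(t_v;u,v).
\]
Both outer differences equal $(\nu(v+h)-\nu(v))\,G(t)\,\Phi_1(U(t))$, evaluated at $t=t_{v+h}$ and at $t=t_v$ respectively. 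Dividing by $h$ (the inequalities flip for $h<0$, but the limits are the same) and letting $h\to0$, differentiability of $\nu$ at $v$ handles the first factor. Continuity of $t^\star$ gives $t_{v+h}\to t_v$, and then continuity of $G$, $U$ and $\Phi_1$ shows both bounds converge to $\nu'(v)\,G(t_v)\,\Phi_1(U(t_v))$. This yields \eqref{eq:danskin_appendix}.

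Strict positivity of this derivative comes from $\nu'(v)>0$ by \ref{cond:N1}, $G>0$ by \ref{cond:Q1}, and $\Phi_1(z)=\log(1+e^{z})>0$ for all $z$. Strict monotonicity then follows from the mean value theorem, since $v\mapsto F(u,v)$ is differentiable with positive derivative on $(0,C)$.

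The main obstacle is the differentiation step, specifically justifying that the lower bound's evaluation point converges. This is exactly where continuity of $t^\star$ from \ref{cond:Q2} is essential. If that assumption were unavailable, I would fall back on coercivity, which is locally uniform in $(u,v)$, together with uniqueness of the minimizer to derive continuity of $t^\star$ by a compactness argument.
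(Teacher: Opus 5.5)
Your proposal is correct and follows the paper's proof: continuity from $F(u,v)=f(t^\star(u,v);u,v)$ with $t^\star$ continuous by \ref{cond:Q2}, the envelope formula for $\partial F/\partial v$, and strict positivity of each factor. The only difference is that the paper simply invokes Danskin's theorem, whereas your two-sided sandwich proves that step directly from the continuity of $t^\star$, which is exactly what justifies the Danskin-type formula when the minimization is over the noncompact set $\mathbb{R}$.
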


\begin{proof}
Define
\[
    f(t;u,v)
    :=
    \nu(u)\, G(t)\, \Phi_0(U(t))
    +
    \nu(v)\, G(t)\, \Phi_1(U(t)).
\]
By~\ref{cond:N1} and~\ref{cond:Q1}, the map $(t,u,v)\mapsto f(t;u,v)$ is
continuous. By Lemma~\ref{lem:expert_summary_exists}, each pair
$(u,v)\in(0,C]^2$ has a unique minimizer $t^\star(u,v)$, and
the continuity of $(u,v)\mapsto t^\star(u,v)$ is part of
\ref{cond:Q2}. Hence
\[
    F(u,v)=f(t^\star(u,v);u,v)
\]
is continuous. Since $\nu(u)>0$ and $\nu(v)>0$ on $(0,C]$, and since
$G,\Phi_0,\Phi_1$ are strictly positive, the attained value is also strictly
positive.

Now fix $u\in(0,C)$ and $v\in(0,C)$. Since the minimizer is unique, Danskin's
theorem applies to the function
$v\mapsto\inf_{t\in\mathbb R} f(t;u,v)=F(u,v)$ and yields
\[
    \frac{\partial F}{\partial v}(u,v)
    =
    \frac{\partial}{\partial v}f(t;u,v)\Big|_{t=t^\star(u,v)}.
\]
Differentiating $f$ with respect to $v$ gives
\[
    \frac{\partial}{\partial v}f(t;u,v)
    =
    \nu'(v)\, G(t)\, \Phi_1(U(t)).
\]
Substituting $t=t^\star(u,v)$ proves~\eqref{eq:danskin_appendix}. Every factor
on the right-hand side is strictly positive:
$\nu'(v)>0$ by~\ref{cond:N1},
$G(t^\star(u,v))>0$ by~\ref{cond:Q1}, and
$\Phi_1(z)=\log(1+e^z)>0$ for every $z\in\mathbb R$. Therefore
$\partial F/\partial v(u,v)>0$, so $F(u,\cdot)$ is strictly increasing.
\end{proof}

Lemmas~\ref{lem:expert_summary_profile} and
\ref{lem:expert_summary_monotone} are the structural core of the impossibility
result. Once the query scores are profiled out, expert $j$ no longer appears
through its row minimum. It appears through the scalar summary
$F(c_{j,0}(x,a,y),c_{j,1}(x,a,y))$, and this summary depends strictly on the
non-minimal entry. Bayes and the surrogate are therefore comparing different
objects.

\subsubsection{Proof of Lemma~\ref{lem:router_minimizer_appendix}}
\label{app:proof_router_minimizer}

\begin{lemma}[Router minimizer]
\label{lem:router_minimizer_appendix}
Under~\ref{cond:P1}--\ref{cond:P2}, for any $A, B > 0$ the function
$t \mapsto A\, \Psi_1(t) + B\, \Psi_2(t)$ has a unique minimizer
$t_r^\star(A, B)$ satisfying
\[
    A > B \;\Longrightarrow\; t_r^\star > 0,
    \qquad
    A = B \;\Longrightarrow\; t_r^\star = 0,
    \qquad
    A < B \;\Longrightarrow\; t_r^\star < 0.
\]
That is, the router selects the expert with the \emph{smaller} profiled
summary.
\end{lemma}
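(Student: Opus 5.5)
The plan is to study $h(t)\coloneqq A\,\Psi_1(t)+B\,\Psi_2(t)$ through its first two derivatives and to reduce the sign of its unique critical point to a comparison of $A/B$ with $\rho(0)=1$.

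\emph{Strict convexity and uniqueness.} By~\ref{cond:P1}, $h''(t)=A\Psi_1''(t)+B\Psi_2''(t)>0$ because $A,B>0$. Hence $h$ is strictly convex and $h'$ is strictly increasing, so $h$ has at most one minimizer, and any critical point is the global minimizer.

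\emph{Existence of a critical point.} The equation $h'(t)=A\Psi_1'(t)+B\Psi_2'(t)=0$ is, after dividing by $B\Psi_2'(t)>0$, equivalent to
\[
  \rho(t) \;=\; -\frac{\Psi_1'(t)}{\Psi_2'(t)} \;=\; \frac{B}{A}.
\]
By~\ref{cond:P2}, $\rho$ is a bijection from $\mathbb R$ onto $(0,\infty)$, and $B/A\in(0,\infty)$. So there is exactly one solution $t_r^\star(A,B)=\rho^{-1}(B/A)$, and it is the unique minimizer. I expect this existence step to be the only real obstacle, because convexity alone does not force a minimizer on $\mathbb R$; surjectivity of $\rho$ is the hypothesis that supplies it. The sign of $h'$ also confirms that $t_r^\star$ is a minimum. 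For $t<t_r^\star$ we have $\rho(t)>B/A$, so $A\rho(t)>B$, which gives $-A\Psi_1'(t)>B\Psi_2'(t)$ and hence $h'(t)<0$. The reverse inequalities give $h'(t)>0$ for $t>t_r^\star$.

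\emph{Sign of $t_r^\star$.} Since $\rho$ is strictly decreasing with $\rho(0)=1$, we have $\rho^{-1}(s)>0$ if and only if $s<1$, $\rho^{-1}(s)=0$ if and only if $s=1$, and $\rho^{-1}(s)<0$ if and only if $s>1$. Taking $s=B/A$ gives the three implications: $A>B$ gives $t_r^\star>0$, $A=B$ gives $t_r^\star=0$, and $A<B$ gives $t_r^\star<0$.

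\emph{Interpretation.} Under the decoding $r^b=1+\mathbf 1\{s_r^b\ge 0\}$, the case $A>B$ gives $t_r^\star>0$ and so selects expert~$2$, which is the one with the smaller summary $B$. Likewise $A<B$ selects expert~$1$.
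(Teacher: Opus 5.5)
Your proposal is correct and follows essentially the same route as the paper: strict convexity from~\ref{cond:P1}, the first-order condition rewritten as $\rho(t_r^\star)=B/A$, the bijectivity in~\ref{cond:P2} for existence and uniqueness of the critical point, and monotonicity of $\rho$ with $\rho(0)=1$ for the sign. Your extra check of the sign of $h'$ on either side of $t_r^\star$ is redundant given convexity, but it is harmless.
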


\begin{proof}
Strict convexity follows from~\ref{cond:P1}, since
\[
    \frac{d^2}{dt^2}\bigl[A\,\Psi_1(t)+B\,\Psi_2(t)\bigr]
    =
    A\,\Psi_1''(t)+B\,\Psi_2''(t)
    >
    0.
\]
Hence there is at most one minimizer. Differentiating the objective and setting
the derivative to zero yields
\[
    A\,\Psi_1'(t_r^\star)+B\,\Psi_2'(t_r^\star)=0,
\]
or equivalently
\[
    -\frac{\Psi_1'(t_r^\star)}{\Psi_2'(t_r^\star)}
    =
    \frac{B}{A}.
\]
The left-hand side is $\rho(t_r^\star)$, which is a strictly decreasing
bijection from $\mathbb R$ to $(0,\infty)$ by~\ref{cond:P2}; since
$B/A\in(0,\infty)$, this FOC admits a unique solution, and by strict convexity
any critical point of a $C^2$ strictly convex function is its global minimizer.
Since $\rho(0)=1$, we obtain
\[
    \frac{B}{A}<1 \iff t_r^\star(A,B)>0,
    \qquad
    \frac{B}{A}=1 \iff t_r^\star(A,B)=0,
    \qquad
    \frac{B}{A}>1 \iff t_r^\star(A,B)<0.
\]
This is exactly the stated sign characterization.
\end{proof}

\subsubsection{Completion of the proof of Theorem~\ref{prop:separated_bayes_inconsistency}}

\separatedbayesinconsistency*
\begin{proof}
Fix $b \in (0, C)$ and $\varepsilon \in (0, C - b)$. We will construct a
bounded cost table on which Bayes and the separated surrogate rank the two
experts in opposite orders. The reason this can happen is the one identified in
the main text: Bayes compares row minima, whereas the separated surrogate
compares profiled row summaries.

Because $C>b+\varepsilon$ and $v\mapsto F(b,v)$ is strictly increasing by
Lemma~\ref{lem:expert_summary_monotone},
\begin{equation}\label{eq:endpoint_gap_appendix}
    F(b, C) > F(b, b + \varepsilon).
\end{equation}
The profile of the first row is therefore already larger than that of the
second row when both rows have the same minimum value $b$. Since
$(u,v)\mapsto F(u,v)$ is continuous, we can perturb the first row minimum
slightly downward without destroying this strict inequality. More precisely,
there exists $\delta \in (0,b)$ such that
\begin{equation}\label{eq:perturbed_gap_appendix}
    F(b - \delta, C) > F(b, b + \varepsilon).
\end{equation}

Now consider the cost table~\eqref{eq:bad_table}. Its row minima are
\[
    \min\{b-\delta,C\}=b-\delta,
    \qquad
    \min\{b,b+\varepsilon\}=b.
\]
Since $\delta>0$, Bayes compares these two values and selects expert~$1$.

The separated surrogate behaves differently. By
Lemma~\ref{lem:expert_summary_profile}, profiling out the two query scores
reduces the pointwise objective to
\[
    \widetilde{\Phi}_{\mathrm{def\text{-}adv}}^{\nu,r/q}(t)
    =
    F(b - \delta, C)\,\Psi_1(t)
    +
    F(b, b + \varepsilon)\,\Psi_2(t).
\]
Define
\[
    A:=F(b - \delta, C),
    \qquad
    B:=F(b, b + \varepsilon).
\]
By~\eqref{eq:perturbed_gap_appendix}, we have $A>B$. Lemma~\ref{lem:router_minimizer_appendix}
then implies that the profiled router objective has a unique minimizer
$t_r^\star(A,B)>0$. Under the decoding rule
\[
    r^b(x)=1+\mathbf{1}\{s_r^b(x)\ge 0\},
\]
this router minimizer decodes to expert~$2$.

To conclude, note that the query minimizers are also unique: for each row,
Lemma~\ref{lem:expert_summary_exists} gives a unique minimizer
$t^\star(c_{j,0},c_{j,1})$, and Lemma~\ref{lem:expert_summary_profile} shows
that these minimizers are independent of the router score. Hence the full
pointwise surrogate has a unique minimizer
\[
    \bigl(t_r^\star(A,B),\, t^\star(b-\delta,C),\, t^\star(b,b+\varepsilon)\bigr),
\]
and this minimizer decodes to expert~$2$ even though Bayes selects expert~$1$.
Thus the separated surrogate fails the defining pointwise criterion for Bayes
consistency.
\end{proof}

The counterexample already appears within a bounded cost table, so the failure
does not rely on any singular limiting construction. Nor does it disappear once
the surrogate family is made more flexible: the contradiction persists under
monotone cost transforms, positive query amplitudes, smooth
reparameterizations, and well-behaved router weights. The obstruction is more
basic. The separated architecture compresses each row before routing, whereas
the Bayes rule compares rows only through their minima.

\subsection{Proof of the Sequential--Composite Equivalence}
\label{app:proof_seq_composite}

\begin{proposition}[Sequential--composite equivalence]
\label{prop:seq_composite}
For every sequential policy $(r,q)$, the composite predictor
$\pi(x)=(r(x),q(x,r(x)))$ incurs the same realized loss as
$(r,q)$ under $\ell_{\mathrm{def\text{-}adv}}$. Conversely, every composite
predictor $\pi:\mathcal X\to\Pi$ can be implemented by a sequential policy with
the same realized loss. Consequently, the sequential and composite
formulations induce the same population risks and therefore share the same
Bayes-optimal decisions.
\end{proposition}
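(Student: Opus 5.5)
The plan is to argue pointwise, directly from Definition~\ref{def:true_loss}, and then integrate. Two facts about the double indicator do most of the work.

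\textbf{Step 1 (sequential to composite).} Fix a sequential policy $(r,q)$ and a realization $(x,a,y)$. Since $r(x)$ is a single expert, the factor $\mathbf{1}\{r(x)=j\}$ kills every row except $j=r(x)$. In that row, $q(x,r(x))$ is a single advice index, so the factor $\mathbf{1}\{q(x,j)=k\}$ keeps exactly one term. Hence
\[
\ell_{\mathrm{def\text{-}adv}}(r,q;x,a,y,\mathbf{e})=c_{r(x),q(x,r(x))}(x,a,y).
\]
The composite loss is $\sum_{(j,k)\in\Pi}c_{j,k}\mathbf{1}\{\pi(x)=(j,k)\}=c_{\pi(x)}(x,a,y)$. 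Therefore the choice $\pi(x)=(r(x),q(x,r(x)))$ gives the same realized loss. This $\pi$ is measurable, because it is built by composing measurable maps over finite index sets.

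\textbf{Step 2 (composite to sequential).} Given $\pi=(\pi_1,\pi_2)$, define $r(x)=\pi_1(x)$ and $q(x,j)=\pi_2(x)$ for all $j$. Any choice of $q(x,j)$ for $j\neq\pi_1(x)$ would do, since those rows never enter the loss. Then $q(x,r(x))=\pi_2(x)$, so Step 1 gives the identical realized loss. Measurability again holds because each coordinate of $\pi$ is measurable.

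\textbf{Step 3 (risks and Bayes decisions).} Equal realized losses imply equal expectations. So the two maps $(r,q)\mapsto\pi$ and $\pi\mapsto(r,q)$ preserve risk, and the infima over the two policy classes coincide. By Lemma~\ref{lem:bayes} the sequential Bayes pair is $(r^\star,q^\star)$. Its image $\pi^\star(x)=(r^\star(x),q^\star(x,r^\star(x)))$ attains the pointwise minimum $\min_{(j,k)}\mathbb{E}[c_{j,k}\mid X=x]$, which is the composite Bayes rule. It agrees with the lexicographic smallest-index $\arg\min$ over $\Pi$, given a row-major ordering of $\Pi$.

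The only mildly delicate point is this tie-breaking consistency in Step 3, i.e.\ checking that first choosing the smallest minimizing $j$ and then the smallest minimizing $k$ matches the smallest-index $\arg\min$ over $\Pi$ under the lexicographic order. The remaining steps are routine bookkeeping with indicators.
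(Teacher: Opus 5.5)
Your proposal is correct and follows the paper's proof essentially step by step: the pointwise identity $\ell_{\mathrm{def\text{-}adv}}(r,q;x,a,y,\mathbf e)=c_{r(x),q(x,r(x))}(x,a,y)$, the converse via $r=\pi_1$ with the constant query $q(x,j)=\pi_2(x)$, and integration to equate risks, infima, and optimal decisions. Your measurability remarks and your check that row-then-column smallest-index tie-breaking matches the lexicographic smallest-index $\arg\min$ over $\Pi$ are both correct; the paper leaves these points implicit.
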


\begin{proof}
The proof is simply an identification of the object that the loss depends on.
Although the protocol is written sequentially, the realized loss only sees the
executed pair consisting of the routed expert and the advice ultimately shown to
that expert.

Let $(r,q)$ be any sequential policy and define
$\pi(x)=(r(x),q(x,r(x)))$. By Definition~\ref{def:true_loss},
\[
    \ell_{\mathrm{def\text{-}adv}}(r,q;\,x,a,y,\mathbf e)
    =
    c_{r(x),\,q(x,r(x))}(x,a,y)
    =
    c_{\pi_1(x),\,\pi_2(x)}(x,a,y)
    =
    \ell_{\mathrm{def\text{-}adv}}(\pi;\,x,a,y,\mathbf e),
\]
where $\pi(x)=(\pi_1(x),\pi_2(x))$.
Thus every sequential policy induces a composite predictor with exactly the same
realized loss at every input.

Conversely, let $\pi:\mathcal X\to\Pi$ and write
$\pi(x)=(\pi_1(x),\pi_2(x))$. To implement the same executed pair
sequentially, define
\[
    r(x)=\pi_1(x).
\]
The query function only matters at the routed expert, so it is enough to
require
\[
    q(x,r(x))=\pi_2(x).
\]
One convenient choice is to set $q(x,j)=\pi_2(x)$ for every $j\in[J]$. Then
$q(x,r(x))=\pi_2(x)$ and therefore
\[
    \ell_{\mathrm{def\text{-}adv}}(r,q;\,x,a,y,\mathbf e)
    =
    c_{\pi_1(x),\,\pi_2(x)}(x,a,y)
    =
    \ell_{\mathrm{def\text{-}adv}}(\pi;\,x,a,y,\mathbf e).
\]
So every composite predictor can be realized by a sequential policy with the
same pointwise loss.

Since the two constructions agree pointwise in both directions, the sequential
and composite formulations induce exactly the same set of achievable realized
losses. Their population risk functionals therefore have the same infima and
the same Bayes-optimal decisions.
\end{proof}

\subsection{Proof of Proposition~\ref{prop:mismatch}}
\label{app:proof_mismatch}

\begin{proposition}[Mismatch decomposition]\label{prop:mismatch}
For every composite predictor $\pi : \mathcal{X} \to \Pi$ and every
realization $(x, a, y)$,
\[
    \ell_{\mathrm{def\text{-}adv}}(\pi;\, x, a, y, \mathbf{e})
    \;=\;
    D(x, a, y)
    \;+\;
    \sum_{i \in \Pi}
    w_i(x, a, y)\,
    \mathbf{1}\{\pi(x) \neq i\},
\]
where $M(x, a, y) = \max_{t \in \Pi} c_t(x, a, y)$,\;
$w_i(x, a, y) = M(x, a, y) - c_i(x, a, y) \geq 0$,\; and
$D(x, a, y) = \sum_{i \in \Pi} c_i(x, a, y) - (|\Pi| - 1)\, M(x, a, y)$.
\end{proposition}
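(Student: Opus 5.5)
The identity is pointwise, so we fix a realization $(x,a,y)$ and the value $\pi(x)=i^\star\in\Pi$. By Definition~\ref{def:true_loss} and Proposition~\ref{prop:seq_composite}, the left-hand side equals $c_{i^\star}(x,a,y)$. The plan is to evaluate the right-hand side directly and show that it collapses to this same value. All quantities are finite sums over the finite set $\Pi$, and costs are bounded, so no measurability or integrability issues arise. Nonnegativity $w_i\ge 0$ is immediate from $M=\max_t c_t\ge c_i$.

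First, we use the fact that exactly one index is matched: $\mathbf{1}\{\pi(x)\neq i\}=1-\mathbf{1}\{i=i^\star\}$. This gives
\[
\sum_{i\in\Pi} w_i\,\mathbf{1}\{\pi(x)\neq i\}
=\sum_{i\in\Pi} w_i - w_{i^\star}
=|\Pi|\,M-\sum_{i\in\Pi}c_i-(M-c_{i^\star}).
\]
Simplifying, this equals $(|\Pi|-1)M-\sum_{i}c_i+c_{i^\star}$.

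Second, we add $D=\sum_i c_i-(|\Pi|-1)M$. The terms $\sum_i c_i$ and $(|\Pi|-1)M$ cancel, leaving exactly $c_{i^\star}(x,a,y)$, which is the loss. Since $D$ contains no reference to $\pi$, this confirms that it is an action-independent offset.

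There is no genuine obstacle here. The only points needing care are that the composite loss is identified with $c_{\pi(x)}$ through Proposition~\ref{prop:seq_composite}, and that the sum of mismatch indicators removes exactly one term. Both follow because $\pi(x)$ is a single well-defined element of $\Pi$ under the smallest-index tie-breaking convention.
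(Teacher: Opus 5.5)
Your proof is correct and follows essentially the same route as the paper: fix the executed action $\pi(x)$, identify the loss with its cost, evaluate the mismatch sum as all weights except the executed one, and observe that adding $D$ cancels everything but $c_{\pi(x)}$. The only cosmetic difference is that you write the mismatch sum as the full weight sum minus $w_{i^\star}$, while the paper sums over $i\neq\pi(x)$ directly. Also, since $\pi$ is given as a map into $\Pi$, the tie-breaking convention you mention is not needed here.
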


\begin{proof}
The decomposition is easiest to see by fixing the composite action that is
actually executed. Let
\[
    i_\pi \coloneqq \pi(x) \in \Pi .
\]
Since the predictor selects exactly one composite action, the true loss is just
the realized cost of that action:
\[
    \ell_{\mathrm{def\text{-}adv}}(\pi;\, x, a, y, \mathbf{e})
    \;=\;
    c_{i_\pi}(x, a, y).
\]
Now expand the mismatch term. The indicator
$\mathbf{1}\{\pi(x)\neq i\}$ vanishes only at the executed action
$i=i_\pi$, so the sum keeps every weight except the one attached to the chosen
pair:
\begin{align*}
    \sum_{i \in \Pi} w_i\, \mathbf{1}\{\pi(x) \neq i\}
    &= \sum_{i \neq i_\pi} w_i
    = \sum_{i \neq i_\pi} \bigl(M - c_i\bigr) \\
    &= (|\Pi| - 1)\, M - \sum_{i \neq i_\pi} c_i.
\end{align*}
Adding the offset
\[
    D = \sum_{i \in \Pi} c_i - (|\Pi| - 1)\, M
\]
therefore gives
\[
    D + \sum_{i \in \Pi} w_i\, \mathbf{1}\{\pi(x) \neq i\}
    = \sum_{i \in \Pi} c_i - (|\Pi| - 1)\, M
      + (|\Pi| - 1)\, M - \sum_{i \neq i_\pi} c_i
    = c_{i_\pi},
\]
which is exactly
$\ell_{\mathrm{def\text{-}adv}}(\pi;\, x, a, y, \mathbf{e})$. This proves the
decomposition.
\end{proof}

\subsection{Proof of Lemma~\ref{lem:augmented_surrogate}}
\label{app:proof_augmented_surrogate}

\begin{proof}
This lemma is a derivation rather than a separate statistical claim. Once the
true loss is written over the composite action space, the only action-dependent
part is the weighted mismatch term. The augmented surrogate is obtained by
replacing each mismatch indicator with a multiclass surrogate on~$\Pi$.

Indeed, Proposition~\ref{prop:mismatch} shows that
\[
    \ell_{\mathrm{def\text{-}adv}}(\pi;\, x, a, y, \mathbf{e})
    \;=\;
    D(x,a,y)
    +
    \sum_{i \in \Pi} w_i(x, a, y)\, \mathbf{1}\{\pi(x) \neq i\},
\]
where $D(x,a,y)$ does not depend on the chosen action. Thus, for surrogate
design, the relevant object is
\[
    \sum_{i \in \Pi} w_i(x, a, y)\, \mathbf{1}\{\pi(x) \neq i\}.
\]
The comp-sum family~\citep{mao2023crossentropylossfunctionstheoretical} provides a multiclass surrogate for
prediction on the composite label space~$\Pi$. Applying it label by label to the
weighted mismatch term leads directly to
\[
    \Phi_{\mathrm{def\text{-}adv}}^{\mathrm{aug},\tau}
    (s_\pi;\, x, a, y, \mathbf{e})
    \;\coloneqq\;
    \sum_{i\in \Pi}
    w_i(x,a,y)\,\Phi^\tau_{01}(\mathbf{s}_\pi(x),i),
\]
which is exactly~\eqref{eq:augmented}. In other words, the augmented surrogate
is the comp-sum analogue of the weighted mismatch representation of the true
loss on the composite action space.
\end{proof}

\subsection{Proof of Theorem~\ref{thm:augmented}}
\label{app:proof_augmented}

The proof proceeds by reducing the augmented objective to an ordinary
multiclass problem on the composite action space~$\Pi$, with example-dependent
weights absorbed into a new data distribution.

\subsubsection{Proof of Lemma~\ref{lem:weighted_multiclass}}
\label{app:proof_weighted_multiclass}

\begin{lemma}[Weighted multiclass reduction]\label{lem:weighted_multiclass}
Let $\mathbf{w}(x) \coloneqq (\mathbb{E}[w_i(X,A,Y) \mid X = x])_{i \in \Pi}$,
and define
\[
    d(x)\coloneqq \mathbb{E}[D(X,A,Y)\mid X=x].
\]
For every $i\in\Pi$, write
\[
    \bar w_i(x)\coloneqq \mathbb{E}[w_i(X,A,Y)\mid X=x].
\]
Define a probability vector $p(x)=(p_i(x))_{i\in\Pi}$ on~$\Pi$ by
\[
    p_i(x)\coloneqq
    \begin{cases}
        \bar w_i(x)/\|\mathbf{w}(x)\|_1, & \|\mathbf{w}(x)\|_1>0,\\
        1/|\Pi|, & \|\mathbf{w}(x)\|_1=0,
    \end{cases}
    \qquad i\in\Pi,
\]
and, for any composite predictor $\pi$ and any score function $s_\pi$, define
the conditional multiclass risks
\[
    C_{01}^{p}(\pi,x)\coloneqq \sum_{i\in\Pi} p_i(x)\,\mathbf 1\{\pi(x)\neq i\},
    \qquad
    C_{\Phi^\tau_{01}}^{p}(s_\pi,x)\coloneqq
    \sum_{i\in\Pi} p_i(x)\,\Phi^\tau_{01}(\mathbf s_\pi(x),i).
\]
Assume $\mathbb{E}[\|\mathbf{w}(X)\|_1] > 0$, and define a probability distribution
$\widetilde{\mathcal{D}}$ on $\mathcal{X} \times \Pi$ by
\begin{equation}\label{eq:weighted_distribution}
    \widetilde{\mathbb{P}}(X \in B, I = i)
    \;\coloneqq\;
    \frac{1}{\mathbb{E}[\|\mathbf{w}(X)\|_1]}\,
    \mathbb{E}\bigl[\bar{w}_i(X)\,\mathbf{1}\{X \in B\}\bigr]
\end{equation}
for every measurable set $B \subseteq \mathcal{X}$ and every $i \in \Pi$.
Let $\ell_{01}(\pi;\, x, i) \coloneqq \mathbf{1}\{\pi(x) \neq i\}$ denote the
multiclass $0$--$1$ loss on $\Pi$. Then, for every scoring function
$s_\pi \in \mathcal{H}_\pi$ and its decoded predictor
$\pi(x) = \argmax_{i \in \Pi} s_\pi(x, i)$,
\begin{align}
    C_{\ell_{\mathrm{def\text{-}adv}}}(\pi,x)
    &=
    d(x)+\|\mathbf{w}(x)\|_1\,C_{01}^{p}(\pi,x),
    \label{eq:conditional_true_reduction}\\
    C_{\Phi^{\mathrm{aug},\tau}}(s_\pi,x)
    &=
    \|\mathbf{w}(x)\|_1\,C_{\Phi^\tau_{01}}^{p}(s_\pi,x),
    \label{eq:conditional_surrogate_reduction}
\end{align}
where
\[
    C_{\ell_{\mathrm{def\text{-}adv}}}(\pi,x)
    \coloneqq
    \mathbb{E}\bigl[\ell_{\mathrm{def\text{-}adv}}(\pi;\,X,A,Y,\mathbf e)\mid X=x\bigr]
\]
and
\[
    C_{\Phi^{\mathrm{aug},\tau}}(s_\pi,x)
    \coloneqq
    \mathbb{E}\bigl[
        \Phi_{\mathrm{def\text{-}adv}}^{\mathrm{aug},\tau}
        (s_\pi;\,X,A,Y,\mathbf e)
        \mid X=x
    \bigr].
\]
Consequently,
\begin{align}
    \mathcal{E}_{\Phi^{\mathrm{aug},\tau}}(s_\pi)
    &= \mathbb{E}\bigl[\|\mathbf{w}(X)\|_1\bigr]\,
    \mathcal{E}_{\Phi^\tau_{01}}(s_\pi;\widetilde{\mathcal{D}}),
    \label{eq:augmented_scaled_surrogate} \\
    \mathcal{E}_{\ell_{\mathrm{def\text{-}adv}}}(\pi)
    &= \mathbb{E}[D(X,A,Y)]
    +
    \mathbb{E}\bigl[\|\mathbf{w}(X)\|_1\bigr]\,
    \mathcal{E}_{\ell_{01}}(\pi;\widetilde{\mathcal{D}}).
    \label{eq:augmented_scaled_true}
\end{align}
Moreover, the best-in-class risks and minimizability gaps satisfy
\begin{align}
    \mathcal{E}_{\Phi^{\mathrm{aug},\tau}}^\ast(\mathcal{H}_\pi)
    &=
    \mathbb{E}\bigl[\|\mathbf{w}(X)\|_1\bigr]\,
    \mathcal{E}_{\Phi^\tau_{01}}^\ast(\mathcal{H}_\pi;\widetilde{\mathcal{D}}),
    \label{eq:augmented_scaled_surrogate_star} \\
    \mathcal{U}_{\Phi^{\mathrm{aug},\tau}}(\mathcal{H}_\pi)
    &=
    \mathbb{E}\bigl[\|\mathbf{w}(X)\|_1\bigr]\,
    \mathcal{U}_{\Phi^\tau_{01}}(\mathcal{H}_\pi;\widetilde{\mathcal{D}}),
    \label{eq:augmented_scaled_surrogate_gap} \\
    \mathcal{E}_{\ell_{\mathrm{def\text{-}adv}}}^B(\mathcal{H}_\pi)
    &=
    \mathbb{E}[D(X,A,Y)]
    +
    \mathbb{E}\bigl[\|\mathbf{w}(X)\|_1\bigr]\,
    \mathcal{E}_{\ell_{01}}^B(\mathcal{H}_\pi;\widetilde{\mathcal{D}}),
    \label{eq:augmented_scaled_true_star} \\
    \mathcal{U}_{\ell_{\mathrm{def\text{-}adv}}}(\mathcal{H}_\pi)
    &=
    \mathbb{E}\bigl[\|\mathbf{w}(X)\|_1\bigr]\,
    \mathcal{U}_{\ell_{01}}(\mathcal{H}_\pi;\widetilde{\mathcal{D}}).
    \label{eq:augmented_scaled_true_gap}
\end{align}
\end{lemma}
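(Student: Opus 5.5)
The plan is to derive the two conditional identities first and then integrate them, using the construction of $\widetilde{\mathcal D}$ to turn $\mathbb{P}_X$-expectations weighted by $\|\mathbf w(X)\|_1$ into $\widetilde{\mathcal D}$-expectations.

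\emph{Conditional true risk.} Fix $x$ in the full-measure set where the conditional expectations are defined. By Proposition~\ref{prop:mismatch}, $\ell_{\mathrm{def\text{-}adv}}(\pi;x,a,y,\mathbf e)=D(x,a,y)+\sum_{i\in\Pi}w_i(x,a,y)\mathbf 1\{\pi(x)\neq i\}$. Since $\pi(x)$ is deterministic given $X=x$ and all quantities are bounded by $|\Pi|C$, taking $\mathbb{E}[\cdot\mid X=x]$ gives $C_{\ell_{\mathrm{def\text{-}adv}}}(\pi,x)=d(x)+\sum_i\bar w_i(x)\mathbf 1\{\pi(x)\neq i\}$. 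When $\|\mathbf w(x)\|_1>0$, write $\bar w_i=\|\mathbf w(x)\|_1p_i(x)$, which yields \eqref{eq:conditional_true_reduction}. When $\|\mathbf w(x)\|_1=0$, every $\bar w_i(x)=0$ because $w_i\ge 0$, so both sides equal $d(x)$ regardless of the uniform convention for $p$.

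\emph{Conditional surrogate risk.} The same computation applies, since $\Phi^\tau_{01}(\mathbf s_\pi(x),i)$ depends only on $x$ and can be pulled out of the conditional expectation. This gives \eqref{eq:conditional_surrogate_reduction}.

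\emph{Integration against $\widetilde{\mathcal D}$.} First I check that \eqref{eq:weighted_distribution} defines a probability measure. Each $B\mapsto\widetilde{\mathbb P}(X\in B,I=i)$ is a finite measure by monotone convergence, and the total mass is $\sum_i\mathbb{E}[\bar w_i(X)]/\mathbb{E}\|\mathbf w(X)\|_1=1$. Next I need the identity
\[
\mathbb{E}_{\widetilde{\mathcal D}}[g(X,I)]=\frac{\mathbb{E}\bigl[\sum_i\bar w_i(X)g(X,i)\bigr]}{\mathbb{E}\|\mathbf w(X)\|_1}
\]
for bounded measurable $g$. It holds for indicators of $B\times\{i\}$ by definition, and extends to simple functions and then to bounded or nonnegative $g$ by monotone convergence. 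The surrogate $\Phi^\tau_{01}$ is nonnegative, so the nonnegative case covers it. Applying the identity with $g=\mathbf 1\{\pi(x)\neq i\}$ and $g=\Phi^\tau_{01}(\mathbf s_\pi(x),i)$, and combining with the two conditional identities and the tower property, gives \eqref{eq:augmented_scaled_surrogate} and \eqref{eq:augmented_scaled_true}. The offset contributes $\mathbb{E}[d(X)]=\mathbb{E}[D]$.

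\emph{Best-in-class risks and minimizability gaps.} The best-in-class identities \eqref{eq:augmented_scaled_surrogate_star} and \eqref{eq:augmented_scaled_true_star} follow by taking $\inf_{s_\pi\in\mathcal H_\pi}$ in the global identities, since the factor $\mathbb{E}\|\mathbf w\|_1$ is a positive constant and $\mathbb{E}[D]$ does not depend on $s_\pi$.

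The gaps are the main obstacle. They need the pointwise quantities $\mathbb{E}_X[\inf_{s_\pi}C(s_\pi,X)]$, and these must be compared with the corresponding quantity under $\widetilde{\mathcal D}$. The $\widetilde{\mathcal D}$-marginal on $\mathcal X$ has density $\|\mathbf w(x)\|_1/\mathbb{E}\|\mathbf w\|_1$ with respect to $\mathbb P_X$, and its conditional label law is exactly $p(x)$ wherever the density is positive. Pointwise, $\inf_{s_\pi}C_{\Phi^{\mathrm{aug},\tau}}(s_\pi,x)=\|\mathbf w(x)\|_1\inf_{s_\pi}C^p_{\Phi^\tau_{01}}(s_\pi,x)$, because the factor is a nonnegative constant in $s_\pi$, and the analogous identity holds for the true loss with the additive $d(x)$. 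Integrating under $\mathbb P_X$ and changing measure gives
\[
\mathbb{E}_X\Bigl[\inf_{s_\pi}C_{\Phi^{\mathrm{aug},\tau}}(s_\pi,X)\Bigr]=\mathbb{E}\|\mathbf w\|_1\,\mathbb{E}_{\widetilde X}\Bigl[\inf_{s_\pi}C^p_{\Phi^\tau_{01}}(s_\pi,\widetilde X)\Bigr],
\]
and the same with $+\mathbb{E}[D]$ for the true loss. Points where $\|\mathbf w(x)\|_1=0$ carry zero $\widetilde{\mathcal D}$-mass and contribute zero on both sides, so the uniform convention for $p$ is harmless there.

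Two technical points need care: measurability of $x\mapsto\inf_{s_\pi}C(s_\pi,x)$, and the fact that $\widetilde{\mathcal D}$'s conditional law is defined only up to null sets. I would handle the first by treating the infimum as an outer or universally measurable integral, as in the standard $\mathcal H$-consistency literature, and the second by fixing the version $p(x)$. Subtracting the pointwise terms from the best-in-class identities then gives \eqref{eq:augmented_scaled_surrogate_gap} and \eqref{eq:augmented_scaled_true_gap}, where the $\mathbb{E}[D]$ terms cancel in the true-loss gap.
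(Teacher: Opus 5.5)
Your proposal is correct and follows the paper's proof essentially step for step. Like the paper, you condition the mismatch decomposition of Proposition~\ref{prop:mismatch}, normalize by $\|\mathbf w(x)\|_1$ (handling the zero-weight case separately), check that $\widetilde{\mathcal D}$ has unit mass, integrate, and pass to infima by affine scaling. Your treatment of the minimizability gaps is more careful than the paper's, which dismisses the pointwise-infimum term as ``the same argument'' without making explicit, as you do, that $p(x)$ is the $\widetilde{\mathcal D}$-conditional law of $I$ and that the $\widetilde{\mathcal D}$-marginal has density $\|\mathbf w(x)\|_1/\mathbb{E}[\|\mathbf w(X)\|_1]$ with respect to $\mathbb P_X$.
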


\begin{proof}
The point of the lemma is that, once we condition on $X=x$, the
deferral-advice problem becomes an ordinary multiclass problem on the
composite action space~$\Pi$, up to an additive offset and a positive scale.

Fix $x$. By Proposition~\ref{prop:mismatch},
\[
    \ell_{\mathrm{def\text{-}adv}}(\pi;\,X,A,Y,\mathbf e)
    =
    D(X,A,Y)
    +
    \sum_{i\in\Pi} w_i(X,A,Y)\,\mathbf 1\{\pi(X)\neq i\}.
\]
Taking the conditional expectation given $X=x$ gives
\[
    C_{\ell_{\mathrm{def\text{-}adv}}}(\pi,x)
    =
    d(x)+\sum_{i\in\Pi}\bar w_i(x)\,\mathbf 1\{\pi(x)\neq i\}.
\]
If $\|\mathbf{w}(x)\|_1=0$, then every $\bar w_i(x)$ vanishes, so the action-dependent term
disappears and
\[
    C_{\ell_{\mathrm{def\text{-}adv}}}(\pi,x)=d(x).
\]
Because the multiplier $\|\mathbf{w}(x)\|_1$ is also zero, both
\eqref{eq:conditional_true_reduction} and
\eqref{eq:conditional_surrogate_reduction} hold automatically in this
degenerate case, regardless of the arbitrary choice of~$p(x)$.

Assume now that $\|\mathbf{w}(x)\|_1>0$. Then
$\bar w_i(x)=\|\mathbf{w}(x)\|_1\,p_i(x)$ for every
$i\in\Pi$, and therefore
\[
    C_{\ell_{\mathrm{def\text{-}adv}}}(\pi,x)
    =
    d(x)+\|\mathbf{w}(x)\|_1\sum_{i\in\Pi}p_i(x)\,\mathbf 1\{\pi(x)\neq i\},
\]
which is exactly~\eqref{eq:conditional_true_reduction}.

The same normalization applies to the surrogate:
\[
    C_{\Phi^{\mathrm{aug},\tau}}(s_\pi,x)
    =
    \sum_{i\in\Pi}\bar w_i(x)\,\Phi^\tau_{01}(\mathbf s_\pi(x),i)
    =
    \|\mathbf{w}(x)\|_1\sum_{i\in\Pi}p_i(x)\,\Phi^\tau_{01}(\mathbf s_\pi(x),i),
\]
which proves~\eqref{eq:conditional_surrogate_reduction}.

Thus, for each fixed input $x$, the conditional deferral-advice problem is
exactly a weighted multiclass problem on~$\Pi$, up to the action-independent
offset~$d(x)$.

To pass from this conditional reduction to a population statement, we average
these weighted multiclass problems over~$X$. Since $\bar{w}_i(X) \geq 0$ for
all $i \in \Pi$, the right-hand side of~\eqref{eq:weighted_distribution}
defines a finite non-negative measure on $\mathcal X\times\Pi$. Its total mass
is
\[
    \sum_{i \in \Pi} \widetilde{\mathbb{P}}(X \in \mathcal{X}, I = i)
    =
    \frac{1}{\mathbb{E}[\|\mathbf{w}(X)\|_1]}
    \sum_{i \in \Pi} \mathbb{E}[\bar{w}_i(X)]
    =
    \frac{1}{\mathbb{E}[\|\mathbf{w}(X)\|_1]}\,\mathbb{E}\bigl[\|\mathbf{w}(X)\|_1\bigr]
    = 1,
\]
so $\widetilde{\mathcal{D}}$ is a probability distribution.

Integrating~\eqref{eq:conditional_surrogate_reduction} over $X$ yields
\[
    \mathcal{E}_{\Phi^{\mathrm{aug},\tau}}(s_\pi)
    =
    \mathbb{E}\Bigl[
        \sum_{i\in\Pi}\bar w_i(X)\,\Phi^\tau_{01}(\mathbf s_\pi(X),i)
    \Bigr]
    =
    \mathbb{E}\bigl[\|\mathbf{w}(X)\|_1\bigr]\,
    \mathcal{E}_{\Phi^\tau_{01}}(s_\pi;\widetilde{\mathcal D}),
\]
which is~\eqref{eq:augmented_scaled_surrogate}. Likewise,
\eqref{eq:conditional_true_reduction} gives
\[
    \mathcal{E}_{\ell_{\mathrm{def\text{-}adv}}}(\pi)
    =
    \mathbb{E}[D(X,A,Y)]
    +
    \mathbb{E}\bigl[\|\mathbf{w}(X)\|_1\bigr]\,
    \mathcal{E}_{\ell_{01}}(\pi;\widetilde{\mathcal D}),
\]
which is~\eqref{eq:augmented_scaled_true}.

The remaining identities follow from the same affine scaling. Taking infima
over $s_\pi\in\mathcal H_\pi$ or over the induced predictors gives
\eqref{eq:augmented_scaled_surrogate_star}
and~\eqref{eq:augmented_scaled_true_star}. Since the minimizability gaps are
defined through these same best-in-class risks, the same argument also yields
\eqref{eq:augmented_scaled_surrogate_gap}
and~\eqref{eq:augmented_scaled_true_gap}.
\end{proof}

\augmentedconsistency*
\begin{proof}
If $\mathbb{E}[\|\mathbf{w}(X)\|_1] = 0$, then
$\bar{w}_i(X) = 0$ almost surely for every
$i \in \Pi$. By Proposition~\ref{prop:mismatch}, the true risk then reduces to
the expectation of the action-independent term $D(X,A,Y)$, so every predictor
is optimal and the theorem is trivial. We therefore assume
$\mathbb{E}[\|\mathbf{w}(X)\|_1] > 0$.

The key idea is to compare the surrogate and the true loss at the level where
the Bayes policy itself is defined, namely conditionally on the input~$x$.
Lemma~\ref{lem:weighted_multiclass} shows that, once $X=x$ is fixed, both the
true deferral-advice risk and the augmented surrogate reduce to weighted
multiclass risks on the composite action space~$\Pi$:
\[
    C_{\ell_{\mathrm{def\text{-}adv}}}(\pi,x)
    =
    d(x)+\|\mathbf{w}(x)\|_1\,C_{01}^{p}(\pi,x),
    \qquad
    C_{\Phi^{\mathrm{aug},\tau}}(s_\pi,x)
    =
    \|\mathbf{w}(x)\|_1\,C_{\Phi^\tau_{01}}^{p}(s_\pi,x).
\]
The additive term $d(x)$ is independent of the chosen composite action, so it
plays no role in the comparison between policies. What matters is that the
action-dependent part of the true loss and the surrogate are governed by the
same conditional weight vector~$p(x)$.

It is therefore natural to look first at the conditional excess risk. For each
input~$x$, define
\begin{align*}
    \Delta C_{\ell_{\mathrm{def\text{-}adv}}}(\pi,x)
    &\coloneqq
    C_{\ell_{\mathrm{def\text{-}adv}}}(\pi,x)
    -
    \inf_{s'_\pi \in \mathcal H_\pi}
    C_{\ell_{\mathrm{def\text{-}adv}}}(\pi_{s'_\pi},x), \\
    \Delta C_{\Phi^{\mathrm{aug},\tau}}(s_\pi,x)
    &\coloneqq
    C_{\Phi^{\mathrm{aug},\tau}}(s_\pi,x)
    -
    \inf_{s'_\pi \in \mathcal H_\pi}
    C_{\Phi^{\mathrm{aug},\tau}}(s'_\pi,x).
\end{align*}
Here $\pi_{s'_\pi}$ denotes the predictor induced by $s'_\pi$.
When $\|\mathbf{w}(x)\|_1>0$, the identities above show that these conditional gaps are
exactly the weighted multiclass excess risks induced by~$p(x)$, up to the
common scale factor~$\|\mathbf{w}(x)\|_1$. This is the local reason the augmented surrogate is
aligned with the correct comparison. The remainder of the proof turns this
pointwise observation into an $\mathcal H_\pi$-consistency statement at the
population level.

This identifies the right conditional comparison, but the theorem is an
$\mathcal H_\pi$-consistency statement rather than a purely pointwise
calibration claim. To keep the hypothesis-class restriction explicit, we gather
these conditional weighted problems into the multiclass distribution
$\widetilde{\mathcal D}$ defined in Lemma~\ref{lem:weighted_multiclass}. Under
that distribution, the comp-sum loss $\Phi^\tau_{01}$ satisfies the usual
multiclass $\mathcal H_\pi$-consistency inequality with transfer
function~$\Gamma_\tau$. Write
\begin{align*}
    \Delta \mathcal{E}_{01}(\pi;\widetilde{\mathcal D})
    &\coloneqq
    \mathcal{E}_{\ell_{01}}(\pi;\widetilde{\mathcal{D}})
    -
    \mathcal{E}_{\ell_{01}}^B(\mathcal{H}_\pi;\widetilde{\mathcal{D}})
    +
    \mathcal{U}_{\ell_{01}}(\mathcal{H}_\pi;\widetilde{\mathcal{D}}), \\
    \Delta \mathcal{E}_{\Phi^\tau_{01}}(s_\pi;\widetilde{\mathcal D})
    &\coloneqq
    \mathcal{E}_{\Phi^\tau_{01}}(s_\pi;\widetilde{\mathcal{D}})
    -
    \mathcal{E}_{\Phi^\tau_{01}}^\ast(\mathcal{H}_\pi;\widetilde{\mathcal{D}})
    +
    \mathcal{U}_{\Phi^\tau_{01}}(\mathcal{H}_\pi;\widetilde{\mathcal{D}}).
\end{align*}
By the symmetry and completeness assumptions on $\mathcal H_\pi$ in
Theorem~\ref{thm:augmented}, the comp-sum $\mathcal{H}_\pi$-consistency bound
of \citet{mao2023crossentropylossfunctionstheoretical} applies and gives
\[
    \Delta \mathcal{E}_{01}(\pi;\widetilde{\mathcal D})
    \;\leq\;
    \Gamma_\tau\!\left(
        \Delta \mathcal{E}_{\Phi^\tau_{01}}(s_\pi;\widetilde{\mathcal D})
    \right).
\]
Lemma~\ref{lem:weighted_multiclass} shows that this ordinary multiclass problem
is exactly the population counterpart of the conditional reduction above.
Define
\[
    \widetilde{\Gamma}_\tau(u)
    \;\coloneqq\;
    \mathbb{E}\bigl[\|\mathbf{w}(X)\|_1\bigr]\,
    \Gamma_\tau\!\left(
        \frac{u}{\mathbb{E}[\|\mathbf{w}(X)\|_1]}
    \right),
\]
so that $\mathbb{E}[\|\mathbf{w}(X)\|_1]\,\Gamma_\tau(u)
= \widetilde{\Gamma}_\tau(\mathbb{E}[\|\mathbf{w}(X)\|_1]\,u)$ by construction.
Multiplying the previous inequality by $\mathbb{E}[\|\mathbf{w}(X)\|_1]$ gives
\[
    \mathbb{E}\bigl[\|\mathbf{w}(X)\|_1\bigr]\,
    \Delta \mathcal{E}_{01}(\pi;\widetilde{\mathcal D})
    \;\leq\;
    \widetilde{\Gamma}_\tau\!\left(
        \mathbb{E}\bigl[\|\mathbf{w}(X)\|_1\bigr]\,
        \Delta \mathcal{E}_{\Phi^\tau_{01}}(s_\pi;\widetilde{\mathcal D})
    \right).
\]
We now translate this inequality back to deferral with advice using
\eqref{eq:augmented_scaled_surrogate}--\eqref{eq:augmented_scaled_true_gap}.
The additive term $\mathbb{E}[D(X,A,Y)]$ appears in both the incurred true risk
and the best-in-class true risk, so it cancels from the excess-risk
difference. We obtain
\[
    \mathcal{E}_{\ell_{\mathrm{def\text{-}adv}}}(\pi)
    -
    \mathcal{E}_{\ell_{\mathrm{def\text{-}adv}}}^B(\mathcal{H}_\pi)
    +
    \mathcal{U}_{\ell_{\mathrm{def\text{-}adv}}}(\mathcal{H}_\pi)
    \leq
    \widetilde{\Gamma}_\tau\!\left(
        \mathcal{E}_{\Phi^{\mathrm{aug},\tau}}(s_\pi)
        -
        \mathcal{E}_{\Phi^{\mathrm{aug},\tau}}^\ast(\mathcal{H}_\pi)
        +
        \mathcal{U}_{\Phi^{\mathrm{aug},\tau}}(\mathcal{H}_\pi)
    \right),
\]
which is exactly~\eqref{eq:augmented_bound}.

This proves the claimed $\mathcal H_\pi$-consistency bound. When the
minimizability gaps vanish, the right-hand side tends to zero whenever the
surrogate excess risk tends to zero, so the augmented surrogate recovers the
best-in-class deferral-advice risk. If the class is Bayes-rich, this
best-in-class risk is the Bayes-optimal deferral-advice risk.

From \cite{mao2023crossentropylossfunctionstheoretical}, we have:
\begin{equation}
    \Gamma_\tau(u) =
    \begin{cases}
    \sqrt{2^\tau(2-\tau)u} & \tau \in [0,1)\\
        \sqrt{2|\Pi|^{\tau-1}u} & \tau \in [1,2) \\
    (\tau - 1)|\Pi|^{\tau-1}u & \tau \in [2, +\infty)
    \end{cases}
\end{equation}
\end{proof}

\subsection{Proof of Corollary~\ref{cor:augmented_statistical}}
\label{app:proof_augmented_corollary}

\begin{proof}
The corollary is the statistical form of Theorem~\ref{thm:augmented}. Once the
two minimizability gaps vanish, the theorem compares the true best-in-class
excess risk of the induced policy directly to the surrogate excess risk of the
learned score; the Bayes-richness assumption identifies this best-in-class risk
with the unrestricted Bayes risk.

Under the stated assumption,
\[
    \mathcal{U}_{\ell_{\mathrm{def\text{-}adv}}}(\mathcal{H}_\pi)
    =
    \mathcal{U}_{\Phi^{\mathrm{aug},\tau}}(\mathcal{H}_\pi)
    = 0.
\]
Moreover, by Bayes-richness,
\[
    \mathcal{E}_{\ell_{\mathrm{def\text{-}adv}}}^B(\mathcal{H}_\pi)
    =
    \inf_{\pi}\mathcal{E}_{\ell_{\mathrm{def\text{-}adv}}}(\pi).
\]
Applying Theorem~\ref{thm:augmented} to the learned score
$\widehat{s}_{\pi,n}$ and its induced predictor $\widehat{\pi}_n$ therefore
gives, for every $n$,
\[
    \mathcal{E}_{\ell_{\mathrm{def\text{-}adv}}}(\widehat{\pi}_n)
    -
    \inf_{\pi}\mathcal{E}_{\ell_{\mathrm{def\text{-}adv}}}(\pi)
    \;\leq\;
    \widetilde{\Gamma}_\tau\!\left(
        \mathcal{E}_{\Phi^{\mathrm{aug},\tau}}(\widehat{s}_{\pi,n})
        -
        \mathcal{E}_{\Phi^{\mathrm{aug},\tau}}^\ast(\mathcal{H}_\pi)
    \right).
\]
Thus the only remaining quantity to control is the surrogate excess risk.

By assumption,
\[
    \mathcal{E}_{\Phi^{\mathrm{aug},\tau}}(\widehat{s}_{\pi,n})
    -
    \mathcal{E}_{\Phi^{\mathrm{aug},\tau}}^\ast(\mathcal{H}_\pi)
    \to 0
\]
in probability. Since $\Gamma_\tau$ is concave, non-negative, non-decreasing,
and satisfies $\Gamma_\tau(0)=0$, the definition of
$\widetilde{\Gamma}_\tau$ in Theorem~\ref{thm:augmented} implies that
$\widetilde{\Gamma}_\tau$ is continuous at $0$. It therefore follows that
\[
    \widetilde{\Gamma}_\tau\!\left(
        \mathcal{E}_{\Phi^{\mathrm{aug},\tau}}(\widehat{s}_{\pi,n})
        -
        \mathcal{E}_{\Phi^{\mathrm{aug},\tau}}^\ast(\mathcal{H}_\pi)
    \right)
    \to 0
\]
in probability as well. Combining this with the preceding excess-risk bound
proves that
\[
    \mathcal{E}_{\ell_{\mathrm{def\text{-}adv}}}(\widehat{\pi}_n)
    -
    \inf_{\pi}\mathcal{E}_{\ell_{\mathrm{def\text{-}adv}}}(\pi)
    \to 0
\]
in probability.
\end{proof}

\section{Experiments}\label{app:experiments}

\subsection{Global Experimental Protocol}\label{app:global_experimental_protocol}

Before turning to the individual tasks, we summarize the experimental choices
that are shared across the appendix. The goal is to evaluate the learned
policy in the regime studied by the theory. The learner does not predict an
abstract label directly. It chooses an executed expert-advice pair, and the
value of that choice is revealed only through the resulting realized cost. The
experimental protocol is therefore designed to mirror the formal setup as
closely as possible.

\paragraph{Evaluation metric.}
Our primary metric is always the empirical average of the true
deferral-advice loss,
\[
    \widehat{\mathcal{E}}_{\ell_{\mathrm{def\text{-}adv}}}(\pi)
    \;=\;
    \frac{1}{n_{\mathrm{val}}}
    \sum_{i=1}^{n_{\mathrm{val}}}
    \ell_{\mathrm{def\text{-}adv}}(\pi; x_i, a_i, y_i, \mathbf e_i),
\]
evaluated on the validation split of the corresponding experiment. This is the
only quantity used to compare policies across values of $\lambda$. The reason is
structural. Once advice costs vary, two policies may have very different
consultation patterns even when they achieve similar predictive accuracy.
Accuracy alone therefore no longer represents the deployment objective of the
paper. The true loss remains the correct comparison criterion because it
accounts jointly for task error, expert fees, and advice costs. The appendix
therefore keeps the focus on true-loss tables and uses advice-distribution
figures only as diagnostics to explain how a method achieves a certain true
loss rather than to rank methods.

\paragraph{Learned policy.}
In every experiment, the learned policy acts on the composite action space
$\Pi=[J]\times[K]_0$. The deployed decision is therefore a single executed pair
$(j,k)$. Still, it is useful to parameterize the scores in a way that mirrors
the protocol. We write the score of action $(j,k)$ as
\[
    s_\theta(x,(j,k))
    \;=\;
    u_\theta(x,j) + v_\theta(x,j,k),
\]
where $u_\theta(x,j)$ plays the role of an expert-routing score and
$v_\theta(x,j,k)$ is an expert-conditional advice adjustment. Both depend on
the input through a shared learned representation, so the same construction can
be used for text and tabular inputs.

A convenient instantiation first maps the input to a latent representation
$z_\theta(x)\in\mathbb{R}^d$ and then introduces routing embeddings
$a_j\in\mathbb{R}^d$, advice-side expert embeddings
$m_j\in\mathbb{R}^d$, advice embeddings $g_k\in\mathbb{R}^d$, and
score biases $\rho_j,\delta_k\in\mathbb{R}$:
\begin{align*}
u_\theta(x,j) &:= \rho_j + \langle a_j,\; z_\theta(x)\rangle,\\
v_\theta(x,j,k) &:= \delta_k + \big\langle m_j,\; g_k \odot z_\theta(x)\big\rangle,
\end{align*}
where $\odot$ denotes elementwise multiplication. This factorization preserves
three properties that are important in practice. First, routing remains
input-dependent through $u_\theta$. Second, the value of advice acquisition can depend
jointly on the input and an expert through $v_\theta$. Third, the
expert-advice interaction is non-separable, while the number of parameters still
scales through a shared low-rank representation rather than independently over
all $J(K+1)$ composite actions.

Decoding remains a single argmax over composite actions:
\[
    \pi_\theta(x)
    \;=\;
    \argmax_{(j,k)\in\Pi}\; s_\theta(x,(j,k)).
\]

% Define clean Google-style colors
\definecolor{gblue}{HTML}{4285F4}
\definecolor{gred}{HTML}{EA4335}
\definecolor{ggreen}{HTML}{34A853}
\definecolor{gyellow}{HTML}{FBBC05}

\begin{figure}[t]
\centering
\resizebox{0.88\linewidth}{!}{%
\begin{tikzpicture}[
    >=Stealth,
    node distance=1.2cm and 1.5cm,
    % Standard block for representations and outputs
    block/.style={rectangle, draw=black!70, rounded corners=4pt, minimum height=0.7cm, minimum width=2cm, align=center, font=\sffamily\small, thick, fill=white},
    % Operator node: Increased minimum size and inner sep so math doesn't touch borders
    op/.style={circle, draw=black!70, inner sep=3pt, minimum size=0.85cm, align=center, font=\sffamily\small, thick, fill=white},
    % Embedding/Parameter node
    embed/.style={rectangle, draw=black!70, rounded corners=4pt, fill=gray!8, minimum height=0.6cm, minimum width=1.6cm, align=center, font=\sffamily\small, thick},
    % Global font style
    font=\sffamily\small
]

% ================= Input & Shared Rep =================
\node (x) {Input $x$};
\node[block, above=0.7cm of x] (z) {Shared representation\\$z_\theta(x) \in \mathbb{R}^d$};

\coordinate (split) at ([yshift=0.6cm]z.north);
% Create a strict coordinate for left alignment to avoid arrow anchoring bugs
\coordinate (split_left) at ([xshift=-2.8cm]split);

% ================= Right Branch: Advice v_theta =================
% Positioned relative to the split to leave room for operations
\node[op, above right=0.5cm and 2.2cm of split] (mult_v) {$\odot$};
\node[embed, right=1cm of mult_v] (gk) {Advice embedding\\$g_k \in \mathbb{R}^d$};

\node[op, above=1cm of mult_v] (dot_v) {$\langle \cdot, \cdot \rangle$};
\node[embed, right=1cm of dot_v] (mj) {Advice-side expert\\$m_j \in \mathbb{R}^d$};

\node[op, above=1cm of dot_v] (add_v) {$+$};
\node[embed, right=1cm of add_v] (delta) {Advice bias\\$\delta_k \in \mathbb{R}$};

% ================= Left Branch: Routing u_theta =================
% Aligned perfectly with the vertical levels of the right branch
\node[op] (add_u) at (add_v -| split_left) {$+$};
\node[embed, left=1cm of add_u] (rho) {Routing bias\\$\rho_j \in \mathbb{R}$};

\node[op] (dot_u) at (dot_v -| add_u) {$\langle \cdot, \cdot \rangle$};
\node[embed, left=1cm of dot_u] (aj) {Routing embedding\\$a_j \in \mathbb{R}^d$};

% ================= Output & Combination =================
\coordinate (join_y) at ($(add_u)!0.5!(add_v) + (0, 1.4cm)$);
\node[op] (add_final) at (split |- join_y) {$+$};

\node[block, above=0.8cm of add_final] (s) {Composite Score\\$s_\theta(x, (j,k))$};

\node[block, above=0.8cm of s] (argmax) {$\arg\max_{(j,k) \in \Pi}$};

\node[above=0.8cm of argmax] (output) {$\pi_\theta(x) = (j,k)$};

% ================= Routing Edges =================
\draw[->, thick] (x) -- (z);
\draw[-, thick] (z) -- (split);

% Left edges
\draw[->, thick] (split) |- (dot_u);
\draw[->, thick] (aj) -- (dot_u);
\draw[->, thick] (dot_u) -- (add_u);
\draw[->, thick] (rho) -- (add_u);

% Right edges
\draw[->, thick] (split) |- (mult_v);
\draw[->, thick] (gk) -- (mult_v);
\draw[->, thick] (mult_v) -- (dot_v);
\draw[->, thick] (mj) -- (dot_v);
\draw[->, thick] (dot_v) -- (add_v);
\draw[->, thick] (delta) -- (add_v);

% Final edges: Using pos=0.75 to place labels neatly on the horizontal lines
\draw[->, thick] (add_u) |- node[pos=0.75, above, font=\sffamily\small] {$u_\theta(x,j)$} (add_final);
\draw[->, thick] (add_v) |- node[pos=0.75, above, font=\sffamily\small] {$v_\theta(x,j,k)$} (add_final);
\draw[->, thick] (add_final) -- (s);
\draw[->, thick] (s) -- (argmax);
\draw[->, thick] (argmax) -- (output);

% ================= Background Highlights =================
\begin{scope}[on background layer]
    % Highlight for Routing: Equalized inner sep for visual balance
    \node[fit=(aj)(rho)(add_u)(dot_u), fill=gblue!5, rounded corners=8pt,
        draw=gblue!40, dashed, thick, inner sep=18pt, label={[font=\sffamily\small\color{gblue},
        anchor=south]south:Routing score}] (box_u) {};

    % Highlight for Advice
    \node[fit=(gk)(delta)(add_v)(dot_v)(mult_v)(mj), fill=gred!5, rounded corners=8pt,
        draw=gred!40, dashed, thick, inner sep=18pt, label={[font=\sffamily\small\color{gred},
        anchor=south]south:Advice adjustment score}] (box_v) {};
\end{scope}

\end{tikzpicture}%
}
\caption{Structured parameterization of the composite policy score. The final
decision remains a single argmax over executed expert-advice pairs, while the
score decomposes into a routing term and an expert-conditional advice
adjustment.}
\label{fig:structured_policy}
\end{figure}

\paragraph{Resources.} All experiments were conducted on an internal cluster using an NVIDIA A100 GPU with 40 GB of
VRAM.

\paragraph{Reading the appendix tables.}
The main paper reports only the true deferral-advice loss, because that is the
deployment metric of interest and the one that remains meaningful across cost
regimes. The appendix keeps the same true-loss tables, but adds expert-by-advice
summaries and advice-distribution figures for selected values of $\lambda$.
These additional diagnostics help explain how the learned policy adapts as
advice becomes more or less expensive, and they make explicit how the
deployment-optimal expert-advice pair changes with the cost regime.

\subsubsection{Baselines}
We use the same baseline family throughout, and each baseline is included for a
specific reason. All methods are evaluated on exactly the same precomputed
executed-pair table, so differences in performance come from the decision rule
rather than from stochastic variation in the experts or advice sources.

\textbf{Our method.} This is our main method. It predicts directly over
the composite action space $\Pi=[J]\times[K]_0$ and is the only learned policy
that is allowed to coordinate expert selection and advice acquisition jointly.
Its role is therefore to measure the benefit of solving the full
deferral-advice problem.

\textbf{L2D.} This is the standard no-query baseline obtained by restricting
the action space to the $J$ actions $(j,0)$, as established in
\citep{mao2023twostage, mao2024regressionmultiexpertdeferral, montreuil2024twostagelearningtodefermultitasklearning}. It answers the first natural
question: is explicit advice acquisition useful at all beyond ordinary learning-to-defer? If our method does not improve over L2D, then the added
advice mechanism is not buying anything under the true deployment objective.

\textbf{Best fixed pair.} This baseline selects a single constant composite
action $(j,k)$ and applies it to every input. It is an oracle over non-adaptive
policies, and therefore measures how much is gained by input-dependent
decision-making rather than by choosing one good expert-advice pair globally.

\textbf{Partial-randomization ablations.} These are the two most informative
diagnostic baselines. In ``learned expert, random advice,'' we keep the expert
chosen by our learned policy and replace only the advice action by a
uniform random choice. In ``random expert, learned advice,'' we do the reverse.
Taken together, these two baselines separate the contribution of expert
selection from that of advice selection. If the full method improves
over both, then the gain cannot be attributed to only one half of the decision.

\textbf{Random $(j,k)$.} The final baseline samples both the expert and the
advice action uniformly. Its role is not diagnostic but calibrating: it
provides an uninformed floor against which the learned and structured methods
can be compared.

\subsection{Synthetic Theorem-Aligned Experiment}\label{app:synthetic_exp}

This appendix includes one synthetic benchmark whose role is purely
theoretical. The goal is not to mimic a realistic deployment pipeline, but to
instantiate the exact mechanism of Theorem~\ref{prop:separated_bayes_inconsistency} and show that the
observed failure of the separated surrogate is the one predicted by the
theory. The same benchmark also contains a second region in which advice is
genuinely useful, so that standard L2D is structurally suboptimal. The
synthetic experiment therefore isolates the two distinct reasons why the
augmented formulation should dominate the alternatives.

\paragraph{Goal, data, and metric.}
We work in the minimal binary setting of the theorem: two experts ($J=2$) and
one advice action ($K=1$), so the composite action space is
$\Pi=\{(1,0),(1,1),(2,0),(2,1)\}$. Inputs are sampled uniformly from
$[-1,1]^2$, and the first coordinate partitions the space into two regions:
\[
    R_- \coloneqq \{x\in[-1,1]^2 : x_1 < 0\},
    \qquad
    R_+ \coloneqq \{x\in[-1,1]^2 : x_1 \ge 0\}.
\]
The second coordinate is irrelevant to the Bayes rule; it is included only so
that the learned decision maps can be visualized on a two-dimensional grid. We
evaluate every policy by its expected true deferral-advice loss on an
independent test split drawn from the same synthetic environment. Because the
environment is defined directly through executed costs, the evaluation metric
matches the object studied in the theory exactly.

\paragraph{Certified cost construction.}
We set the consultation costs to zero, $\beta_1=\beta_2=0$, and use the
advice fees $\gamma_0=0$ and $\gamma_1=0.08$. The displayed matrices below are
conditional expected executed-cost tables,
\[
    T^\pm_{j,k}
    =
    \mathbb{E}\bigl[c_{j,k}(X,A,Y)\mid X\in R_\pm\bigr],
    \qquad j\in\{1,2\},\; k\in\{0,1\}.
\]
On $R_-$, we use the certified theorem table
\[
    T^-
    =
    \begin{pmatrix}
        0.38 & 1.08 \\
        0.50 & 0.51
    \end{pmatrix},
\]
while on $R_+$ we use the advice-helpful table
\[
    T^+
    =
    \begin{pmatrix}
        0.55 & 0.18 \\
        0.30 & 0.90
    \end{pmatrix}.
\]
Table~\ref{tab:synthetic_certified_regions} records these two regions in the
same executed-action format as the main paper. The left table is exactly the
bad table in Theorem~\ref{prop:separated_bayes_inconsistency} with
$b=0.50$, $\varepsilon=0.01$, $\delta=0.12$, and $C=1.08$. The tables are also
realizable under the cost model~\eqref{eq:advice_cost}: for each region and
pair $(j,k)$, let the task loss be Bernoulli with mean
$p^\pm_{j,k}=T^\pm_{j,k}-\gamma_k$. These probabilities are
\[
    p^- =
    \begin{pmatrix}
        0.38 & 1.00 \\
        0.50 & 0.43
    \end{pmatrix},
    \qquad
    p^+ =
    \begin{pmatrix}
        0.55 & 0.10 \\
        0.30 & 0.82
    \end{pmatrix},
\]
which all lie in $[0,1]$. Hence
$\mathbb{E}[\psi(e_j(X,\widetilde A^{(k)}),Y)\mid X\in R_\pm]+\gamma_k
=T^\pm_{j,k}$, so the synthetic environment is not an abstract cost table
detached from the prediction-loss-plus-advice-fee model.

\begin{table}[ht]
\centering
\caption{Certified conditional executed-cost tables used in the synthetic
benchmark. Each row is one region of the input space, and each column is one
executed expert-advice pair.}
\label{tab:synthetic_certified_regions}
\small
\begin{tabular}{lcccccc}
\toprule
Region & $(1,0)$ & $(1,1)$ & $(2,0)$ & $(2,1)$ & Bayes action & Bayes risk \\
\midrule
$R_-$ & 0.380 & 1.080 & 0.500 & 0.510 & $(1,0)$ & 0.380 \\
$R_+$ & 0.550 & 0.180 & 0.300 & 0.900 & $(1,1)$ & 0.180 \\
\bottomrule
\end{tabular}
\end{table}

The left region is the theorem region. Bayes compares row minima and therefore
prefers expert~$1$, since $\min_k T^-_{1,k}=0.38 < 0.50 = \min_k T^-_{2,k}$.
The exact binary separated surrogate from the paper behaves differently: its
profiled row summaries satisfy
\[
    F(0.38,1.08)=0.8371
    \;>\;
    0.7000=F(0.50,0.51),
\]
so the profiled surrogate reverses the comparison and favors expert~$2$. The
right region serves a different purpose. There, Bayes chooses the queried
action $(1,1)$ with cost $0.18$, whereas L2D is restricted to the no-advice
column and must therefore pick expert~$2$ with cost $0.30$. Since both regions
have positive probability, the separated surrogate and L2D each incur a
positive irreducible excess risk, but for different reasons.

\paragraph{Policies and exact separated baseline.}
We compare our method, L2D, the same random baselines used in the
real experiments, and one \emph{exact} implementation of the binary separated
surrogate analyzed in Section~\ref{sec:separated}. This is important. The
separated baseline is not an approximate router-query variant. It is precisely
the special case obtained by setting $\nu=\mathrm{id}$, $G\equiv 1$,
$U=\mathrm{id}$, and $\Psi_j=\Phi_{j-1}$, so that each indicator in the true
binary loss is replaced by its logistic envelope. Concretely, the model uses
one binary router score $s_r^b$ and one expert-conditional query score
$s_{q_1^b}$ or $s_{q_2^b}$ per row, exactly as in the theorem statement.

\paragraph{Training protocol.}
All learned methods --- our augmented surrogate, the separated baseline, and the
L2D baseline --- share the same backbone: a two-layer MLP over the synthetic
input $x\in\mathbb{R}^2$, with hidden widths $(32,32)$ and ReLU activations.
They are trained with AdamW (learning rate $3\times 10^{-3}$, no weight decay,
gradient clipping at norm $10$) for $120$ epochs with batch size $256$ and no
learning-rate scheduler. Our method uses the augmented comp-sum surrogate at
$\tau=1$; the separated baseline uses the exact binary logistic surrogate
described above; and the L2D baseline is restricted to the $k{=}0$ slice and
trained with the standard cross-entropy deferral surrogate. We sweep the
training size over $n\in\{250,500,1000,2500,5000\}$ and report averages over
five seeds. The figures and tables below use the largest train size, $n=5000$,
for the pointwise comparisons.

\begin{table}[ht]
\centering
\caption{Synthetic test performance at the largest train size ($n=5000$). Our
method is the only one whose excess risk is essentially zero.}
\label{tab:synthetic_main}
\small
\setlength{\tabcolsep}{4pt}
\begin{tabular}{lcccc}
\toprule
Method & Test true risk & Test excess risk & Advice rate (\%) & Bayes-action match (\%) \\
\midrule
Our & \textbf{$0.281\pm 0.001$} & \textbf{$0.001\pm 0.000$} & 49.6 & \textbf{99.3} \\
Separated & $0.334\pm 0.004$ & $0.054\pm 0.004$ & 62.5 & 58.4 \\
L2D & $0.342\pm 0.001$ & $0.062\pm 0.000$ & 0.0 & 48.8 \\
Learned route, random advice & $0.661\pm 0.002$ & $0.381\pm 0.002$ & 49.6 & 24.5 \\
Random route, no advice & $0.432\pm 0.001$ & $0.152\pm 0.001$ & 0.0 & 24.9 \\
Random $(j,k)$ & $0.550\pm 0.003$ & $0.269\pm 0.004$ & 50.1 & 24.9 \\
\bottomrule
\end{tabular}
\end{table}

Table~\ref{tab:synthetic_main} is the central quantitative result. Our method
reaches $0.281\pm0.001$ true risk, essentially matching the
Bayes optimum of $0.280$. The separated surrogate stalls far above this level,
with excess risk $0.054\pm0.004$, even though it is trained with the exact
binary surrogate analyzed in the theorem. L2D also remains suboptimal, but for
a different reason: it never queries, so it cannot realize the Bayes action on
$R_+$. The random baselines are much worse, which confirms that the gain is
not coming from a trivial bias in the environment.

\begin{table}[ht]
\centering
\caption{Regionwise diagnosis at the largest train size ($n=5000$). The theorem region $R_-$ isolates the failure of the separated
surrogate, while the advice-helpful region $R_+$ isolates the limitation of
L2D.}
\label{tab:synthetic_regions}
\small
\setlength{\tabcolsep}{4pt}
\begin{tabular}{lcccc}
\toprule
Method & Risk on $R_-$ & Risk on $R_+$ & Bayes match on $R_-$ (\%) & Bayes match on $R_+$ (\%) \\
\midrule
Our & \textbf{0.381} & \textbf{0.181} & \textbf{99.3} & \textbf{99.3} \\
Separated & 0.484 & 0.184 & 18.0 & 99.0 \\
L2D & 0.383 & 0.300 & 97.5 & 0.0 \\
Learned route, random advice & 0.727 & 0.594 & 48.9 & 0.1 \\
Random route, no advice & 0.440 & 0.424 & 49.8 & 0.0 \\
Random $(j,k)$ & 0.617 & 0.482 & 24.7 & 25.1 \\
\bottomrule
\end{tabular}
\end{table}

Table~\ref{tab:synthetic_regions} shows that the two failure modes are exactly
the ones predicted by the theory. On $R_-$, the separated surrogate matches
the Bayes action only $18.0\%$ of the time, even though it is nearly perfect on
$R_+$; this is the empirical footprint of the profiled-row distortion proved in
Theorem~\ref{prop:separated_bayes_inconsistency}. L2D exhibits the opposite pattern. It is nearly
optimal on $R_-$, where the Bayes action is no-advice, but it matches the
Bayes action $0\%$ of the time on $R_+$ because the Bayes decision there is the
queried action $(1,1)$. Our method is nearly perfect on both regions, which is
exactly the consistency story established by Theorem~\ref{thm:augmented}.

\begin{figure}[ht]
\centering
\includegraphics[width=\linewidth]{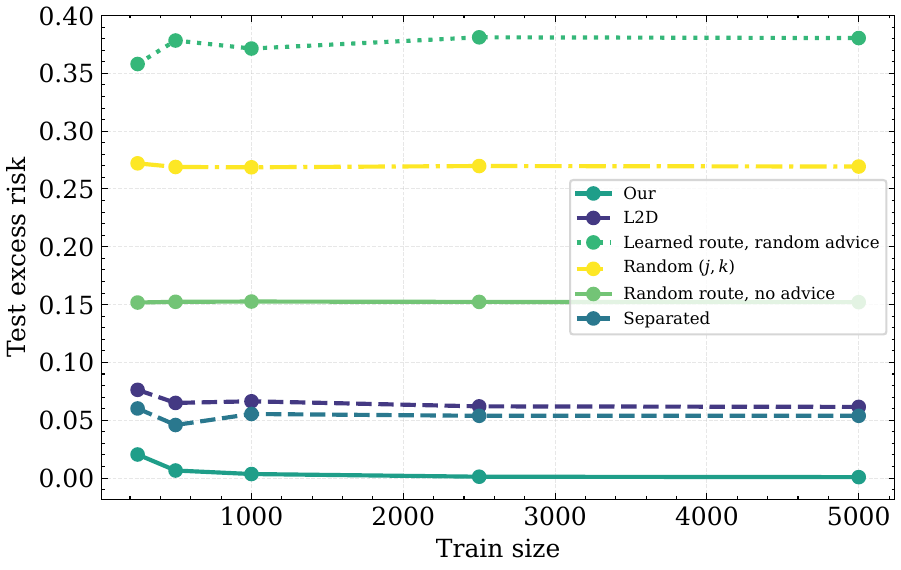}
\caption{Synthetic test excess risk as a function of train size. Our method
approaches zero excess risk, whereas the exact separated surrogate and L2D
remain bounded away from the Bayes optimum.}
\label{fig:synthetic_excess_risk}
\end{figure}

\begin{figure}[ht]
\centering
\includegraphics[width=\linewidth]{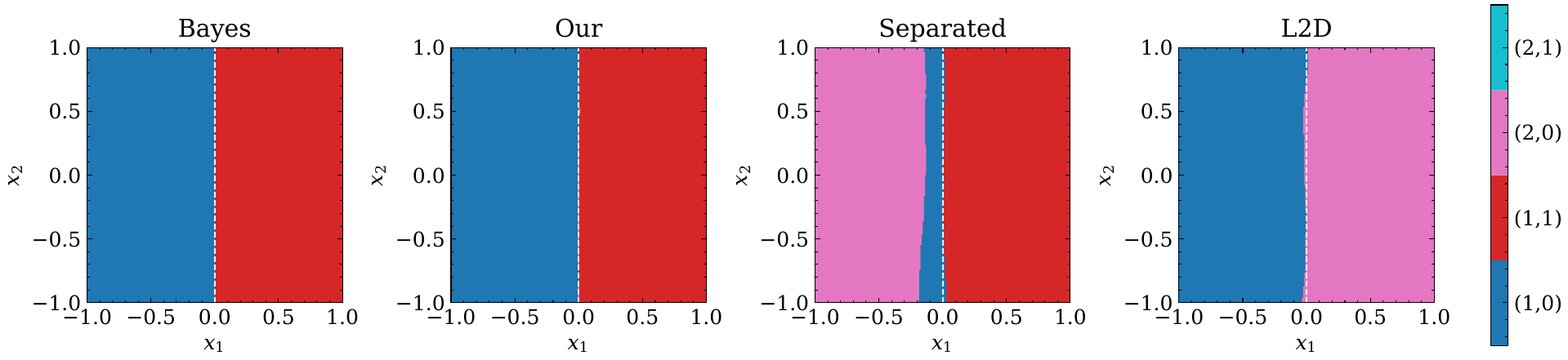}
\caption{Learned decision maps at the largest train size ($n=5000$). The
vertical dashed line marks the transition between the theorem region $R_-$ and
the advice-helpful region $R_+$. Our method recovers the Bayes split, the
separated surrogate selects the wrong expert on the left region, and L2D
cannot realize the queried Bayes action on the right region.}
\label{fig:synthetic_action_maps}
\end{figure}

Figures~\ref{fig:synthetic_excess_risk}
and~\ref{fig:synthetic_action_maps} make the same point visually. The excess
risk curve shows that our method is the only one that continues to improve
toward Bayes optimality as the sample size grows. The action maps then show
\emph{why}: our method learns the correct composite decision on both halves of
the space, while the exact separated surrogate and L2D fail on the region
singled out by the corresponding theory.

\subsection{FEVER Experimental Details}\label{app:exp_details}

This appendix records the implementation details that are omitted from the main
paper for space reasons. We focus on the FEVER \citep{thorne2018fever} experiment reported in
Section~\ref{sec:experiments}, since it is the main empirical study in the
paper.

\begin{algorithm}[ht]
\caption{FEVER inference example}
\label{alg:fever_inference_example}
\begin{algorithmic}[1]
\REQUIRE Claim $x$, learned composite scorer $s_\pi(\cdot,\cdot;\widehat{\theta})$
\STATE Define the FEVER expert set
$\{\text{Qwen3-4B-Instruct}, \text{Qwen3-8B}, \text{DeBERTa-v3}\}$
\STATE Define the advice actions
$\{0,\text{top-1},\text{top-3},\text{top-5},\text{reformulate+top-5}\}$
\STATE Select the executed pair
\[
    (\widehat{j},\widehat{k})
    \leftarrow
    \argmax_{(j,k)\in[3]\times[4]_0} s_\pi(x,(j,k);\widehat{\theta})
\]
\IF{$\widehat{k}=0$}
    \STATE Set $\widetilde{a}^{(\widehat{k})}\leftarrow \widetilde{a}^{(0)}$
    and issue no retrieval request
\ELSIF{$\widehat{k}=\text{top-1}$}
    \STATE Run BM25 on claim $x$ and keep the top-$1$ retrieved passage
    \STATE Mask all non-selected advice slots to form
    $\widetilde{a}^{(\widehat{k})}$
\ELSIF{$\widehat{k}=\text{top-3}$}
    \STATE Run BM25 on claim $x$ and keep the top-$3$ retrieved passages
    \STATE Mask all non-selected advice slots to form
    $\widetilde{a}^{(\widehat{k})}$
\ELSIF{$\widehat{k}=\text{top-5}$}
    \STATE Run BM25 on claim $x$ and keep the top-$5$ retrieved passages
    \STATE Mask all non-selected advice slots to form
    $\widetilde{a}^{(\widehat{k})}$
\ELSE
    \STATE Use the reformulation model to rewrite the claim query
    \STATE Mask all non-selected advice slots to form
    $\widetilde{a}^{(\widehat{k})}$
\ENDIF
\STATE Route the claim and masked advice to expert $\widehat{j}$
\STATE Output the FEVER label
$e_{\widehat{j}}(x,\widetilde{a}^{(\widehat{k})}) \in
\{\textsc{Supports},\textsc{Refutes},\textsc{Not Enough Info}\}$
\STATE \textbf{return} executed pair
$\widehat{\pi}(x)=(\widehat{j},\widehat{k})$ and prediction
$e_{\widehat{j}}(x,\widetilde{a}^{(\widehat{k})})$
\end{algorithmic}
\end{algorithm}

\paragraph{Data, action space, and metric.}
FEVER \citep{thorne2018fever} is a fact-verification benchmark built from short natural-language claims
paired with Wikipedia evidence. Each example must be classified as
\texttt{SUPPORTS}, \texttt{REFUTES}, or \texttt{NOT\_ENOUGH\_INFO} depending on
whether the claim is supported by the evidence available in Wikipedia. A
typical example is a claim such as ``The Eiffel Tower is located in Berlin'';
the correct label is \texttt{REFUTES} once the relevant Wikipedia passages are
retrieved. The experiment uses $15{,}000$ examples sampled from the labelled
development split, with a stratified $75/25$ train-validation partition. The
composite action space has size $3 \times 5$, corresponding to three experts
and five advice actions. All policy comparisons use the global protocol of
Section~\ref{app:global_experimental_protocol}; in particular, the reported
numbers are validation averages of the true deferral-advice loss.

\paragraph{Experts and advice construction.}
The expert pool contains Qwen3-4B-Instruct \citep{qwen3}, Qwen3-8B, and
DeBERTa-v3-large \citep{he2021debertav3} fine-tuned on entailment tasks, with respective consultation
costs $\beta=(0.03,0.05,0.04)$. Advice is derived from Wikipedia evidence
retrieved with BM25 \citep{robertson2009probabilistic}. The five advice actions are: no retrieval, BM25 top-$1$,
BM25 top-$3$, BM25 top-$5$, and query reformulation followed by BM25 top-$5$.
The reformulated query is produced by Qwen2.5-1.5B. Query costs take the form
$\gamma_k=\lambda\gamma_k^{\mathrm{base}}$ with
$\gamma^{\mathrm{base}}=(0,0.015,0.02,0.03,0.01)$. This construction is chosen
to induce heterogeneous expert-advice trade-offs, rather than to mimic exact
monetary costs.

\paragraph{Prompt templates.}
The expert-facing prompts are deliberately simple so that the experiment
isolates the interaction between routing and advice rather than prompt
engineering. For the two Qwen experts, the system instruction is to return
exactly one of \texttt{SUPPORTS}, \texttt{REFUTES}, or
\texttt{NOT\_ENOUGH\_INFO}. The user message then instantiates the template
\begin{quote}\small
\texttt{Claim: <claim>.}\\
\texttt{Evidence: <retrieved passages, or EMPTY when $k=0$>.}\\
\texttt{Answer with exactly one label: SUPPORTS, REFUTES, or
NOT\_ENOUGH\_INFO.}
\end{quote}
For example, if the claim is ``The Eiffel Tower is located in Berlin,'' the
top-$1$ advice may supply a passage stating that the Eiffel Tower is in Paris,
in which case the desired output is \texttt{REFUTES}. The reformulator is
prompted separately to rewrite the claim into short retrieval text, e.g.
\begin{quote}\small
\texttt{Rewrite the claim as a concise Wikipedia search query. Return only the
query text.}
\end{quote}
The exact prompt strings used in the runs are the ones stored in the released
experiment configuration.

\paragraph{Policy training and baselines.}
Input features are frozen \texttt{all-MiniLM-L6-v2} sentence embeddings
(dimension~$384$). All learned methods --- our augmented surrogate and the L2D
baseline --- share the same backbone: a two-layer MLP with hidden dimensions
$(128,64)$, ReLU activations, and no dropout. They are trained with AdamW
(weight decay $10^{-4}$, gradient clipping at norm
$10$) for $50$ epochs with batch size $128$ and no learning-rate scheduler. We train L2D with
a learning rate $5 \times 10^{-3}$ and our augmented surrogate with learning rate $10^{-3}$. The
augmented comp-sum surrogate uses $\tau=1$; the L2D baseline uses the same
architecture but is restricted to the $k{=}0$ slice and trained with the
standard cross-entropy deferral surrogate. The best-fixed,
partial-randomization, and joint-random baselines are exactly those defined in
Section~\ref{app:global_experimental_protocol}, instantiated on the FEVER
expert pool and advice set.

\begin{table}[ht]
\centering
\caption{Validation accuracy (\%) of each expert under each advice action at
$\lambda=0$. Bold marks the best advice action for each expert. The preferred action is
expert-dependent.}
\label{tab:expert_hint}
\small
\begin{tabular}{lccccc}
\toprule
 & $k{=}0$ & $k{=}1$ & $k{=}2$ & $k{=}3$ & $k{=}4$ \\
 & (none) & (top-1) & (top-3) & (top-5) & (reform.) \\
\midrule
Qwen3-4B-Instruct & 33.5 & 33.4 & 32.9 & \textbf{36.3} & 34.0 \\
Qwen3-8B          & 33.5 & 33.4 & 34.1 & \textbf{41.1} & 35.7 \\
DeBERTa-v3-large  & 33.2 & \textbf{42.9} & 42.5 & 42.3 & 39.2 \\
\bottomrule
\end{tabular}
\end{table}

\paragraph{Reading the main-paper FEVER table.}
Table~\ref{tab:expert_hint} shows that advice utility is expert-dependent: the
best retrieval strategy is not shared across experts, which is precisely the
regime in which a joint expert-advice policy is needed. Table~\ref{tab:fever_main}
then evaluates policies on the true deployment metric, reporting mean and
standard deviation over $4$ random seeds. Its central message is that our method
achieves the lowest mean loss for seven of the eight retained cost regimes and
ties L2D when advice becomes too expensive at $\lambda=10$, while also
dominating the non-adaptive and randomized references. The best fixed composite
action changes with $\lambda$: it is DeBERTa with top-$1$ retrieval for
$\lambda \in \{0,0.5,1,2,4\}$, and the cheapest no-advice expert for
$\lambda \in \{6,8,10\}$.

\begin{table}[ht]
\centering
\caption{Best advice action for each expert across the retained FEVER cost
grid. Each cell reports the best advice index $k$ and the corresponding true
loss. Here $k=0$ denotes no advice, $k=1$ BM25 top-$1$, $k=2$ BM25 top-$3$,
$k=3$ BM25 top-$5$, and $k=4$ reformulation followed by BM25 top-$5$.}
\label{tab:fever_best_advice_all_lambdas}
\scriptsize
\setlength{\tabcolsep}{3pt}
\begin{tabular}{lcccccccc}
\toprule
Expert & $\lambda=0$ & $\lambda=0.5$ & $\lambda=1$ & $\lambda=2$ & $\lambda=4$ & $\lambda=6$ & $\lambda=8$ & $\lambda=10$ \\
\midrule
Qwen3-4B-Instruct & $k{=}3$ / 0.667 & $k{=}3$ / 0.682 & $k{=}0$ / 0.695 & $k{=}0$ / 0.695 & $k{=}0$ / 0.695 & \textbf{$k{=}0$ / 0.695} & \textbf{$k{=}0$ / 0.695} & \textbf{$k{=}0$ / 0.695} \\
Qwen3-8B          & $k{=}3$ / 0.639 & $k{=}3$ / 0.654 & $k{=}3$ / 0.669 & $k{=}3$ / 0.699 & $k{=}0$ / 0.715 & $k{=}0$ / 0.715 & $k{=}0$ / 0.715 & $k{=}0$ / 0.715 \\
DeBERTa-v3-large  & \textbf{$k{=}1$ / 0.611} & \textbf{$k{=}1$ / 0.619} & \textbf{$k{=}1$ / 0.626} & \textbf{$k{=}1$ / 0.641} & \textbf{$k{=}1$ / 0.671} & $k{=}1$ / 0.701 & $k{=}0$ / 0.708 & $k{=}0$ / 0.708 \\
\bottomrule
\end{tabular}
\end{table}

Table~\ref{tab:fever_best_advice_all_lambdas} makes the mechanism explicit over
the full retained grid. At low advice cost, different experts prefer different
advice actions: the two Qwen experts favor deeper retrieval, whereas DeBERTa
prefers the cheapest retrieval action. As $\lambda$ increases, these choices
do not collapse simultaneously. Qwen3-4B switches to no advice already at
$\lambda=1$, Qwen3-8B remains advice-seeking until $\lambda=4$, and DeBERTa
keeps top-$1$ retrieval until $\lambda=6$ before reverting to no advice at
$\lambda=8$. This staggered transition is exactly the pattern the joint
expert-advice formulation is designed to capture.

\begin{figure}[ht]
\centering
\includegraphics[width=\linewidth]{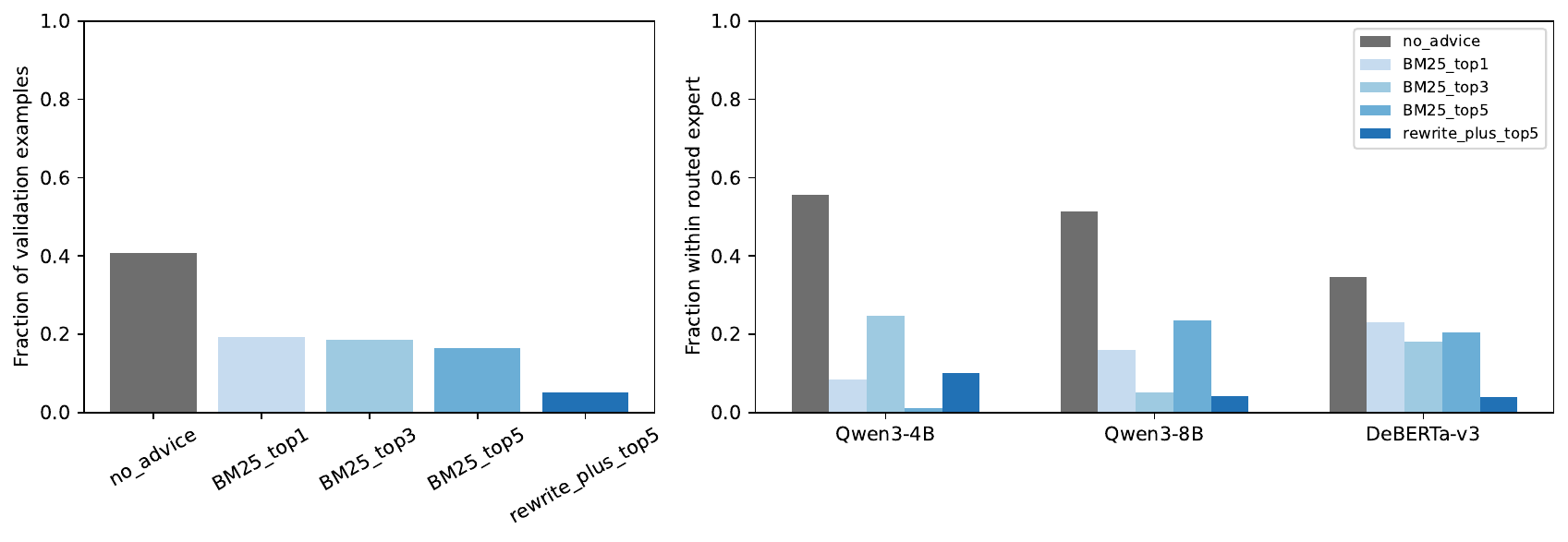}
\caption{Deployed advice distribution of our method at $\lambda=2$ in
FEVER. Compared with Figure~\ref{fig:fever_advice_distribution_lambda0}, the
policy shifts mass away from expensive advice actions, but it does not collapse
uniformly to no advice. The expert-conditional panel shows that the transition
remains expert-dependent.}
\label{fig:fever_advice_distribution_lambda2}
\end{figure}

Figures~\ref{fig:fever_advice_distribution_lambda0}
and~\ref{fig:fever_advice_distribution_lambda2} show how our method
allocates mass across advice actions. As the advice cost increases, the policy
shifts away from expensive advice, but it does not do so uniformly. The
reallocation toward cheaper actions remains expert-dependent.

\begin{table}[ht]
\centering
\caption{Expert-by-advice true deferral-advice loss at $\lambda=2$. Lower is
better.}
\label{tab:fever_lambda2_appendix}
\scriptsize
\setlength{\tabcolsep}{3.5pt}
\begin{tabular}{lccccc}
\toprule
 & $k{=}0$ & $k{=}1$ & $k{=}2$ & $k{=}3$ & $k{=}4$ \\
\midrule
Qwen3-4B-Instruct & 0.695 & 0.726 & 0.741 & 0.727 & 0.710 \\
Qwen3-8B          & 0.715 & 0.746 & 0.749 & 0.699 & 0.713 \\
DeBERTa-v3-large  & 0.708 & \textbf{0.641} & 0.655 & 0.677 & 0.668 \\
\bottomrule
\end{tabular}
\end{table}

\begin{table}[ht]
\centering
\caption{Expert-by-advice true deferral-advice loss at $\lambda=10$. Lower is
better.}
\label{tab:fever_lambda10_appendix}
\scriptsize
\setlength{\tabcolsep}{3.5pt}
\begin{tabular}{lccccc}
\toprule
 & $k{=}0$ & $k{=}1$ & $k{=}2$ & $k{=}3$ & $k{=}4$ \\
\midrule
Qwen3-4B-Instruct & \textbf{0.695} & 0.846 & 0.901 & 0.967 & 0.790 \\
Qwen3-8B          & 0.715 & 0.866 & 0.909 & 0.939 & 0.793 \\
DeBERTa-v3-large  & 0.708 & 0.761 & 0.815 & 0.917 & 0.748 \\
\bottomrule
\end{tabular}
\end{table}

Tables~\ref{tab:fever_lambda2_appendix} and~\ref{tab:fever_lambda10_appendix}
show directly how the deployment-optimal expert-advice pair changes with the
cost regime. At $\lambda=2$, the lowest loss is achieved by DeBERTa with
top-$1$ retrieval. At $\lambda=10$, the optimum shifts to the cheapest no-query
expert. What matters operationally is therefore the full deferral-advice loss,
not retrieval quality in isolation.

\subsection{Sensitive-Escalation Experimental Details}%
\label{app:sensitive_exp_details}

This appendix reports the tabular sensitive-escalation experiment that is
deferred from the main paper. The objective is the same as in FEVER, but in a
different regime: advice corresponds to progressively more sensitive feature
groups rather than retrieved text. The question is whether the composite
expert-advice formulation still improves the true deployment objective once the
system must trade predictive value against both expert fees and escalation
costs.

\paragraph{Data, action space, and metric.}
We use the IEEE-CIS fraud detection benchmark
\citep{ieee-fraud-detection}. Each example is a payment event with tabular
transaction fields, optional identity attributes, and a binary fraud label. A
useful operational picture is an online card transaction for which the system
initially observes only a restricted low-sensitivity view, such as coarse
merchant, amount, and transaction metadata. Before sending the case to an
expert, it may decide whether to reveal additional information. For instance,
the policy may request contact metadata, a payment-profile summary, a device
profile, or stricter identity attributes if these extra signals are worth their
escalation cost. We follow the split used in the experiment configuration and
train on $120{,}000$ examples with an $8{,}000$-example validation set. With
three experts and five advice actions, the composite action space again has
size $3 \times 5$. As in FEVER, all policy comparisons follow the global
protocol of Section~\ref{app:global_experimental_protocol}; in particular, the
reported numbers are validation averages of the true deferral-advice loss.

\paragraph{Experts and advice construction.}
The expert pool contains a logistic-regression model, a histogram GBDT, and an
MLP, with consultation costs $\beta=(0.02,0.06,0.09)$. Advice does not take the
form of retrieved text here. Instead, it corresponds to revealing additional
feature groups beyond the base low-sensitivity transaction view. The advice
levels are nested: each action reveals all information from the previous levels
and then adds one more group. Concretely, the five advice actions are:
\begin{itemize}
    \item $k=0$: no additional information.
    \item $k=1$: the base view plus low-security contact metadata.
    \item $k=2$: the information in $k=1$ plus medium-security payment-profile attributes.
    \item $k=3$: the information in $k=2$ plus high-security device-profile attributes.
    \item $k=4$: the information in $k=3$ plus strict identity attributes.
\end{itemize}
Their base costs are
$\gamma^{\mathrm{base}}=(0,0.08,0.12,0.16,0.20)$, multiplied by
$\lambda \in \{0,0.04,0.12,0.20,5.0\}$. Richer advice can improve fraud detection, but each escalation
level carries a larger operational and privacy cost.

The experts themselves are trained offline in a way that matches this nested
advice protocol. For each training example, we do not duplicate the example
across all five advice levels. Instead, we sample one advice level uniformly
from $\{0,1,2,3,4\}$, reveal exactly the corresponding prefix of feature
groups, and train the expert on that masked view augmented with binary reveal
indicators. This produces one memory-safe pooled training set in which each
expert sees a mixture of low- and high-information cases. The logistic model
uses balanced class weights, the histogram GBDT uses class-balanced sample
weights, and the MLP additionally oversamples the minority class. After this
offline training stage, each expert is evaluated under all five advice levels
on the train and validation splits to populate the executed expert-advice cost
tensor used by the policy learner.

\paragraph{Policy training and baselines.}
Input features are the standardized tabular representation (identity features
with z-score normalization). All learned methods --- our augmented surrogate
and the L2D baseline --- share the same backbone: a single-hidden-layer MLP
with hidden dimension $64$, ReLU activation, and no dropout. They are trained
with AdamW (learning rate $10^{-3}$, no weight decay, gradient clipping at norm
$10$) for $30$ epochs with batch size $512$ and a cosine learning-rate
scheduler with $5\%$ linear warmup. The augmented comp-sum surrogate uses
$\tau=1$; the L2D baseline uses the same architecture but is restricted to the
$k{=}0$ slice and trained with the standard cross-entropy deferral surrogate.
For every example and every pair $(j,k)$, we precompute the expert prediction
and the resulting realized cost, so all methods are trained and evaluated on
exactly the same expert-advice outcome tensor. The best-fixed,
partial-randomization, and joint-random baselines are exactly those defined in
Section~\ref{app:global_experimental_protocol}, instantiated on this
fraud-detection expert pool and advice set.

\paragraph{Reading the appendix sensitive-escalation tables.}
The tables below play the same role as their FEVER counterparts, but in a
different modality. Table~\ref{tab:sensitive_best_advice_all_lambdas} first
asks how each expert behaves when advice cost changes: which advice action is
best for that expert, and how quickly does it revert to no advice as
escalation becomes expensive? Table~\ref{tab:sensitive_main_appendix} then
evaluates full policies on the true deployment metric. The detailed tables at
$\lambda=0$, $\lambda=0.12$, and $\lambda=5$ make the cost-dependent
expert-advice optimum explicit.

\begin{table}[ht]
\centering
\caption{Best advice action for each expert across the retained
sensitive-escalation cost grid. Each cell reports the best advice index $k$ and
the corresponding true loss. Here $k=0$ denotes no advice, $k=1$ low-security
metadata, $k=2$ medium-security profile, $k=3$ high-security device profile,
and $k=4$ strict identity attributes.}
\label{tab:sensitive_best_advice_all_lambdas}
\begin{tabular}{lccccc}
\toprule
Expert & $\lambda=0$ & $\lambda=0.04$ & $\lambda=0.12$ & $\lambda=0.20$ & $\lambda=5$ \\
\midrule
Linear & $k{=}4$ / 0.308 & $k{=}3$ / 0.314 & $k{=}0$ / 0.325 & $k{=}0$ / 0.325 & $k{=}0$ / 0.325 \\
Tree   & \textbf{$k{=}4$ / 0.271} & \textbf{$k{=}4$ / 0.279} & \textbf{$k{=}2$ / 0.291} & \textbf{$k{=}2$ / 0.301} & \textbf{$k{=}0$ / 0.317} \\
MLP    & $k{=}2$ / 0.359 & $k{=}2$ / 0.364 & $k{=}0$ / 0.369 & $k{=}0$ / 0.369 & $k{=}0$ / 0.369 \\
\bottomrule
\end{tabular}
\end{table}

Table~\ref{tab:sensitive_best_advice_all_lambdas} highlights the same phenomenon
as in FEVER in a tabular setting. The best advice action depends both on the
expert and on the cost regime. When escalation is cheap, both the linear model and the tree expert prefer the
most revealing strict-security tier ($k{=}4$), while the MLP prefers the
medium-security tier ($k{=}2$). As the escalation cost
increases, these preferences shift back toward cheaper advice or no advice, but
not at the same rate for all experts. This is precisely the kind of
expert-dependent trade-off that motivates a joint expert-advice policy.

\begin{table}[ht]
\centering
\caption{Validation true deferral-advice loss in the sensitive-escalation
experiment. Entries are mean $\pm$ standard deviation over $4$ random seeds. Lower
is better.}
\label{tab:sensitive_main_appendix}
\small
\setlength{\tabcolsep}{4pt}
\begin{tabular}{lccccc}
\toprule
Method & $\lambda{=}0$ & $\lambda{=}0.04$ & $\lambda{=}0.12$ & $\lambda{=}0.20$ & $\lambda{=}5$ \\
\midrule
Our & \textbf{0.260 $\pm$ 0.001} & \textbf{0.267 $\pm$ 0.001} & \textbf{0.274 $\pm$ 0.002} & \textbf{0.279 $\pm$ 0.002} & \textbf{0.281 $\pm$ 0.001} \\
L2D & 0.282 $\pm$ 0.001 & 0.282 $\pm$ 0.001 & 0.282 $\pm$ 0.001 & 0.282 $\pm$ 0.001 & 0.282 $\pm$ 0.001 \\
Best fixed pair & 0.271 & 0.279 & 0.291 & 0.301 & 0.317 \\
Learned expert, random advice & 0.287 $\pm$ 0.003 & 0.297 $\pm$ 0.003 & 0.304 $\pm$ 0.003 & 0.313 $\pm$ 0.003 & 0.837 $\pm$ 0.004 \\
Random expert, learned advice & 0.310 $\pm$ 0.003 & 0.314 $\pm$ 0.003 & 0.322 $\pm$ 0.003 & 0.325 $\pm$ 0.004 & 0.338 $\pm$ 0.003 \\
Random $(j,k)$ & 0.325 $\pm$ 0.003 & 0.330 $\pm$ 0.003 & 0.339 $\pm$ 0.003 & 0.348 $\pm$ 0.003 & 0.885 $\pm$ 0.006 \\
\bottomrule
\end{tabular}
\end{table}

Table~\ref{tab:sensitive_main_appendix} shows a consistent pattern across the
entire cost sweep. Our method attains the lowest mean true loss at every
value of $\lambda$, with the largest gains over L2D when escalation is cheap
and only a small residual gap at $\lambda=5$. This is the expected pattern as
the optimal policy moves gradually toward no-advice actions. Our method
also stays below the best fixed pair throughout the sweep. That fixed pair
changes with the cost regime, moving from the tree expert with strict-security
advice at $\lambda \in \{0,0.04\}$, to the same expert with medium-security
advice at $\lambda \in \{0.12,0.20\}$, and finally to the no-advice tree expert
at $\lambda=5$. The partial-randomization ablations further show that the gain
comes from coordinating expert selection and advice acquisition rather than
from either component alone; in particular, randomizing the advice becomes
especially damaging at high cost, because unnecessary escalation is then heavily
penalized.

\begin{figure}[ht]
\centering
\includegraphics[width=\linewidth]{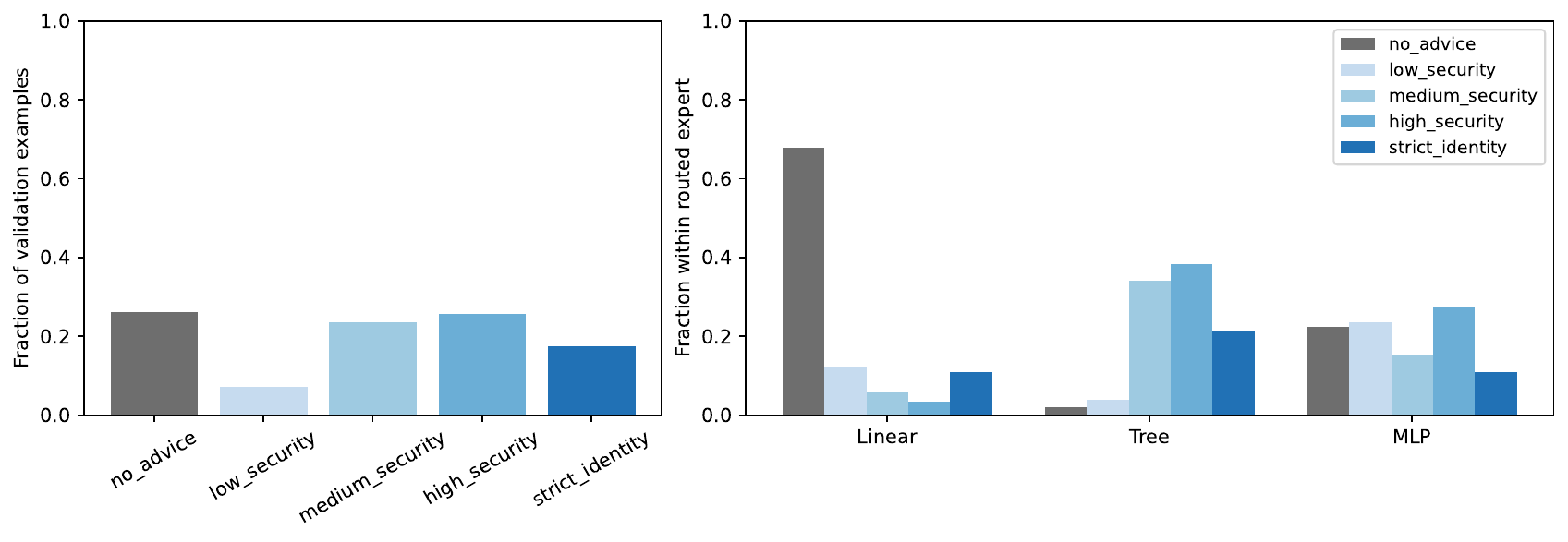}
\caption{Deployed advice distribution of our method at
$\lambda=0.12$. The left panel shows the overall fraction of validation
examples assigned to each advice action. The right panel conditions on the
routed expert. At this moderate cost, the learned policy still uses queried
advice frequently, but the preferred advice level already depends on the chosen
expert.}
\label{fig:sensitive_advice_distribution_lambda012}
\end{figure}

\begin{figure}[ht]
\centering
\includegraphics[width=\linewidth]{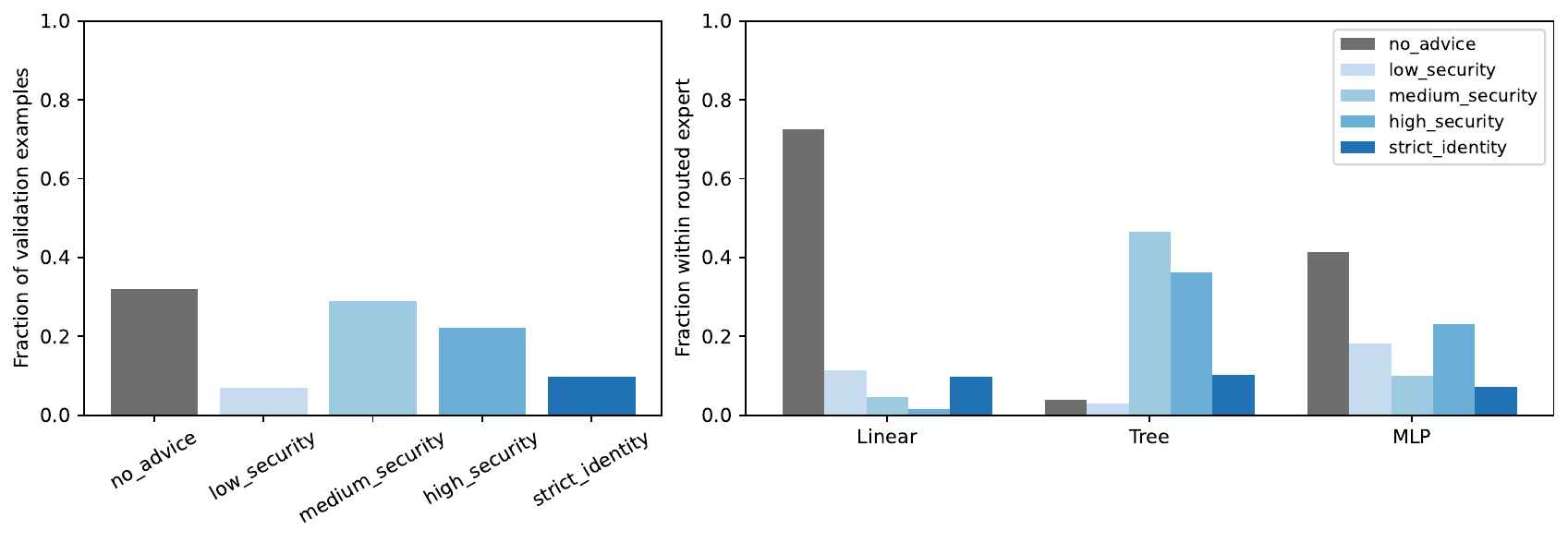}
\caption{Deployed advice distribution of our method at
$\lambda=0.20$. Compared with Figure~\ref{fig:sensitive_advice_distribution_lambda012},
the policy shifts further toward cheaper advice and no advice. The right panel
shows that this transition is still expert-dependent rather than uniform across
the routed experts.}
\label{fig:sensitive_advice_distribution_lambda020}
\end{figure}

Figures~\ref{fig:sensitive_advice_distribution_lambda012}
and~\ref{fig:sensitive_advice_distribution_lambda020} show how the policy
allocates mass across advice levels. The transition with $\lambda$ is not
simply ``query'' versus ``no-query.'' As advice becomes more expensive, the
learned policy first moves toward cheaper advice levels and only then collapses
toward no advice, which explains why the augmented and L2D risks nearly
coincide at $\lambda=5$.

\begin{table}[ht]
\centering
\caption{Expert-by-advice true deferral-advice loss at $\lambda=0$. Lower is
better.}
\label{tab:sensitive_lambda0_appendix}
\scriptsize
\setlength{\tabcolsep}{3.5pt}
\begin{tabular}{lccccc}
\toprule
 & $k{=}0$ & $k{=}1$ & $k{=}2$ & $k{=}3$ & $k{=}4$ \\
\midrule
Linear & 0.325 & 0.322 & 0.316 & 0.308 & 0.308 \\
Tree   & 0.317 & 0.301 & 0.277 & 0.273 & \textbf{0.271} \\
MLP    & 0.369 & 0.382 & 0.360 & 0.377 & 0.372 \\
\bottomrule
\end{tabular}
\end{table}

\begin{table}[ht]
\centering
\caption{Expert-by-advice true deferral-advice loss at $\lambda=0.12$. Lower is
better.}
\label{tab:sensitive_lambda012_appendix}
\scriptsize
\setlength{\tabcolsep}{3.5pt}
\begin{tabular}{lccccc}
\toprule
 & $k{=}0$ & $k{=}1$ & $k{=}2$ & $k{=}3$ & $k{=}4$ \\
\midrule
Linear & 0.325 & 0.332 & 0.330 & 0.327 & 0.332 \\
Tree   & 0.317 & 0.310 & \textbf{0.291} & 0.292 & 0.295 \\
MLP    & 0.369 & 0.391 & 0.374 & 0.396 & 0.396 \\
\bottomrule
\end{tabular}
\end{table}

\begin{table}[ht]
\centering
\caption{Expert-by-advice true deferral-advice loss at $\lambda=5$. Lower is
better.}
\label{tab:sensitive_lambda5_appendix}
\scriptsize
\setlength{\tabcolsep}{3.5pt}
\begin{tabular}{lccccc}
\toprule
 & $k{=}0$ & $k{=}1$ & $k{=}2$ & $k{=}3$ & $k{=}4$ \\
\midrule
Linear & 0.325 & 0.722 & 0.916 & 1.108 & 1.308 \\
Tree   & \textbf{0.317} & 0.701 & 0.877 & 1.073 & 1.271 \\
MLP    & 0.369 & 0.782 & 0.960 & 1.177 & 1.372 \\
\bottomrule
\end{tabular}
\end{table}

Tables~\ref{tab:sensitive_lambda0_appendix},
\ref{tab:sensitive_lambda012_appendix},
and~\ref{tab:sensitive_lambda5_appendix} show how the deployment-optimal action
changes with the escalation cost. At $\lambda=0$, the tree expert with the
strict-security advice tier is best. At $\lambda=0.12$, the optimum has already
moved to the medium-security tier for that same expert. At $\lambda=5$, the
best action is the no-advice tree expert, and all queried actions are much more
costly. The relevant choice is therefore the expert-advice pair with smallest
true deferral-advice loss for the current value of $\lambda$.

\subsection{CLIP Prompt-Escalation Experimental Details}%
\label{app:clip_exp_details}

The setting again matches the deferral-advice protocol, but advice now
corresponds to increasingly rich prompt families for frozen
CLIP~\citep{radford2021learning} experts. The question is whether a joint expert-advice policy improves the true
deployment objective when richer prompt sets may improve zero-shot accuracy but
also incur an explicit consultation cost.

\paragraph{Data, action space, and metric.}
We use ImageNet-1k \citep{russakovsky2015imagenet}. Each example is an image paired with one
of $1000$ class labels. A useful way to read this experiment is as a
cost-sensitive zero-shot classification problem: given an image of, say, a
goldfish, the system may either classify it with a single canonical prompt such
as ``a photo of a goldfish'' or pay for a richer prompt family before sending
the image to a CLIP expert. The retained run uses $39{,}222$ training examples
and $14{,}288$ validation examples. With three experts and four advice actions,
the composite action space has size $3 \times 4$. As in FEVER and
sensitive-escalation, all policy comparisons follow the global protocol of
Section~\ref{app:global_experimental_protocol}; in particular, the reported
numbers are validation averages of the true deferral-advice loss.

\paragraph{Experts and advice construction.}
The expert pool contains three frozen CLIP models: CLIP-B/32, CLIP-B/16, and
CLIP-L/14, with consultation costs $\beta=(0,0.04,0.08)$. Advice does not
reveal new image features. Instead, it enlarges the prompt family used to build
the text classifier for a given expert. The four advice actions are:
\begin{itemize}[nosep,topsep=2pt,leftmargin=*]
    \item $k=0$: no advice, using a single canonical prompt.
    \item $k=1$: a clean prompt ensemble with $5$ closely related templates.
    \item $k=2$: a medium-diversity family with $16$ templates.
    \item $k=3$: a large-diversity family with $32$ templates.
\end{itemize}
These advice levels are nested in the practical sense that larger prompt
families include and extend the simpler ones. Their base costs are
$\gamma^{\mathrm{base}}=(0,0.03,0.06,0.09)$, multiplied by
$\lambda \in \{0,0.1,0.15,0.3,0.4,0.6,0.7,0.8,1.0,1.5,10.0\}$. This produces a
transparent trade-off: richer prompt families can improve zero-shot accuracy,
but only at an additional advice cost.

\paragraph{Prompt examples.}
It is useful to make these advice actions concrete. For an image whose class is
\texttt{goldfish}, the no-advice action uses only the canonical template
``a photo of a goldfish.'' The clean-ensemble advice augments this with closely
related variants such as ``a clean photo of a goldfish,'' ``a centered photo of
a goldfish,'' and ``a high quality photo of a goldfish.'' The
medium-diversity advice expands further to prompts such as ``a close-up photo
of a goldfish,'' ``a bright photo of a goldfish,'' ``a blurry photo of a
goldfish,'' and ``a black and white photo of a goldfish.'' Finally, the
large-diversity advice adds still broader views, for instance ``a rendering of
a goldfish,'' ``a sketch of a goldfish,'' ``a nighttime photo of a
goldfish,'' and ``a front-view photo of a goldfish.'' In this experiment,
acquiring advice therefore means paying to evaluate the image against a richer
prompt family before handing it to the selected CLIP expert.

\paragraph{Policy training and baselines.}
Input features are frozen CLIP-B/32 image embeddings (dimension~$512$),
standardized with z-score normalization. All learned methods --- our augmented
surrogate and the L2D baseline --- share the same backbone: a two-layer MLP
with hidden dimensions $(512,256)$, ReLU activations, and no dropout. They are
trained with AdamW (weight decay $10^{-3}$, gradient
clipping at norm $10$) for $400$ epochs with batch size $256$ and a cosine
learning-rate scheduler with $5\%$ linear warmup. We use a learning rate of $10^{-2}$ for L2D,
and $10^{-3}$ for our approach. The augmented comp-sum
surrogate uses $\tau=1$; the L2D baseline uses the same architecture but is
restricted to the $k{=}0$ slice and trained with the standard cross-entropy
deferral surrogate. For every image, every expert, and every prompt family, we
precompute the expert prediction and the realized cost, so all methods are
trained and evaluated on exactly the same expert-advice outcome tensor. The
best-fixed, partial-randomization, and joint-random baselines are exactly those
defined in Section~\ref{app:global_experimental_protocol}, instantiated on this
CLIP expert pool and prompt-family advice set.

\paragraph{Reading the appendix CLIP tables.}
The tables below mirror those of the previous subsections.
Table~\ref{tab:clip_best_advice_all_lambdas} first shows how each CLIP expert's
preferred advice action changes with the advice cost.
Table~\ref{tab:clip_main_appendix} then evaluates full policies on the true
deployment metric, and the detailed tables at $\lambda=0$, $\lambda=0.15$, and
$\lambda=10$ make the cost-dependent expert-advice optimum explicit.

\begin{table}[ht]
\centering
\caption{Best advice action for each expert across the retained CLIP
cost grid. Each cell reports the best advice index $k$ and the corresponding
true loss. Here $k=0$ denotes the canonical no-advice prompt, $k=1$ the clean
ensemble, $k=2$ the medium-diversity prompt family, and $k=3$ the
large-diversity prompt family.}
\label{tab:clip_best_advice_all_lambdas}
\scriptsize
\setlength{\tabcolsep}{3pt}
\resizebox{\linewidth}{!}{%
\begin{tabular}{lccccccccccc}
\toprule
Expert & $\lambda=0$ & $\lambda=0.1$ & $\lambda=0.15$ & $\lambda=0.3$ & $\lambda=0.4$ & $\lambda=0.6$ & $\lambda=0.7$ & $\lambda=0.8$ & $\lambda=1$ & $\lambda=1.5$ & $\lambda=10$ \\
\midrule
CLIP-B/32 & $k{=}3$ / 0.391 & $k{=}2$ / 0.398 & $k{=}2$ / 0.401 & $k{=}0$ / 0.403 & $k{=}0$ / 0.403 & $k{=}0$ / 0.403 & $k{=}0$ / 0.403 & $k{=}0$ / 0.403 & $k{=}0$ / 0.403 & $k{=}0$ / 0.403 & $k{=}0$ / 0.403 \\
CLIP-B/16 & $k{=}3$ / 0.378 & $k{=}2$ / 0.386 & $k{=}2$ / 0.389 & $k{=}0$ / 0.394 & $k{=}0$ / 0.394 & $k{=}0$ / 0.394 & $k{=}0$ / 0.394 & $k{=}0$ / 0.394 & $k{=}0$ / 0.394 & $k{=}0$ / 0.394 & $k{=}0$ / 0.394 \\
CLIP-L/14 & \textbf{$k{=}3$ / 0.351} & \textbf{$k{=}1$ / 0.359} & \textbf{$k{=}1$ / 0.361} & \textbf{$k{=}1$ / 0.365} & \textbf{$k{=}0$ / 0.366} & \textbf{$k{=}0$ / 0.366} & \textbf{$k{=}0$ / 0.366} & \textbf{$k{=}0$ / 0.366} & \textbf{$k{=}0$ / 0.366} & \textbf{$k{=}0$ / 0.366} & \textbf{$k{=}0$ / 0.366} \\
\bottomrule
\end{tabular}%
}
\end{table}

Table~\ref{tab:clip_best_advice_all_lambdas} shows the same structural pattern
as the FEVER and tabular experiments. The best advice action depends on both
the expert and the cost regime. When advice is cheap, all three experts prefer
queried prompt families, although not the same ones: the two smaller CLIP
models move through the medium-diversity family, whereas CLIP-L/14 already
switches to the cheaper clean ensemble at $\lambda=0.1$ and $\lambda=0.15$.
Once the advice cost increases further, all three experts revert to the
no-advice action, but not at the same point. This is again the regime in which
a joint expert-advice policy is most natural.

\begin{table}[ht]
\centering
\caption{Validation true deferral-advice loss in the CLIP prompt-escalation
experiment. Entries are mean $\pm$ standard deviation over $4$ random seeds. Lower
is better.}
\label{tab:clip_main_appendix}
\scriptsize
\setlength{\tabcolsep}{3pt}
\resizebox{\linewidth}{!}{%
\begin{tabular}{lcccccc}
\toprule
$\lambda$ & Our & L2D & Best fixed pair & Learned expert, random advice & Random expert, learned advice & Random $(j,k)$ \\
\midrule
0    & \textbf{0.337 $\pm$ 0.000} & 0.352 $\pm$ 0.001 & 0.351 & 0.343 $\pm$ 0.001 & 0.374 $\pm$ 0.002 & 0.379 $\pm$ 0.003 \\
0.1  & \textbf{0.345 $\pm$ 0.000} & 0.352 $\pm$ 0.001 & 0.359 & 0.348 $\pm$ 0.001 & 0.381 $\pm$ 0.002 & 0.383 $\pm$ 0.003 \\
0.15 & \textbf{0.348 $\pm$ 0.001} & 0.352 $\pm$ 0.001 & 0.361 & 0.350 $\pm$ 0.001 & 0.384 $\pm$ 0.002 & 0.386 $\pm$ 0.003 \\
0.3  & \textbf{0.350 $\pm$ 0.002} & 0.352 $\pm$ 0.001 & 0.365 & 0.357 $\pm$ 0.001 & 0.387 $\pm$ 0.002 & 0.392 $\pm$ 0.003 \\
0.4  & \textbf{0.350 $\pm$ 0.002} & 0.352 $\pm$ 0.001 & 0.366 & 0.361 $\pm$ 0.001 & 0.388 $\pm$ 0.003 & 0.397 $\pm$ 0.003 \\
0.6  & \textbf{0.350 $\pm$ 0.002} & 0.352 $\pm$ 0.001 & 0.366 & 0.370 $\pm$ 0.001 & 0.388 $\pm$ 0.002 & 0.406 $\pm$ 0.003 \\
0.7  & \textbf{0.350 $\pm$ 0.002} & 0.352 $\pm$ 0.001 & 0.366 & 0.374 $\pm$ 0.001 & 0.388 $\pm$ 0.002 & 0.410 $\pm$ 0.003 \\
0.8  & \textbf{0.351 $\pm$ 0.002} & 0.352 $\pm$ 0.001 & 0.366 & 0.379 $\pm$ 0.001 & 0.388 $\pm$ 0.002 & 0.415 $\pm$ 0.003 \\
1    & \textbf{0.351 $\pm$ 0.001} & 0.352 $\pm$ 0.001 & 0.366 & 0.388 $\pm$ 0.001 & 0.388 $\pm$ 0.002 & 0.424 $\pm$ 0.003 \\
1.5  & \textbf{0.351 $\pm$ 0.001} & 0.352 $\pm$ 0.001 & 0.366 & 0.410 $\pm$ 0.001 & 0.388 $\pm$ 0.002 & 0.446 $\pm$ 0.003 \\
10   & 0.352 $\pm$ 0.001 & 0.352 $\pm$ 0.001 & 0.366 & 0.795 $\pm$ 0.003 & 0.388 $\pm$ 0.002 & 0.829 $\pm$ 0.004 \\
\bottomrule
\end{tabular}
}
\end{table}

Table~\ref{tab:clip_main_appendix} shows a clear cost-dependent transition. Our
method attains the lowest mean true loss throughout the low- and
moderate-cost regime, improving on both L2D and the best fixed pair. As
$\lambda$ increases, the gap narrows, which is expected because the optimal
behavior approaches the no-advice regime. At $\lambda=10$, our method ties L2D, consistent with the fact that querying
has become too expensive to justify and the learned policy ceases to acquire
advice. The partial-randomization baselines
support the same conclusion as in the other experiments: the gain comes from
coordinating expert choice and advice choice rather than from either component
in isolation.

\begin{figure}[ht]
\centering
\includegraphics[width=\linewidth]{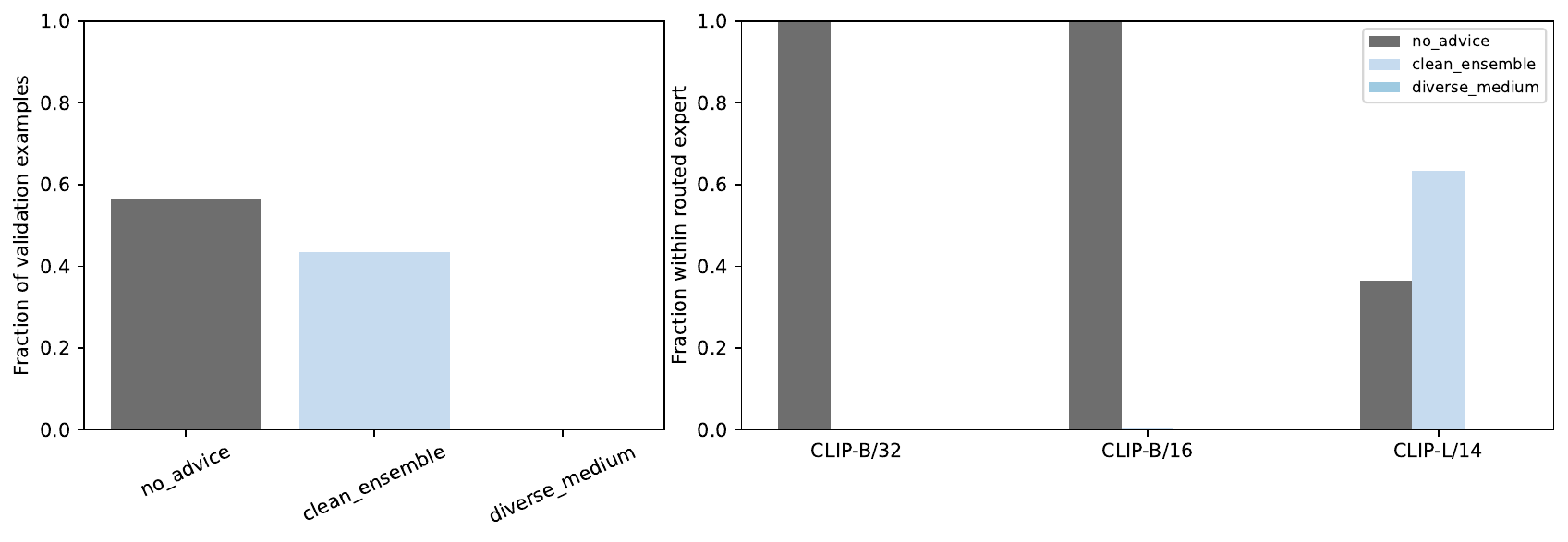}
\caption{Deployed advice distribution of our method at
$\lambda=0.3$ in the CLIP prompt-escalation experiment. The left panel shows
the overall fraction of validation images assigned to each advice action. The
right panel conditions on the routed expert. At this cost level, the policy
still queries frequently, but it already favors cheaper prompt families than in
the free-advice regime.}
\label{fig:clip_advice_distribution_lambda030}
\end{figure}

\begin{figure}[ht]
\centering
\includegraphics[width=\linewidth]{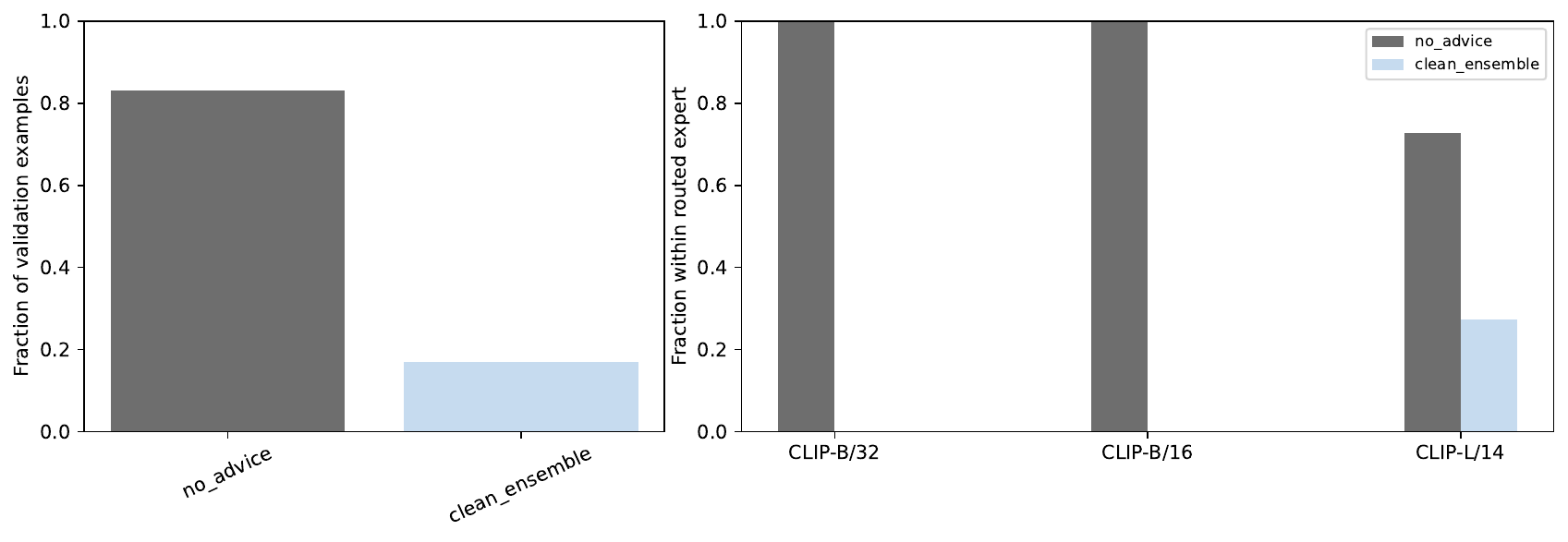}
\caption{Deployed advice distribution of our method at
$\lambda=0.4$ in the CLIP prompt-escalation experiment. Compared with
Figure~\ref{fig:clip_advice_distribution_lambda030}, the policy already shifts
further toward no advice and the cheapest prompt families. The expert-conditional
panel shows that this transition remains expert-dependent rather than uniform
across routed experts.}
\label{fig:clip_advice_distribution_lambda040}
\end{figure}

Figures~\ref{fig:clip_advice_distribution_lambda030}
and~\ref{fig:clip_advice_distribution_lambda040} show how the queried mass is
distributed across prompt families. As the advice cost increases, the policy
moves toward cheaper prompt families and then toward no advice, and it does so
differently for different routed experts.

\begin{table}[H]
\centering
\caption{Expert-by-advice true deferral-advice loss at $\lambda=0$. Lower is
better.}
\label{tab:clip_lambda0_appendix}
\scriptsize
\setlength{\tabcolsep}{3.5pt}
\begin{tabular}{lcccc}
\toprule
 & $k{=}0$ & $k{=}1$ & $k{=}2$ & $k{=}3$ \\
\midrule
CLIP-B/32 & 0.403 & 0.397 & 0.392 & \textbf{0.391} \\
CLIP-B/16 & 0.394 & 0.385 & 0.380 & \textbf{0.378} \\
CLIP-L/14 & 0.366 & 0.356 & 0.354 & \textbf{0.351} \\
\bottomrule
\end{tabular}
\end{table}

\begin{table}[H]
\centering
\caption{Expert-by-advice true deferral-advice loss at $\lambda=0.15$. Lower is
better.}
\label{tab:clip_lambda015_appendix}
\scriptsize
\setlength{\tabcolsep}{3.5pt}
\begin{tabular}{lcccc}
\toprule
 & $k{=}0$ & $k{=}1$ & $k{=}2$ & $k{=}3$ \\
\midrule
CLIP-B/32 & 0.403 & 0.402 & \textbf{0.401} & 0.405 \\
CLIP-B/16 & 0.394 & 0.390 & \textbf{0.389} & 0.391 \\
CLIP-L/14 & 0.366 & \textbf{0.361} & 0.363 & 0.364 \\
\bottomrule
\end{tabular}
\end{table}

\begin{table}[H]
\centering
\caption{Expert-by-advice true deferral-advice loss at $\lambda=10$. Lower is
better.}
\label{tab:clip_lambda10_appendix}
\scriptsize
\setlength{\tabcolsep}{3.5pt}
\begin{tabular}{lcccc}
\toprule
 & $k{=}0$ & $k{=}1$ & $k{=}2$ & $k{=}3$ \\
\midrule
CLIP-B/32 & \textbf{0.403} & 0.697 & 0.992 & 1.291 \\
CLIP-B/16 & \textbf{0.394} & 0.685 & 0.980 & 1.278 \\
CLIP-L/14 & \textbf{0.366} & 0.656 & 0.954 & 1.251 \\
\bottomrule
\end{tabular}
\end{table}

Tables~\ref{tab:clip_lambda0_appendix}, \ref{tab:clip_lambda015_appendix},
and~\ref{tab:clip_lambda10_appendix} show how the deployment-optimal prompt
family changes with the advice cost. At $\lambda=0$, all three experts prefer
the richest prompt family. At $\lambda=0.15$, the optimum has already shifted
to smaller prompt sets. By $\lambda=10$, the no-advice action is uniformly
optimal in true loss. The relevant decision is therefore the prompt family with
smallest deferral-advice loss, not the richest family available.

\newpage
\section*{NeurIPS Paper Checklist}

\begin{enumerate}

\item {\bf Claims}
    \item[] Question: Do the main claims made in the abstract and introduction accurately reflect the paper's contributions and scope?
    \item[] Answer: \answerYes{}
    \item[] Justification: The abstract and Section~\ref{sec:intro} state four contributions (formulation and Bayes-optimal policy, Bayes inconsistency of separated surrogates, augmented surrogate with $\mathcal{H}$-consistency and excess-risk transfer, experiments). Each is supported by a formal result (Lemma~\ref{lem:bayes}, Theorem~\ref{prop:separated_bayes_inconsistency}, Theorem~\ref{thm:augmented}, Corollary~\ref{cor:augmented_statistical}) and by experiments in Section~\ref{sec:experiments} and Appendices~\ref{app:exp_details}--\ref{app:synthetic_exp}.

\item {\bf Limitations}
    \item[] Question: Does the paper discuss the limitations of the work performed by the authors?
    \item[] Answer: \answerYes{}
    \item[] Justification: A dedicated Limitations section (Section~\ref{sec:limitations} in the main text) discusses the full-information training assumption (precomputed cost tensors for every expert--advice pair) and the resulting limited-information extension left for future work.
\item {\bf Theory assumptions and proofs}
    \item[] Question: For each theoretical result, does the paper provide the full set of assumptions and a complete (and correct) proof?
    \item[] Answer: \answerYes{}
    \item[] Justification: All lemmas, theorems, and corollaries are numbered and cross-referenced. Full proofs with explicit regularity and hypothesis-set conditions are given in Appendix~\ref{app:proofs}. The main text states the assumptions for the central results and gives proof intuition for the main negative and positive surrogate results.

\item {\bf Experimental result reproducibility}
    \item[] Question: Does the paper fully disclose all the information needed to reproduce the main experimental results of the paper to the extent that it affects the main claims and/or conclusions of the paper (regardless of whether the code and data are provided or not)?
    \item[] Answer: \answerYes{}
    \item[] Justification: Section~\ref{sec:experiments} describes the task, expert pool, advice actions, cost structure, and baselines. Appendix~\ref{app:global_experimental_protocol} describes the shared evaluation protocol and policy parameterization; Appendices~\ref{app:exp_details}, \ref{app:sensitive_exp_details}, \ref{app:clip_exp_details}, and \ref{app:synthetic_exp} give per-benchmark details.

\item {\bf Open access to data and code}
    \item[] Question: Does the paper provide open access to the data and code, with sufficient instructions to faithfully reproduce the main experimental results, as described in supplemental material?
    \item[] Answer: \answerYes{}
    \item[] Justification: Code to reproduce all experiments is provided in the supplementary material, together with configuration files and scripts for each benchmark. All datasets used (FEVER, IEEE-CIS fraud, ImageNet-1k) are publicly available, and the synthetic benchmark is fully specified by the cost tables in Appendix~\ref{app:synthetic_exp}.

\item {\bf Experimental setting/details}
    \item[] Question: Does the paper specify all the training and test details (e.g., data splits, hyperparameters, how they were chosen, type of optimizer) necessary to understand the results?
    \item[] Answer: \answerYes{}
    \item[] Justification: Section~\ref{sec:experiments} summarizes the core experimental setup. Appendix~\ref{app:global_experimental_protocol} gives the shared protocol, and Appendices~\ref{app:exp_details}, \ref{app:sensitive_exp_details}, \ref{app:clip_exp_details}, and \ref{app:synthetic_exp} give the per-benchmark architectures, optimizers, learning rates, seeds, and data splits.

\item {\bf Experiment statistical significance}
    \item[] Question: Does the paper report error bars suitably and correctly defined or other appropriate information about the statistical significance of the experiments?
    \item[] Answer: \answerYes{}
    \item[] Justification: Table~\ref{tab:fever_main} and the appendix tables report mean~$\pm$~standard deviation over $4$ random seeds for all learned methods (the synthetic experiment uses $5$ seeds). The table captions state the number of seeds and that reported intervals are standard deviations.

\item {\bf Experiments compute resources}
    \item[] Question: For each experiment, does the paper provide sufficient information on the computer resources (type of compute workers, memory, time of execution) needed to reproduce the experiments?
    \item[] Answer: \answerYes{}
    \item[] Justification: Appendix~\ref{app:global_experimental_protocol} reports the compute hardware used for the experiments: an NVIDIA A100 GPU with 40 GB of VRAM.

\item {\bf Code of ethics}
    \item[] Question: Does the research conducted in the paper conform, in every respect, with the NeurIPS Code of Ethics \url{https://neurips.cc/public/EthicsGuidelines}?
    \item[] Answer: \answerYes{}
    \item[] Justification: The research uses publicly available datasets and open-source pretrained models. No human subjects are involved. The work conforms with the NeurIPS Code of Ethics.

\item {\bf Broader impacts}
    \item[] Question: Does the paper discuss both potential positive societal impacts and negative societal impacts of the work performed?
    \item[] Answer: \answerYes{}
    \item[] Justification: A dedicated Impact Statement (Section~\ref{sec:impact}) discusses potential benefits (cost-aware expert allocation, information-aware routing, more transparent automation/human-reviewer trade-offs) and risks (inheriting expert and cost-structure biases; privacy/fairness implications of acquiring sensitive advice), and recommends application-specific audits before deployment.

\item {\bf Safeguards}
    \item[] Question: Does the paper describe safeguards that have been put in place for responsible release of data or models that have a high risk for misuse (e.g., pre-trained language models, image generators, or scraped datasets)?
    \item[] Answer: \answerNA{}
    \item[] Justification: The paper does not release pretrained models, datasets, or other assets that pose a risk for misuse.

\item {\bf Licenses for existing assets}
    \item[] Question: Are the creators or original owners of assets (e.g., code, data, models), used in the paper, properly credited and are the license and terms of use explicitly mentioned and properly respected?
    \item[] Answer: \answerYes{}
    \item[] Justification: All datasets and models used are cited (FEVER, IEEE-CIS fraud detection, ImageNet-1k, Qwen3, DeBERTa-v3, CLIP), and no external dataset or model is redistributed by the paper.

\item {\bf New assets}
    \item[] Question: Are new assets introduced in the paper well documented and is the documentation provided alongside the assets?
    \item[] Answer: \answerYes{}
    \item[] Justification: The supplementary material provides the code, configuration files, and scripts for the experiments. The paper does not introduce a new dataset or pretrained model; the synthetic benchmark is specified in Appendix~\ref{app:synthetic_exp}.

\item {\bf Crowdsourcing and research with human subjects}
    \item[] Question: For crowdsourcing experiments and research with human subjects, does the paper include the full text of instructions given to participants and screenshots, if applicable, as well as details about compensation (if any)?
    \item[] Answer: \answerNA{}
    \item[] Justification: The paper does not involve crowdsourcing or research with human subjects.

\item {\bf Institutional review board (IRB) approvals or equivalent for research with human subjects}
    \item[] Question: Does the paper describe potential risks incurred by study participants, whether such risks were disclosed to the subjects, and whether Institutional Review Board (IRB) approvals (or an equivalent approval/review based on the requirements of your country or institution) were obtained?
    \item[] Answer: \answerNA{}
    \item[] Justification: The paper does not involve human subjects research.

\item {\bf Declaration of LLM usage}
    \item[] Question: Does the paper describe the usage of LLMs if it is an important, original, or non-standard component of the core methods in this research? Note that if the LLM is used only for writing, editing, or formatting purposes and does \emph{not} impact the core methodology, scientific rigor, or originality of the research, declaration is not required.
    \item[] Answer: \answerNA{}
    \item[] Justification: Any LLM assistance in preparing the manuscript was limited to language polishing and did not affect the methodology, theoretical results, experiments, or scientific analysis.

\end{enumerate}

\end{document}